\def\eqref#1{equation~\ref{#1}}
\def\1{\bm{1}}
\DeclareMathAlphabet{\mathsfit}{\encodingdefault}{\sfdefault}{m}{sl}
\SetMathAlphabet{\mathsfit}{bold}{\encodingdefault}{\sfdefault}{bx}{n}
\def\gG{{\mathcal{G}}}
\def\sN{{\mathbb{N}}}
\def\sR{{\mathbb{R}}}
\def\sY{{\mathbb{Y}}}
\newcommand{\R}{\mathbb{R}}
\theoremstyle{plain}
\newtheorem{theorem}{Theorem}[section]
\newtheorem{proposition}[theorem]{Proposition}
\newtheorem{lemma}[theorem]{Lemma}
\theoremstyle{definition}
\newtheorem{definition}[theorem]{Definition}
\theoremstyle{remark}
\newcommand{\msl}{\{\!\!\{}
\newcommand{\msr}{\}\!\!\}}
\newcommand{\countablecolor}{\cellcolor{yellow}}
\icmltitlerunning{Graph As Point Set}
\begin{document}

\twocolumn[
\icmltitle{Graph as Point Set}



\icmlsetsymbol{equal}{*}
\begin{icmlauthorlist}
\icmlauthor{Xiyuan Wang}{iai}
\icmlauthor{Pan Li}{git}
\icmlauthor{Muhan Zhang}{iai}
\end{icmlauthorlist}

\icmlaffiliation{iai}{Institute for Artificial Intelligence, Peking University}
\icmlaffiliation{git}{Georgia Institute of Technology}

\icmlcorrespondingauthor{Xiyuan Wang}{wangxiyuan@pku.edu.edu}
\icmlcorrespondingauthor{Muhan Zhang}{muhan@pku.edu.cn}
\icmlkeywords{Graph Neural Network}

\vskip 0.3in
]




\begin{abstract}
Graph is a fundamental data structure to model interconnections between entities. Set, on the contrary, stores independent elements. To learn graph representations, current Graph Neural Networks (GNNs) primarily use message passing to encode the interconnections. In contrast, this paper introduces a novel graph-to-set conversion method that bijectively transforms interconnected nodes into a set of independent points and then uses a set encoder to learn the graph representation. This conversion method holds dual significance. Firstly, it enables using set encoders to learn from graphs, thereby significantly expanding the design space of GNNs. Secondly, for Transformer, a specific set encoder, we provide a novel and principled approach to inject graph information losslessly, different from all the heuristic structural/positional encoding methods adopted in previous graph transformers. To demonstrate the effectiveness of our approach, we introduce Point Set Transformer (PST), a transformer architecture that accepts a point set converted from a graph as input. Theoretically, PST exhibits superior expressivity for both short-range substructure counting and long-range shortest path distance tasks compared to existing GNNs. Extensive experiments further validate PST's outstanding real-world performance. Besides Transformer, we also devise a Deepset-based set encoder, which achieves performance comparable to representative GNNs, affirming the versatility of our graph-to-set method.
\end{abstract}
\section{Introduction}
Graph, composed of interconnected nodes, has a wide range of applications and has been extensively studied. In graph machine learning, a central focus is to effectively leverage node connections. Various architectures have arisen for graph tasks, exhibiting significant divergence in their approaches to utilizing adjacency information.

Two primary paradigms have evolved for encoding adjacency information. The first paradigm involves message passing between nodes via edges. Notable methods in this category include Message Passing Neural Network (MPNN)~\citep{MPNN}, a foundational framework for GNNs such as GCN~\citep{GCN}, GIN~\citep{GIN}, and GraphSAGE~\citep{SAGE}. Subgraph-based GNNs~\citep{NGNN,I2GNN,ESAN,OSAN,SUN,GNNAK,SSWL} select subgraphs from the whole graph and run MPNN within each subgraph. These models aggregate messages from neighbors to update the central nodes' representations. Additionally, Graph Transformers (GTs) integrate adjacency information into the attention matrix~\citep{GT-graphit, GT-SAN, GT-graphtrans, firstGT, GT-graphormer, GT-Exphormer} (note that some early GTs have options to not use adjacency matrix by using only positional encodings, but the performance is significantly worse~\citep{firstGT}). Some recent GTs even directly incorporate message-passing layers into their architectures~\citep{GT-GPS,GT-DeepSet}. In summary, this paradigm relies on adjacency relationships to facilitate information exchange among nodes.

The second paradigm designs permutation-equivariant neural networks that directly take adjacency matrices as input. This category includes high-order Weisfeiler-Leman tests~\citep{PPGN}, invariant graph networks~\citep{IGN}, and relational pooling~\citep{CountSubg}. Additionally, various studies have explored manual feature extraction from the adjacency matrix, including random walk structural encoding~\citep{RWSE,DE}, Laplacian matrix eigenvectors~\citep{EquiStableEnc,SignBasis-inv,huang2023stability}, and shortest path distances~\citep{DE}. However, these approaches typically serve as data augmentation steps for other models, rather than constituting an independent paradigm.

Both paradigms heavily rely on adjacency information in graph encoding. In contrast, this paper explores whether we can give up adjacency matrix in graph models while achieving competitive performance. As shown in Figure~\ref{fig::G2s}, our innovative graph-to-set method converts interconnected nodes into independent points, subsequently encoded by a set encoder like Transformer. Leveraging our \textit{symmetric rank decomposition}, we break down the augmented adjacency matrix $A+D$ into $QQ^T$, wherein $Q$ is constituted by column-full-rank rows, each denoting a node \textit{coordinate}. This representation enables us to express the presence of edges as inner products of coordinate vectors ($Q_i$ and $Q_j$). Consequently, interlinked nodes can be transformed into independent points and supplementary coordinates without information loss. Theoretically, two graphs are isomorphic iff the two converted point sets are equal up to an \textit{orthogonal transformation} (because for any $QQ^T=A+D$, $QR$ is also a solution where $R$ is any orthogonal matrix). This equivalence empowers us to encode the set with coordinates in an orthogonal-transformation-equivariant manner, akin to E(3)-equivariant models designed for 3D geometric deep learning. Importantly, our approach is versatile, allowing for using any equivariant set encoder, thereby significantly expanding the design space of GNNs. Furthermore, for Transformer, a specific set encoder, our method offers a novel and principled way to inject graph information losslessly. In Appendix~\ref{app::SE}, we additionally show that it unifies various heuristic structural/positional encodings in previous GTs, including random walk~\citep{DE,benchmarkingGNN,GT-GPS}, heat kernel~\citep{GT-graphit}, and resistance distance~\citep{biconnectivity}.

\begin{figure}
\centering
\includegraphics[width=0.45\textwidth]{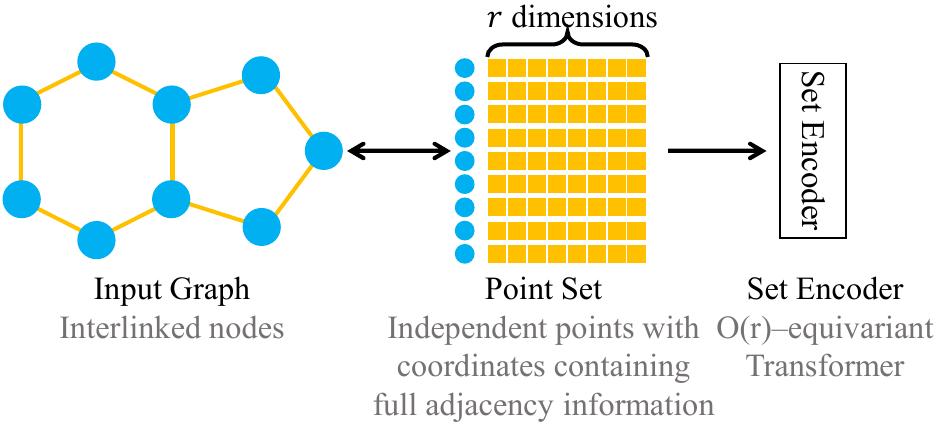}
\vskip -10pt
\caption{Our method converts the input graph to a point set first and encoding it with a set encoder. $O(r)$ denotes the set of $r$-dimension orthogonal transformations.}\label{fig::G2s}
\vskip -20pt
\end{figure}
To instantiate our method, we introduce an orthogonal-transformation-equivariant Transformer, namely Point Set Transformer (PST), to encode the point set. PST provably surpasses existing models in long-range and short-range expressivity. Extensive experiments verify these claims across synthetic datasets, graph property prediction datasets, and long-range graph benchmarks. Specifically, PST outperforms all baselines on QM9~\citep{QM9} dataset. Moreover, our graph-to-set method is not constrained to only one specific set encoder. We also propose a Deepset~\citep{DeepSet}-based model, which outperforms comparable to GIN~\citep{HowPowerfulAreGNNs} on our datasets.

\textbf{Differences from eigendecomposition.}~Note that our graph-to-set method is distinct from previous approaches that decompose adjacency matrices for positional encodings~\citep{benchmarkingGNN,EquiStableEnc,SignBasis-inv,Specformer}. The key differences root in that previous methods primarily relied on eigendecomposition (EVD), whereas our method is based on symmetric rank decomposition (SRD). Their differences are as follows:
\begin{itemize}[itemsep=2pt,topsep=-2pt,parsep=0pt,leftmargin=10pt]
\item SRD enables a practical conversion of graph problems into set problems. SRD of a matrix is unique up to a single orthogonal transformation, while EVD is unique up to a combination of orthogonal transformations within each eigenspace. This difference allows SRD-based models to easily maintain symmetry, ensuring consistent predictions for isomorphic graphs, while EVD-based methods~\citep{SignBasis-inv} struggle because they need to deal with each eigenspace individually, making them less suitable for graph-level tasks where eigenspaces vary between graphs.
\item Due to the advantage of SRD, we can utilize set encoder with  coordinates to capture graph structure, thus expanding the design space of GNN. Moreover, our method provides a principled way to add graph information to Transformers. Note that previous GTs usually require multiple heuristic encodings together. Besides node positional encodings, they also use adjacency matrices: Grit~\citep{Grit} and graphit~\citep{GT-graphit} use random walk matrix (normalized adjacency) as relative positional encoding (RPE). Graph Transformer~\citep{firstGT}, Graphormer~\citep{GT-graphormer}, and SAN~\citep{GT-SAN} use adjacency matrix as RPE. \citet{firstGT}'s ablation shows that adjacency is crucial. GPS~\citep{GT-GPS}, Exphormer~\citep{GT-Exphormer}, higher-order Transformer~\citep{GT-DeepSet}, and GraphVit/MLP-Mixer~\citep{GraphVit} even directly incorporate message passing blocks which use adjacency matrix to guide message passing between nodes.
\end{itemize}
In summary, this paper introduces a novel approach to graph representation learning by converting interconnected graphs into independent points and subsequently encoding them using an orthogonal-transformation-equivariant set encoder like our Point Set Transformer. This innovative approach outperforms existing methods in both long- and short-range tasks, as validated by comprehensive experiments.

\vspace{-2mm}
\section{Preliminary}\label{sec:preliminary}
For a matrix $Z\!\in\!\sR^{a\times b}$, we define $Z_i\!\in\!\sR^b$ as the $i$-th row (as a column vector), and $Z_{ij}\!\in\!\sR$ as its $(i,j)$ element. For a vector $\Lambda\!\in\!\sR^a$, $\text{diag}(\Lambda)\!\in\!\sR^{a\times a}$ is the diagonal matrix with $\Lambda$ as its diagonal elements. And for $S\!\in\!\sR^{a\times a}$, $\text{diagonal}(S)\!\in\!\sR^{a}$ represents the vector of its diagonal elements.

Let $\mathcal{G} = (V, E, X)$ denote an \textit{undirected graph}. Here, $V = \{1, 2, 3, ..., n\}$ is the set of $n$ nodes, $E \subseteq V \times V$ is the set of edges, and $X \in \mathbb{R}^{n \times d}$ is the node feature matrix, whose $v$-th row $X_v$ is of node $v$. The edge set $E$ can also be represented using the adjacency matrix $A \in \mathbb{R}^{n \times n}$, where $A_{uv}$ is $1$ if the edge exists (i.e., $(u, v)\in E$) and $0$ otherwise. A graph $\mathcal{G}$ can also be represented by the pair $(V, A, X)$ or $(A, X)$. The degree matrix $D$ is a diagonal matrix with node degree (sum of a row of matrix $A$) as the diagonal elements. 

Given a permutation function $\pi\!:\!\{1, 2, 3, ..., n\}\!\to\!\{1, 2, 3, ..., n\}$, the permuted graph is $\pi(\mathcal{G}) = (\pi(A), \pi(X))$, where $\pi(A) \in \mathbb{R}^{n \times n}, \pi(A)_{\pi(u) \pi(v)} = A_{uv}$, and $\pi(X) \in \mathbb{R}^{n \times d}, \pi(X)_{\pi(v)} = X_v$ for all $u,v\in V$. Essentially, the permutation $\pi$ reindex each node $v$ to $\pi(v)$ while preserving the original graph structure and node features. Two graphs are isomorphic iff they can be mapped to each other through a permutation.

\begin{definition}
Graphs $\mathcal{G}_1 = (A_1, X_1)$ and $\mathcal{G}_2 = (A_2, X_2)$ are isomorphic, denoted as $\mathcal{G}_1 \simeq \mathcal{G}_2$, if there exists a permutation $\pi$ such that $\pi(A_1) = A_2$ and $\pi(X_1) = X_2$.
\end{definition} 
Isomorphic graphs can be transformed into each other by merely reindexing their nodes. In graph tasks, models should assign the same prediction to isomorphic graphs.

\textbf{Symmetric Rank Decomposition (SRD).}~Decomposing an matrix into two full-rank matrices is well-known~\citep{puntanen2011matrix}. We further show that a positive semi-definite matrix can be decomposed into a full-rank matrix.
\begin{definition}
(Symmetric Rank Decomposition, SRD) Given a (symmetric) positive semi-definite matrix $L \in \mathbb{R}^{n \times n}$ of rank $r$, its SRD is $Q \in \mathbb{R}^{n \times r}$, where $L\!=\!QQ^T$.
\end{definition}
As $L = QQ^T$, $rank(Q)\!=\!rank(L)\!=\!r$, which implies that $Q$ must be full column rank. Moreover, two SRDs of the same matrix are equal up to an orthogonal transformation. Let $O(r)$ denote the set of orthogonal matrices in $\mathbb{R}^{r \times r}$.
\begin{proposition}\label{prop::SRD}
Matrices $Q_1$ and $Q_2$ in $\mathbb{R}^{n \times r}$ are SRD of the same matrix iff there exists $R \in O(r), Q_1 = Q_2 R$.
\end{proposition}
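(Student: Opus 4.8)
The plan is to prove the two implications separately. The ``if'' direction is a one-line computation, while the ``only if'' direction rests on the observation that $Q_1$ and $Q_2$ must share the same column space, after which a convenient change of basis makes the orthogonal matrix $R$ explicit.

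For the ``if'' direction, suppose $Q_1 = Q_2 R$ with $R \in O(r)$. Then $Q_1 Q_1^T = Q_2 R R^T Q_2^T = Q_2 Q_2^T$ since $R R^T = I_r$; moreover $R$ invertible gives $\text{rank}(Q_1) = \text{rank}(Q_2)$, so $Q_1$ and $Q_2$ are SRDs of the common matrix $L := Q_2 Q_2^T$. Nothing more is needed here.

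For the ``only if'' direction, assume $Q_1 Q_1^T = Q_2 Q_2^T =: L$ with $Q_1, Q_2 \in \mathbb{R}^{n\times r}$ of rank $r$. First I would record the elementary real-matrix fact $\ker(M M^T) = \ker(M^T)$ (because $M M^T x = 0$ forces $\|M^T x\|^2 = x^T M M^T x = 0$), which by rank--nullity gives $\text{col}(M M^T) = \text{col}(M)$. Applying this to $Q_1$ and to $Q_2$ yields $\text{col}(Q_1) = \text{col}(L) = \text{col}(Q_2) =: U$, an $r$-dimensional subspace of $\mathbb{R}^n$. Fix $W \in \mathbb{R}^{n\times r}$ whose columns are an orthonormal basis of $U$, so $W^T W = I_r$; since the columns of each $Q_i$ lie in $U$, we may write $Q_i = W B_i$ for a unique $B_i \in \mathbb{R}^{r\times r}$, and $\text{rank}(B_i) = \text{rank}(Q_i) = r$, so $B_i$ is invertible.

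The key step is to compare the two factorizations of $L$: from $W B_1 B_1^T W^T = Q_1 Q_1^T = Q_2 Q_2^T = W B_2 B_2^T W^T$, left-multiplying by $W^T$ and right-multiplying by $W$ and using $W^T W = I_r$ gives $B_1 B_1^T = B_2 B_2^T$. Setting $R := B_2^{-1} B_1$, we get $Q_2 R = W B_2 B_2^{-1} B_1 = W B_1 = Q_1$ and
\[
R R^T = B_2^{-1} B_1 B_1^T B_2^{-T} = B_2^{-1}(B_2 B_2^T) B_2^{-T} = I_r,
\]
so $R \in O(r)$. I expect the only point needing care is the column-space identity $\text{col}(M M^T) = \text{col}(M)$ together with the rank bookkeeping that forces $B_1, B_2$ to be invertible; once $W$ is chosen, the remainder is a short direct computation.
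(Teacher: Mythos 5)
Your proof is correct, and it takes a genuinely different route from the paper. For the ``only if'' direction, the paper argues purely algebraically: from $Q_1Q_1^T=Q_2Q_2^T$ it left-multiplies by $Q_1^T$ and then by $(Q_1^TQ_1)^{-1}$ (invertible because $Q_1$ has full column rank) to read off $Q_1^T = \big((Q_1^TQ_1)^{-1}Q_1^TQ_2\big)Q_2^T$, giving an explicit $R=Q_2^TQ_1(Q_1^TQ_1)^{-1}$ with $Q_1=Q_2R$, and then derives $RR^T=I$ by sandwiching $Q_1Q_1^T=Q_2RR^TQ_2^T=Q_2Q_2^T$ between $Q_2^T(\cdot)Q_2$ and cancelling $Q_2^TQ_2$. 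You instead first establish the geometric fact $\mathrm{col}(Q_1)=\mathrm{col}(L)=\mathrm{col}(Q_2)$ via $\ker(MM^T)=\ker(M^T)$, reduce to the square case by choosing an orthonormal basis $W$ of that common column space so that $Q_i=WB_i$ with $B_i\in GL_r$, deduce $B_1B_1^T=B_2B_2^T$, and set $R=B_2^{-1}B_1$. Both proofs ultimately hinge on the full-column-rank hypothesis (the paper through $(Q_i^TQ_i)^{-1}$, you through invertibility of $B_i$), but the paper's is a tighter computation while yours makes the underlying geometry explicit---the shared column space is what forces uniqueness up to a single $O(r)$ rotation---at the cost of one extra lemma and a change of basis.
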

SRD is closely related to eigendecomposition. Let $L = U \text{diag}(\Lambda) U^T$ denote the eigendecomposition of $L$, where $\Lambda\!\in\!\mathbb{R}^{r}$ is the vector of non-zero eigenvalues, and $U\!\in\!\mathbb{R}^{n \times r}$ is the matrix whose columns are the corresponding eigenvectors. $Q = U \text{diag}(\Lambda^{1/2})$ yields an SRD of $L$, where the superscript denotes element-wise square root operation.
\section{Graph as Point Set}\label{sec:g2p}
In this section, we present our innovative method for converting graphs into sets of points. We first show that Symmetric Rank Decomposition (SRD) can theoretically achieve this transformation: two graphs are isomorphic iff the sets of coordinates generated by SRD are equal up to orthogonal transformations. Additionally, we parameterize SRD for better real-world performance. Proof details are in Appendix~\ref{app::proof}.
\subsection{Symmetric Rank Decomposition for Coordinates}\label{sec::g2s}

A natural approach to breaking down the interconnections between nodes is to decompose the adjacency matrix. While previous methods often used eigendecomposition outputs as supplementary node features, these features are not unique. Consequently, models relying on them fail to provide consistent predictions for isomorphic graphs, ultimately leading to poor generalization. To address this, we show that Symmetric Rank Decomposition (SRD) can convert graph-level tasks into set-level tasks with perfect alignment. Since SRD only applies to positive semi-definite matrices, we use the augmented adjacency matrix $D + A$, which is always positive semi-definite (proof in Appendix~\ref{app::proof::d+a}).

\begin{theorem}\label{thm::g2s}
Given two graphs $\mathcal{G}=(V, A,X)$ and $\mathcal{G'}=(V',A',X')$ with respective degree matrices $D$ and $D'$, $\mathcal{G}\simeq \mathcal{G'}$ iff $\exists R\in O(r), \msl (X_{v}, RQ_{v})|\forall v \in V\msr = \msl (X'_{v}, Q_{v}')|v\in V'\msr$, where $Q$ and $Q'$ are the SRD of $D+A$ and $D'+A'$ respectively, and $r$ is the rank of $Q$.
\end{theorem}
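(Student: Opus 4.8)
The plan is to prove the two implications separately, in each case moving freely between a node permutation $\pi:V\to V'$ and its $n\times n$ permutation matrix $P$, so that $\pi(A)=PAP^T$, $\pi(X)=PX$, and (since node degrees are determined by $A$) the degree matrix of $PAP^T$ is $PDP^T$. The only substantive ingredient beyond this dictionary is Proposition~\ref{prop::SRD}: two SRDs of a common matrix differ by a single right orthogonal factor. Recall $D+A$ is positive semi-definite (Appendix~\ref{app::proof::d+a}), so its SRD exists; write $r=\mathrm{rank}(D+A)$, $r'=\mathrm{rank}(D'+A')$. Since the right-hand side of the claimed equivalence compares vectors $RQ_v\in\sR^r$ with $Q'_v\in\sR^{r'}$, it implicitly requires $r=r'$; and if $r\ne r'$ both sides are false (conjugate matrices have equal rank, so the graphs cannot be isomorphic), so I may assume $r=r'$ throughout.

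For the forward direction, suppose $\mathcal G\simeq\mathcal G'$ via $\pi$ with matrix $P$. Then $D'+A'=P(D+A)P^T=(PQ)(PQ)^T$, so $PQ$ is an SRD of $D'+A'$, and Proposition~\ref{prop::SRD} supplies $R\in O(r)$ with $PQ=Q'R$. Reading off the $\pi(v)$-th row of each side — the $\pi(v)$-th row of $PQ$ being the $v$-th row of $Q$ — and using $R^{-1}=R^T$, I get $Q'_{\pi(v)}=RQ_v$ for every $v\in V$. Since also $X'_{\pi(v)}=X_v$, the pair $(X'_{\pi(v)},Q'_{\pi(v)})$ equals $(X_v,RQ_v)$, and, $\pi$ being a bijection, collecting these over $v$ gives $\msl (X'_w,Q'_w)\,|\,w\in V'\msr=\msl (X_v,RQ_v)\,|\,v\in V\msr$.

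For the converse, suppose such an $R\in O(r)$ exists. Equality of these multisets over the ground sets $V$ and $V'$ forces $|V|=|V'|$ and yields a bijection $\pi:V\to V'$ with $X_v=X'_{\pi(v)}$ and $RQ_v=Q'_{\pi(v)}$ for all $v$. The first identity is exactly $\pi(X)=X'$. The second gives, for all $u,v\in V$,
\[
(D+A)_{uv}=Q_u^TQ_v=(RQ_u)^T(RQ_v)=(Q'_{\pi(u)})^TQ'_{\pi(v)}=(D'+A')_{\pi(u)\pi(v)},
\]
using $QQ^T=D+A$, $Q'Q'^T=D'+A'$, and $R^TR=I$. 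Now use that $D,D'$ are diagonal and the graphs have no self-loops: for $u\ne v$ this reads $A_{uv}=A'_{\pi(u)\pi(v)}$, while for $u=v$ both adjacency entries are $0$; hence $\pi(A)=A'$, and with $\pi(X)=X'$ this is $\mathcal G\simeq\mathcal G'$.

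\textbf{Main obstacle.} Almost all of this is bookkeeping; the one genuinely load-bearing point is that the whole graph can be read back off $D+A$, i.e. that $\pi(D+A)=D'+A'$ splits into $\pi(A)=A'$ and $\pi(D)=D'$. This is exactly where the particular choice of $D+A$ matters — adjacency off the diagonal, degrees on it, no self-loops — and is the reason for decomposing $D+A$ rather than another PSD matrix attached to the graph. The remaining care is keeping the ranks matched ($r=r'$) and the index juggling between $\pi$ and $P$ in the forward direction; beyond that the argument is a direct application of Proposition~\ref{prop::SRD} together with the definition of multiset equality.
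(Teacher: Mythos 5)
Your proof is correct and takes essentially the same route as the paper's: both directions turn on Proposition~\ref{prop::SRD} applied to the PSD matrix $D+A$, the forward direction noting that $\pi(Q)$ is another SRD of $D'+A'$, and the converse recovering $\pi(A+D)=A'+D'$ and then reading $A$ back off $A+D$. The only stylistic difference is at that last step — the paper extracts $A$ via $A=(D+A)-\tfrac12\,\text{diag}\bigl((D+A)\vec 1\bigr)$, whereas you split into off-diagonal entries (adjacency, since $D$ is diagonal) and diagonal entries (zero, since there are no self-loops), which is the same observation — and your explicit check that $r=r'$ is a small point the paper leaves implicit.
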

In this theorem, the graph $\mathcal{G}=(V, A,X)$ is converted to a set of points $\{(X_{v}, Q_{v})|v\in V\}$, where $X_v$ is the original node feature of $v$, and $Q_v$, the $v$-th row of SRD of $D+A$, is the $r$-dimensional coordinate of node $v$. Consequently, two graphs are isomorphic iff their point sets are equal up to an orthogonal transformation. Intuitively, we can imagine that the graph is mapped into an $r$-dimensional space, where each node has a coordinate, and the inner product between two coordinates represents edge existence. This mapping is not unique, since we can freely rotate the coordinates through an orthogonal transformation without changing inner products. This conversion can be loosely likened to the reverse process of constructing molecular graph from atoms' 3D coordinates, where Euclidean distances between atoms determine node connections in the graph. 

Leveraging Theorem~\ref{thm::g2s}, we can convert a graph into a set and employ a set encoder for encoding it. Our method consistently produces representations for isomorphic graphs when the encoder is orthogonal transformation-invariant. The method's expressivity hinges on the set encoder's ability to differentiate non-equal sets, with greater encoder power enhancing overall performance on graph tasks.
\subsection{Parameterized Coordinates}\label{sec::psrd}
In this section, we enhance SRD's practical performance through parameterization. As shown in Section~\ref{sec:preliminary}, SRD can be implemented via eigendecomposition: $Q=U\text{diag}(\Lambda^{1/2})$, where $\Lambda\in \sR^{r}$ denotes non-zero eigenvalues of the decomposed matrix, and $U\in \R^{n\times r}$ denotes corresponding eigenvectors. To parameterize SRD, we replace the element-wise square root with a function $f:\sR^r\to \sR^r$. This alteration further eliminates the constraint of non-negativity on eigenvalues and enables the use of various symmetric matrices containing adjacency information to generate coordinates. Additionally, for model flexibility, the coordinates can include multiple channels, with each channel corresponding to a distinct eigenvalue function.

\begin{definition}
(Parameterized SRD, PSRD)~With a $d$-channel eigenvalue function $\!f\!:\!\sR^r\!\to\!\sR^{r\times d}\!$ and an adjacency function $\mathcal{Z}\!:\!\sR^{n\times n}\!\to\!\sR^{n\times n}$ producing symmetric matrices, PSRD coordinate of a graph $\gG\!=\!(V, A, X)$ is $\mathcal{Q}(\!\mathcal{Z}(A), f\!)\!\in\!\sR^{n\times r\times d}$, whose $i$-th channel is $Udiag(f_i(\Lambda))\!\in\!\sR^{n\times r}$, where $\Lambda\!\in\!\sR^{r}$, $U\!\in\!\sR^{n\times r}$ are non-zero eigenvalues and corresponding eigenvectors of $\mathcal{Z}(\!A\!)$, and $f_i\!:\!\sR^r\!\to\!\sR^r$ is the $i$-th channel of $f$.
\end{definition}

In the definition, $\mathcal{Z}$ maps adjacency matrix to its variants like Laplacian matrix, and $f$ transforms eigenvalues. $\mathcal{Q}(\mathcal{Z}(A), f)_u\in \sR^{r\times d}$ is node $u$'s coordinate. Similar to SRD, PSRD can also convert the graph isomorphism problems to set equality problems.
\begin{theorem}\label{thm::learng2s}
Given a permutation-equivariant adjacency function {$\mathcal Z$}, for graphs $\gG\!=\!(V, A,X)$ and $\gG'\!=\!(V', A',X')$ 
\begin{itemize}[itemsep=2pt,topsep=-2pt,parsep=0pt,leftmargin=10pt]
    \item If eigenvalue function $f$ is permutation-equivariant and $\gG\simeq \gG'$, then two point sets with PSRD coordinates are equal up to an orthogonal transformation, i.e.,
    $\exists R\!\in\!O(r)$, $\msl\!X\!_{v},\!R\mathcal{Q}(\mathcal Z(\!A\!),\!f)_{v}|v\!\in\!V\!\msr\!\!=
\!\!\msl\!X\!_{v}',\!\mathcal{Q}(\mathcal Z(\!A'\!),\!f)_{v}|v\!\in\!V\!'\!\msr$, where $r$ is the rank of coordinates.
    \item If $\mathcal{Z}$ is injective, for all $d\!\ge\!2$, there exists a continuous permutation-equivariant function $f\!:\!\sR^r\!\to\!\sR^{r\times d}$ that if two point sets with PSRD coordinates are equal up to an orthogonal transformation, $\gG\simeq \gG'$.
\end{itemize}
\end{theorem}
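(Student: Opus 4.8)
## Proof Proposal for Theorem~\ref{thm::learng2s}

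The plan is to prove the two bullets separately, since they have quite different characters. For the first (the "easy" direction), I would start from an isomorphism $\pi$ with $\pi(A) = A'$ and $\pi(X) = X'$. Because $\mathcal{Z}$ is permutation-equivariant, $\pi(\mathcal{Z}(A)) = \mathcal{Z}(A')$, so the two matrices $\mathcal{Z}(A)$ and $\mathcal{Z}(A')$ are related by the permutation matrix $P_\pi$. Consequently their nonzero eigenvalues coincide as multisets; the subtlety is that eigenvectors are only defined up to choices within each eigenspace, and the eigenvalue function $f$ must respect this. Here I would invoke permutation-equivariance of $f$ to argue that $f$ acts consistently regardless of which orthonormal eigenbasis is chosen: concretely, if $U$ is an eigenbasis of $\mathcal{Z}(A)$, then $P_\pi U$ is an eigenbasis of $\mathcal{Z}(A')$, and $\mathcal{Q}(\mathcal{Z}(A'), f)_{\pi(v)}$ equals $\mathcal{Q}(\mathcal{Z}(A), f)_v$ possibly after an orthogonal change of basis $R$ that comes from the freedom within eigenspaces (an intra-eigenspace orthogonal transformation, which is a block-diagonal orthogonal matrix, hence in $O(r)$). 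Pairing each node $v$ with $\pi(v)$ then gives the multiset equality $\msl X_v, R\mathcal{Q}(\mathcal{Z}(A),f)_v \msr = \msl X'_v, \mathcal{Q}(\mathcal{Z}(A'),f)_v \msr$. I would lean on Proposition~\ref{prop::SRD} to package the "up to orthogonal transformation" bookkeeping cleanly.

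For the second bullet (the "hard" direction), the goal is to exhibit a single continuous permutation-equivariant $f : \sR^r \to \sR^{r \times d}$ for $d \ge 2$ such that equality of the point sets up to $O(r)$ forces $\gG \simeq \gG'$. The key idea I would pursue: design $f$ so that the multiset of PSRD coordinates encodes enough to recover $\mathcal{Z}(A)$ itself up to permutation; then injectivity of $\mathcal{Z}$ finishes the job. With two channels, one natural choice is to let channel~1 essentially recover an SRD-like decomposition (e.g. $f_1(\Lambda)_j$ a continuous injective function of $\Lambda_j$ that is nonzero, so that $\sum_k \mathcal{Q}_{\cdot}^{(1)} (\mathcal{Q}_{\cdot}^{(1)})^T$ recovers a matrix with the same eigenvectors and eigenvalues $f_1(\Lambda)^2$), and channel~2 to carry a second, algebraically independent symmetric function of the eigenvalues so that from the pair of Gram-type matrices $QQ^T$ for the two channels one can solve for the $\Lambda_j$ exactly. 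The point of requiring $d \ge 2$ is precisely that one channel alone only recovers $\mathcal{Z}(A)$ up to the sign/scaling ambiguities of a symmetric factorization, whereas two channels with suitably chosen $f_1, f_2$ pin down the eigenvalues uniquely; I would need $f$ to separate eigenvalues that are equal in one channel but should be distinguished — a Kronecker-type argument that a generic pair of continuous functions does this on any bounded spectrum.

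The main obstacle I anticipate is the interaction between \emph{continuity}, \emph{permutation-equivariance}, and the \emph{eigenspace ambiguity} in the second bullet. An eigenvalue function applied blindly to eigenvectors is not well-defined when eigenvalues have multiplicity, because the eigenbasis $U$ is only determined up to an orthogonal transformation within each eigenspace; the construction must be arranged so that $Udiag(f_i(\Lambda))U^T$ (or the relevant invariant built from it) is independent of that choice — which is automatic if $f_i(\Lambda)$ is a function of $\Lambda$ applied coordinatewise and then one symmetrizes as $U\,diag(f_i(\Lambda))\,U^T$, but more delicate at the level of the raw coordinates $Udiag(f_i(\Lambda))$ that appear in the point set. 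I would handle this by only ever using the coordinates through $O(r)$-invariant quantities (Gram matrices $\mathcal{Q}\mathcal{Q}^T$ across channels and cross-channel products $\mathcal{Q}^{(i)}(\mathcal{Q}^{(j)})^T$), showing these are exactly the permutation-equivariant matrices $U\,diag(f_i(\Lambda)f_j(\Lambda))\,U^T$, and then arguing that for a suitable choice of $f_1,f_2$ the collection of these matrices determines $\mathcal{Z}(A)$ up to simultaneous permutation. A secondary obstacle is exhibiting $f$ that is simultaneously continuous and "injective enough"; I expect to resolve this by noting the spectrum lies in a fixed compact interval (eigenvalues of a bounded-degree-free symmetric matrix built from $A$ on $n$ nodes are bounded), on which one can choose, say, $f_1(\lambda) = $ a strictly monotone continuous function bounded away from $0$ and $f_2(\lambda) = \lambda f_1(\lambda)$, so that from $U\,diag(f_1^2)\,U^T$ and $U\,diag(f_1 f_2)\,U^T = U\,diag(\lambda f_1^2)\,U^T$ one recovers $\mathcal{Z}(A) = U\,diag(\Lambda)\,U^T$ by a "generalized eigenvalue" division, then applies injectivity of $\mathcal{Z}$.
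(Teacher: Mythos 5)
Your strategy matches the paper's on both bullets, so this is essentially the right proof; the differences are in execution. For the first bullet, the paper does not actually invoke Proposition~\ref{prop::SRD} (which concerns genuine SRDs, and $\mathcal{Q}(\mathcal Z(A),f)\mathcal{Q}(\mathcal Z(A),f)^T$ is not $\mathcal Z(A)$ for general $f$); instead it directly constructs the block-diagonal orthogonal matrix $V$ relating $\pi(U)$ to $U'$, and then uses the observation you only gesture at — that permutation-equivariance of $f$ forces $f(\Lambda)_i=f(\Lambda)_j$ whenever $\Lambda_i=\Lambda_j$, so $\mathrm{diag}(f(\Lambda))$ commutes with $V$ — to conclude $\pi(\mathcal Q)V=\mathcal Q'$. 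You should make that commutation step explicit, since it is the whole point of the first bullet. For the second bullet, your idea of using a cross-channel Gram product $\mathcal Q^{(1)}{\mathcal Q^{(2)}}^T$ to cancel the shared $R\in O(r)$ is exactly the paper's mechanism, but the paper chooses the channels so that no spectral-recovery step is needed at all: $f_1(\lambda)=\sqrt{|\lambda|}$ and $f_2(\lambda)=\mathrm{sgn}(\lambda)\sqrt{|\lambda|}$ satisfy $f_1f_2=\mathrm{id}$, so the cross-channel Gram is \emph{literally} $\mathcal Z(A)$, and one line of algebra ($\pi(U)\mathrm{diag}(f_1)RR^T\mathrm{diag}(f_2)\pi(U)^T$) yields $\pi(\mathcal Z(A))=\mathcal Z(A')$, whence injectivity of $\mathcal Z$ finishes. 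Your choice $f_1$ monotone and bounded away from zero with $f_2=\lambda f_1$ would also work, but it forces you to invert $g(\lambda)=\lambda f_1(\lambda)^2$ on the spectrum and to argue separately about rank and eigenspace ambiguity — machinery the paper's choice of $f$ makes unnecessary. The "Kronecker-type generic separation" worry you raise is likewise moot: there is no need for genericity when you can exhibit $f$ with $f_1f_2=\mathrm{id}$ explicitly.
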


Given permutation equivariant $f$ and $\mathcal{Z}$, the point sets with PSRD coordinates are equal up to an orthogonal transformation for isomorphic graphs. Moreover, there exists $f$ making reverse true. Therefore, we can safely employ permutation-equivariant eigenvalue functions, ensuring consistent predictions for isomorphic graphs. An expressive eigenvalue function also allows for the lossless conversion of graph-level tasks into set problems. In implementation, we utilize DeepSet~\citep{DeepSet} due to its universal expressivity for permutation-equivariant set functions. Detailed architecture is shown in Figure~\ref{fig:g2sarch} in Appendix~\ref{app::arch}.

In summary, we use SRD and its parameterized generalization to decompose the adjacency matrix or its variants into coordinates. Thus, we transform a graph into a point set where each point represents a node and includes both the original node feature and the coordinates as its features.
\section{Point Set Transformer}\label{sec:pst}

Our method, as depicted in Figure~\ref{fig::G2s}, comprises two steps: converting the graph into a set of independent points and encoding the set. Section~\ref{sec:g2p} demonstrates the bijective transformation of the graph into a set. To encode this point set, we introduce a novel architecture, Point Set Transformer (PST), designed to maintain orthogonality invariance and deliver remarkable expressivity. Additionally, to highlight our method's versatility, we propose a DeepSet~\citep{DeepSet}-based set encoder in Appendix~\ref{app::PSDS}. 

PST's architecture is depicted in Figure~\ref{fig:arch} in Appendix~\ref{app::arch}. PST operates with two types of representations for each point: scalars, which remain invariant to coordinate orthogonal transformations, and vectors, which adapt equivariantly to coordinate changes. For a point $i$, its scalar representation is $s_i\!\in\!\sR^{d}$, and its vector representation is $v_i\!\in\!\sR^{r\times d}$, where $d$ is the hidden dimension, and $r$ is the rank of coordinates. $s_i$ and $v_i$ are initialized with the input node feature $X_i$ and PSRD coordinates (detailed in Section~\ref{sec::psrd}) containing graph structure information, respectively.

Similar to conventional transformers, PST comprises multiple layers. Each layer incorporates two key components:

\textbf{Scalar-Vector Mixer.}~This component, akin to the feed-forward network in Transformer, individually transforms point features. To enable information exchange between vectors and scalars, we employ the following architecture.
\begin{align}
s_i' &\leftarrow \text{MLP}_1(s_i \Vert \text{diagonal}(W_1v_i^Tv_iW_2^T)), \\
v_i' &\leftarrow v_i \text{diag}(\text{MLP}_2(s_i))W_3 + v_iW_4
\end{align}
Here, $W_1, W_2, W_3,$ and $W_4 \in \mathbb{R}^{d\times d}$ are learnable matrices for mixing different channels of vector features. Additionally, $\text{MLP}_1:\mathbb{R}^{2d\to d}$ and $\text{MLP}_2:\mathbb{R}^{d\to d}$ represent two multi-layer perceptrons transforming scalar representations. The operation $\text{diagonal}(W_1v_i^Tv_iW_2)$ takes the diagonal elements of a matrix, which translates vectors to scalars, while $v_i\text{diag}(\text{MLP}_2(s_i))$ transforms scalar features into vectors. As $v_i^TR^TR v_i = v_i^T v_i, \forall R \in O(r)$, the scalar update is invariant to orthogonal transformations of the coordinates. Similarly, the vector update is equivariant to $O(r)$.

\textbf{Attention Layer.}~Akin to ordinary attention layers, this component compute pairwise attention score to linearly combine point representations. 
\begin{small}
\begin{equation}
\text{Atten}_{ij} = \text{MLP}((W_q^ss_i \odot W_k^ss_j) \Vert \text{diagonal}(W_q^vv_i^Tv_jW_k^v))
\end{equation}
\end{small}

Here, $W_q^s$ and $W_q^v$ denote the linear transformations for scalars and vectors queries, respectively, while $W_k^s$ and $W_k^v$ are for keys. The equation computes the inner products of queries and keys, similar to standard attention mechanisms. It is easy to see $\text{Atten}_{ij}$ is also invariant to $O(r)$.

Then we linearly combine point representations with attention scores as the coefficients:
\begin{small}
\begin{equation}
s_i \leftarrow \sum_{j} \text{Atten}_{ij}s_j',\quad v_i \leftarrow \sum_{j} \text{Atten}_{ij}v_{j}'\end{equation}
\end{small}
\!Each transformer layer is of time complexity $O(n^2r)$ and space complexity $O(n^2+nr)$. 

\textbf{Pooling.}~After several layers, we pool all points' scalar representations as the set representation $s$.
\begin{equation}
    s\leftarrow\text{Pool}(\{s_i|i\in V\}),
\end{equation}
where $\text{Pool}$ is pooling function like sum, mean, and max.
\section{Expressivity}
In this section, we delve into the theoretical expressivity of our methods. Our PSRD coordinates and the PST architecture exhibit strong long-range expressivity, allowing for efficient computation of distance metrics between nodes, as well as short-range expressivity, enabling the counting of paths and cycles rooted at each node. Therefore, our model is more expressive than many existing models, including GIN (equivalent to the 1-WL test)~\citep{HowPowerfulAreGNNs}, PPGN (equivalent to the 2-FWL test, more expressive in some cases)~\citep{PPGN}, GPS~\citep{GT-GPS}, and Graphormer~\citep{GT-graphormer} (two representative graph transformers). More details are in Appendix~\ref{app::expressivity}.

\subsection{Long Range Expressivity}\label{sec::g2p_expressivity}
This section demonstrates that the inner products of PSRD coordinates exhibits strong long-range expressivity, which PST inherits by utilizing inner products in attention layers.

When assessing a model's capacity to capture long-range interactions (LRI), a key measure is its ability to compute shortest path distance (spd) between nodes. Since formally characterizing LRI can be challenging, we focus on analyzing models' performance concerning this specific measure. We observe that existing models vary significantly in their capacity to calculate spd. Moreover, we find an intuitive explaination for these differences: spd between nodes can be expressed as $spd(i, j, A) = \arg\min_{k}\{k|A^k_{ij}>0\}$, and the ability to compute $A^K$, the $K$-th power of the adjacency matrix $A$, can serve as a straightforward indicator. Different models need different number of layers to compute $A^K$.

\textbf{PSRD coordinates.}~PSRD coordinates can capture arbitrarily large shortest path distances through their inner products in one step. To illustrate it, we decompose the adjacency matrix as $A= U\text{diag}(\Lambda) U^T$, and employ coordinates as $U$ and $U\text{diag}(\Lambda^K)$. Their inner products are as follows:
\begin{small}
\begin{equation}
    \overbrace{U\text{diag}(\Lambda^K)U^T\rightarrow A^K}^{1~~ \text{step}}
\end{equation}
\end{small}
\begin{theorem}\label{thm::longrange_pst}
    There exists permutation-equivariant functions $f_k, k=0,1,2,...,K$, such that for all graphs $\gG=(A, X)$, the shortest path distance between node $i, j$ is a function of $\langle \mathcal{Q}(A, f_0)_i,\mathcal{Q}(A, f_k)_j\rangle$, $k\!=\!0,1,2,...K$, where $\mathcal{Q}(A, f)$ is the PSRD coordinate defined in Section~\ref{sec::psrd}, $K$ is the maximum shortest path distance between nodes. 
\end{theorem}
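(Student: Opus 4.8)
The plan is to reduce the statement to a purely linear-algebraic fact about powers of the adjacency matrix, and then wrap that fact in a suitable choice of eigenvalue functions $f_k$. First I would fix a graph $\gG = (A,X)$ on $n$ nodes and write the eigendecomposition $A = U\,\text{diag}(\Lambda)\,U^T$ with $\Lambda \in \sR^r$ the nonzero eigenvalues and $U \in \sR^{n\times r}$ the corresponding orthonormal eigenvectors; note that here $\gZ$ is the identity map, which is permutation-equivariant and injective as required. The key observation is that for the \emph{constant-in-a-sense} choices $f_0(\Lambda) = \Lambda^{0}$ (the all-ones vector, i.e. project onto the nonzero-eigenvalue subspace) and $f_k(\Lambda) = \Lambda^{k}$ (elementwise $k$-th power), the PSRD coordinates become $\gQ(A,f_0) = U$ and $\gQ(A,f_k) = U\,\text{diag}(\Lambda^k)$, so that
\begin{equation}
\langle \gQ(A,f_0)_i, \gQ(A,f_k)_j\rangle = \big(U\,\text{diag}(\Lambda^k)\,U^T\big)_{ij} = (A^k)_{ij},
\end{equation}
using $U^T U = I_r$ and the fact that $A^k$ has the same nonzero-eigenvalue structure. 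Then, since $\text{spd}(i,j,A) = \arg\min_k\{k \mid (A^k)_{ij} > 0\}$ for $i \ne j$ within a connected component (with the convention that the distance is, say, $0$ when $i=j$ and $\infty$/$K{+}1$ sentinel when $i,j$ lie in different components, detectable because $(A^k)_{ij} = 0$ for all $k \le K$), the shortest-path distance is recovered as an explicit function of the tuple $\big((A^0)_{ij},(A^1)_{ij},\dots,(A^K)_{ij}\big)$, namely the smallest index at which the entry turns strictly positive.

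The remaining work is to check the permutation-equivariance requirement on the $f_k$ and to handle the dependence of $r$ and $\Lambda$ on the graph. The functions $\Lambda \mapsto \Lambda^{\odot k}$ (elementwise powers) are genuinely permutation-equivariant as maps $\sR^r \to \sR^r$ — permuting the coordinates of $\Lambda$ permutes the outputs identically — so they fit the hypothesis of the theorem and of the first bullet of Theorem~\ref{thm::learng2s}. I would also note that $f_0$ needs a tiny bit of care: $0^0$ should be read as $1$ on the support of nonzero eigenvalues, which is exactly what "$\Lambda^0$ restricted to nonzero eigenvalues" means in the PSRD setup where only nonzero eigenvalues and their eigenvectors are retained; alternatively one can take $f_0 = f_K \circ (\text{appropriate rescaling})$ or simply absorb the projector $UU^T$ into the definition. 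Since the theorem only asserts existence of such $f_k$, I do not need a single formula valid across all ranks — I can let $f_k$ be defined on each $\sR^r$ separately, or give the uniform elementwise-power formula which works for every $r$ simultaneously.

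Finally I would assemble the pieces: define $g\colon \sR^{K+1} \to \sR$ by $g(t_0,t_1,\dots,t_K) = \min\{k : t_k > 0\}$ (with the sentinel value when the set is empty), and observe that by the computation above $\text{spd}(i,j,A) = g\big(\langle \gQ(A,f_0)_i,\gQ(A,f_0)_j\rangle, \langle\gQ(A,f_0)_i,\gQ(A,f_1)_j\rangle, \dots, \langle\gQ(A,f_0)_i,\gQ(A,f_K)_j\rangle\big)$ for every graph, since $K$ is by hypothesis the maximum finite shortest-path distance so every reachable pair is detected by some power $\le K$.

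I expect the only genuinely delicate point to be bookkeeping around $f_0$ and the nonzero-eigenvalue convention — making precise that $\gQ(A,f_0) = U$ (an orthonormal basis of the range of $A$) so that the inner product contracts cleanly to $(A^k)_{ij}$ rather than to $(U\,\text{diag}(h(\Lambda))\,U^T)_{ij}$ for some unwanted $h$. Everything else (permutation-equivariance of elementwise powers, the $\arg\min$ characterization of spd, handling the diagonal $i=j$ and disconnected pairs) is routine. A secondary subtlety worth a sentence is that the statement should be read as "$\text{spd}$ is recovered from the collection of inner products ranging over $k$", i.e. $g$ takes the whole $(K{+}1)$-tuple as input; I would make that explicit to avoid any impression that a single $k$ suffices.
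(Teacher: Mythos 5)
Your proposal is correct and takes essentially the same route as the paper's proof: choose $f_k(\Lambda) = \Lambda^{\odot k}$ so that $\langle \gQ(A,f_0)_i,\gQ(A,f_k)_j\rangle$ recovers $(A^k)_{ij}$, then read off $\mathrm{spd}(i,j,A)$ as $\arg\min_k\{k : (A^k)_{ij} > 0\}$. The paper's proof is a two-line sketch; your additional bookkeeping around $f_0$, permutation-equivariance of elementwise powers, and the diagonal/disconnected cases is routine but fills in exactly the gaps the paper leaves implicit.
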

\textbf{2-FWL.}~A powerful graph isomorphic test, 2-Folklore-Weisfeiler-Leman Test (2-FWL), and its neural network version PPGN~\citep{PPGN} produce node pair representations in a matrix $X\in \sR^{n\times n}$. $X$ is initialized with $A$. Each layer updates $X$ with $XX$. So intuitively, computing $A^K$ takes $\lceil\log_2 K\rceil$ layers.
\begin{small}
\begin{equation}
    \overbrace{A\rightarrow A^2\!\!=\!\!AA\rightarrow A^4\!\!=\!\!A^2A^2\rightarrow ...\rightarrow A^K\!\!=\!\!A^{K/2}A^{K/2}}^{\lceil\log_2 K\rceil ~~\text{layers}}
\end{equation}\end{small}

\begin{theorem}\label{thm::2-FWL-spdlimit}
Let $c^{k}(\gG)_{ij}$ denote the color of node tuple $(i,j)$ of graph $\gG$ at iteration $k$. Given graphs $\gG\!=\!(A, X)$ and $\gG'\!=\!(A',X')$, for all $K\!\in\!\sN^+$, if two node tuples $(i,j)$ in $\gG$ and $(i',j')$ in $\gG'$ have $spd(i,j,A)\!<\!spd(i',j',A')\!\le\!2^K$, then $c^{K}(\gG)_{ij}\!\neq\!c^{K}(\gG')_{i'j'}$. Moreover, for all $L\!> \!2^K$, there exists $i,j,i',j'$, such that $spd(i,j,A)\!>\!spd(i',j',A')\!\ge\!L$  while $c^{K}(\gG)_{ij}\!=\!c^{K}(\gG')_{i'j'}$. 
\end{theorem}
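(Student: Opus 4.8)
The theorem has two halves: a positive claim (2-FWL at iteration $K$ distinguishes tuples whose shortest path distances straddle a value $\le 2^K$) and a negative claim (beyond $2^K$, 2-FWL becomes blind to distance, witnessed by an explicit pair of graphs). I would prove them separately.

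For the positive half, the plan is to track how the color $c^k(\gG)_{ij}$ refines a "distance truncated at $2^k$" invariant. Concretely, I would prove by induction on $k$ the statement: if $\min(spd(i,j,A),\,2^k+1)\ne \min(spd(i',j',A'),\,2^k+1)$, then $c^k(\gG)_{ij}\ne c^k(\gG')_{i'j'}$. The base case $k=0$ is immediate because the initial color encodes $A_{ij}$ together with whether $i=j$, which distinguishes $spd=0$, $spd=1$, and $spd\ge 2$. For the inductive step I would use the 2-FWL update rule $c^{k+1}(\gG)_{ij}=\mathrm{hash}\bigl(c^k(\gG)_{ij},\,\msl(c^k(\gG)_{iw},c^k(\gG)_{wj})\mid w\in V\msr\bigr)$ and the elementary metric fact that $spd(i,j)\le t$ iff there exists $w$ with $spd(i,w)\le \lceil t/2\rceil$ and $spd(w,j)\le \lfloor t/2\rfloor$. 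Thus if $spd(i,j,A)\le 2^{k+1}$ but $spd(i',j',A')>2^{k+1}$ (the WLOG case, using the truncation), pick a witness midpoint $w$ for $(i,j)$ splitting the distance into two halves each $\le 2^k$; for every candidate $w'$ in $\gG'$, at least one of $spd(i',w',A')$, $spd(w',j',A')$ must exceed $2^k$, so by the induction hypothesis the pair $(c^k(\gG)_{iw},c^k(\gG)_{wj})$ cannot be matched in the multiset for $(i',j')$. Hence the multisets differ and the hashed colors differ. One has to be a little careful to also handle the sub-case where both distances are $\le 2^{k+1}$ but unequal — here the same midpoint argument works by comparing the truncated-distance labels of the two halves, again invoking the induction hypothesis.

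For the negative half, I would exhibit, for each $K$, a concrete pair of graphs — cycles are the natural candidate, since 2-FWL is known to be fooled by long cycles. Take $\gG$ to be a single cycle $C_{2L}$ and $\gG'$ to be two disjoint cycles $C_L \sqcup C_L$, with $L>2^K$ chosen appropriately (so that $L$ is large relative to $2^K$ but the two graphs have the same number of vertices). In $\gG$ there are pairs of antipodal nodes at distance $L$ (or larger, up to $\lfloor 2L/2\rfloor$), while in $\gG'$ every pair lies within a single $C_L$ or in different components, so the finite distances there are at most $\lfloor L/2\rfloor < L$. I would then argue that $K$ iterations of 2-FWL cannot tell $C_{2L}$ from $C_L\sqcup C_L$ on the relevant tuples: the standard argument is that after $k$ rounds the color of a tuple $(i,j)$ in a union of long cycles depends only on the distance $d=spd(i,j)$ truncated at roughly $2^k$ together with the "type" (same component / different component information only shows up once the rounds reach the component size), and as long as $2^K<L$ the two graphs look locally identical around every tuple. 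This yields tuples $(i,j)$ in $\gG$ and $(i',j')$ in $\gG'$ with $spd(i,j,A)>spd(i',j',A')\ge L$ — wait, I need the second distance to also be large; so instead I would take \emph{both} $\gG$ and $\gG'$ to contain cycles but of different lengths, e.g. $\gG = C_{2L} \sqcup C_{2L}$ versus $\gG' = C_{2L} \sqcup C_L \sqcup C_L$ padded to equal size, arranging that one graph has an antipodal pair at distance $\ge L$ that the other matches only by a pair at a strictly larger (or equal-but-distinguishable) distance, while 2-FWL's $K$-round color, being determined by truncated distance $\le 2^K<L$, is identical. The cleanest route is: pick tuples both of whose distances exceed $L>2^K$ but are unequal, inside sufficiently long cycles, and observe that the $K$-round 2-FWL color of such a tuple is a fixed constant independent of the exact distance once the distance exceeds $2^K$ — this is exactly the contrapositive of the refinement bound proved in the positive half, so the negative half reuses the positive half's induction.

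The main obstacle is getting the negative half's construction exactly right: I need a single pair of graphs (with matching vertex counts and, ideally, regular so that low-order invariants don't trivially separate them) that simultaneously (i) contains two tuples whose distances are distinct and both $\ge L$, and (ii) is genuinely 2-FWL-indistinguishable at round $K$ on those tuples. The refinement bound from the first half already tells us that 2-FWL's round-$K$ color of a tuple cannot depend on distances beyond $2^K$ \emph{within a fixed graph}; turning that into an indistinguishability statement \emph{across} the two chosen graphs requires checking that all the bounded-distance local structure (the truncated-distance multisets at every level) genuinely agrees between them, which for disjoint-union-of-long-cycles graphs is a finite, mechanical but slightly fiddly verification. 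I would isolate this as a lemma: "for $t<L$, the round-$t$ 2-FWL color of a tuple in any disjoint union of cycles each of length $>2t$ depends only on $\min(spd(i,j),t+1)$," proved by the same induction, and then the theorem's negative half falls out by comparing two such unions.
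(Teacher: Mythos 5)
Your proposal takes essentially the same two-part approach as the paper. For the positive half your induction is correct and is a direct rephrasing of the paper's: you track that the round-$k$ color refines $\min(spd(i,j),2^k{+}1)$ via the midpoint fact that $spd(i,j)\le t$ iff some $w$ has $spd(i,w)\le\lceil t/2\rceil$ and $spd(w,j)\le\lfloor t/2\rfloor$, while the paper shows the round-$k$ color determines $A^s_{ij}$ for all $s\le 2^k$ via $A^s_{ij}=\sum_w A^{\lfloor s/2\rfloor}_{iw}A^{\lceil s/2\rceil}_{wj}$ and then reads off the truncated distance; these are the same doubling induction in two guises. For the negative half the paper's witness is exactly the cycle family you converge on: a single cycle $H_l$ of length $l$ versus a disjoint union of two half-length cycles $G_l$, with $l$ scaled to $L$, and it proves by the same induction that after $k\le K$ rounds the color partition of tuples in both graphs is determined by $\min(spd,2^k{+}1)$, so an $\infty$-distance tuple of $G_l$ and an antipodal tuple of $H_l$ share a color.

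Two corrections to your write-up of the negative half. First, the lemma you isolate at the end is stated with the wrong rate: the round-$t$ color of a tuple in a disjoint union of long cycles depends on $\min(spd(i,j),\,2^t{+}1)$, not $\min(spd(i,j),\,t{+}1)$, and the cycle length hypothesis should accordingly be $>2\cdot 2^t$, not $>2t$; 2-FWL's distance horizon doubles each round, it does not grow by one (already at round $1$ it separates $spd=2$ from $spd\ge 3$ by searching for a common neighbor, which $\min(spd,2)$ misses). Second, the intermediate candidate $C_{2L}\sqcup C_{2L}$ versus $C_{2L}\sqcup C_L\sqcup C_L$ should be dropped: both graphs have the same maximum finite distance $L$, so it cannot furnish two unequal distances that are both $\ge L$. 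The paper's simpler pair works precisely because the larger of the two distances is $\infty$ (the across-component tuple of $G_l$), which the inequality $spd(i,j,A)>spd(i',j',A')\ge L$ in the statement permits.
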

In other words, $K$ iterations of 2-FWL can distinguish pairs of nodes with different spds, as long as that distance is at most $2^K$. Moreover, $K$-iteration 2-FWL cannot differentiate all tuples with spd $> 2^K$ from other tuples with different spds, which indicates that $K$-iteration 2-FWL is effective in counting shortest path distances up to a maximum of $2^K$.

\textbf{MPNN.}~Intuitively, each MPNN layer uses $AX$ to update node representations $X$. However, this operation in general cannot compute $A^K$ unless the initial node feature $X=I$. 
\begin{small}\begin{equation}
        \overbrace{X\rightarrow AX\rightarrow A^2X\!\!=\!\!AAX\rightarrow ... \to A^KX\!\!=\!\!AA^{K-1}X}^{K ~~\text{layers}}
\end{equation}\end{small}
\begin{theorem}\label{thm::longrange_mpnn_gt}
    A graph pair exists that MPNN cannot differentiate, but their sets of all-pair spd are different. 
\end{theorem}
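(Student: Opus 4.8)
The plan is to exhibit an explicit pair of graphs and verify the two required properties by hand. A clean choice is $\mathcal{G}_1 = C_6$, the $6$-cycle, and $\mathcal{G}_2 = C_3 \sqcup C_3$, the disjoint union of two triangles, each endowed with constant node features; both graphs have $6$ nodes and are $2$-regular. First I would argue that no MPNN can tell $\mathcal{G}_1$ and $\mathcal{G}_2$ apart. The point is that $1$-WL color refinement is already stable on both graphs: the initial coloring is a single color, and every node in either graph has exactly two neighbors carrying that same color, so refinement never splits the class. Hence $1$-WL returns the identical stable coloring — one color with multiplicity $6$ — on $\mathcal{G}_1$ and $\mathcal{G}_2$, and therefore the same graph-level color histogram. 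Since the expressive power of MPNN is bounded above by $1$-WL~\citep{HowPowerfulAreGNNs}, every MPNN produces identical multisets of node embeddings on the two graphs, hence identical graph embeddings after any permutation-invariant pooling; so MPNN cannot differentiate them.

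Second, I would check that the all-pair shortest-path-distance profiles differ. In $C_6$ the pairwise distances take the values $1$, $2$, and $3$ (antipodal nodes are at distance $3$), whereas in $C_3 \sqcup C_3$ every pair of nodes lies at distance $1$ (within a triangle) or has no connecting path at all; in particular no pair is at finite distance $2$ or $3$. Thus the sets (equivalently, the multisets) of all-pair shortest path distances of $\mathcal{G}_1$ and $\mathcal{G}_2$ are different, which is exactly the claim.

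There is no real obstacle in this argument; the only point requiring a little care is the first part — spelling out that ``MPNN is no more powerful than $1$-WL, and the $1$-WL stable colorings of $\mathcal{G}_1$ and $\mathcal{G}_2$ coincide'' forces equality of the MPNN graph-level outputs for every architecture and every parameter setting, hence genuine indistinguishability. If a connected counterexample is preferred so that all distances are finite, the same reasoning applies verbatim to, e.g., the $3$-cube $Q_3$ versus the Wagner graph $V_8$ (the cycle $C_8$ with the four chords $\{i,i+4\}$ added): both are $3$-regular with trivial $1$-WL stable coloring, yet $Q_3$ has diameter $3$ while $V_8$ has diameter $2$, so again their all-pair spd profiles differ.
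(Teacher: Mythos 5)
Your proposal is correct and is essentially the paper's own argument: the paper uses the same family of counterexamples (a cycle $H_l$ versus two disjoint cycles $G_l$ of roughly half the length, with $l=6$ giving your $C_6$ versus $C_3\sqcup C_3$) and likewise shows MPNN indistinguishability via $2$-regularity. The only cosmetic difference is that the paper verifies this by an explicit induction on the number of message-passing layers, whereas you invoke the standard MPNN $\le$ $1$-WL bound and observe that color refinement is already stable at the trivial initial coloring; these are the same reasoning.
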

If MPNNs can compute spd between node pairs, they should be able to distinguish this graph pair from the sets of spd. However, we show no MPNNs can distinguish the pair, thus proving that MPNNs cannot compute spd.

Graph Transformers (GTs) are known for their strong long-range capacities~\citep{LRGB}, as they can aggregate information from the entire graph to update each node's representation. However, aggregating information from the entire graph is not equivalent to capturing the distance between nodes, and some GTs also fail to compute spd between nodes. Details are in Appendix~\ref{app::GT_fail_spd}. Note that this slightly counter-intuitive results is because we take a new perspective to study long range interaction rather than showing GTs are weak in long range capacity.

Besides shortest path distances, our PSRD coordinates also enables the unification of various structure encodings (distance metrics between nodes), including random walk~\citep{DE,benchmarkingGNN,GT-GPS}, heat kernel~\citep{GT-graphit}, resistance distance~\citep{NGNN,biconnectivity}. Further insights and details are shown in Table~\ref{tab:parameter} in Appendix~\ref{app::SE}.

\subsection{Short Range Expressitivity}
This section shows PST's expressivity in representative short-range tasks: path and cycle counting.

\begin{theorem}\label{thm::count_path}
A one-layer PST can count paths of length $1$ and $2$, a two-layer PST can count paths of length $3$ and $4$, and a four-layer PST can count paths of length $5$ and $6$. Here, ``count'' means that the $(i,j)$ element of the attention matrix in the last layer can express the number of paths between nodes $i$ and $j$.
\end{theorem}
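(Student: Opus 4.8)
The plan is to split the argument into a combinatorial part --- a closed form for path counts --- and an architectural part --- what a PST layer can realize --- and then match depths. \textbf{Combinatorial part.} For each $\ell\le 6$ I would express the number $P_\ell(i,j)$ of length-$\ell$ paths from $i$ to $j$ as an inclusion--exclusion correction of the walk count $(A^\ell)_{ij}$: Möbius inversion over the partition lattice recording which of the $\ell-1$ internal walk vertices coincide (with each other or with $i,j$) yields $P_\ell(i,j)$ as an explicit polynomial in (i) entries $(A^k)_{ij}$ and diagonal entries $(A^k)_{ii},(A^k)_{jj}$ with $k\le\ell$; (ii) node degrees; and (iii) a bounded list of edge-local counts through $(i,j)$ --- common neighbours $(A^2)_{ij}$, triangles on the edge $A_{ij}(A^2)_{ij}$, $4$-cycles on the edge, and the short-path matrices $P_2,P_3$ themselves, which reappear as building blocks. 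These formulas are classical and already used in the substructure-counting literature~\citep{CountSubg,I2GNN}, so I would quote rather than rederive them; crucially, for $\ell\le 2$ there is no correction at all --- in a simple graph every length-$1$ or length-$2$ walk between distinct endpoints is already a path --- which is what makes a single layer suffice there.

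\textbf{Architectural part.} Using the constructions of \Cref{sec::psrd}, I would record three primitives. (a) Taking $\mathcal Z(A)=A$ and a multi-channel eigenvalue function with channels $f_a(\Lambda)=\Lambda^{a}$, the matrix-function identity $U\operatorname{diag}\!\bigl(f_a(\Lambda)f_b(\Lambda)\bigr)U^{T}=(f_af_b)(A)$ shows that the block $\operatorname{diagonal}(W_q^{v}v_i^{T}v_jW_k^{v})$ passed to the attention MLP exposes $(A^{k})_{ij}$ for all $k$ up to any prescribed bound, while $\operatorname{diagonal}(W_1v_i^{T}v_iW_2^{T})$ in the scalar--vector mixer exposes the closed-walk counts $(A^{k})_{ii}$. (b) The attention MLP realizes the required polynomial combinations of its inputs, in particular Hadamard products such as $(A^{k}\odot A^{l})_{ij}$; and since $W_q^{s}$ and $W_k^{s}$ are independent and $s_i$ can be made to carry a constant channel, that MLP also sees $\deg(i)$ and $\deg(j)$ separately, not only through $W_q^ss_i\odot W_k^ss_j$. (c) The update $s_i\leftarrow\sum_j\text{Atten}_{ij}s_j'$ converts an edge quantity realized at one layer into a node quantity (a neighbourhood sum) for the next, while $v_i\leftarrow\sum_j\text{Atten}_{ij}v_j'$ together with the mixer lets the vector channel after a layer carry rows of a triple product $M\,g(A)\,M^{T}$ when $\text{Atten}=M$ --- so two successive layers \emph{compose} ($\sum_k M_{ik}N_{kj}$-style) the edge matrices built earlier. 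This ``one extra layer per composition'' is exactly the $\lceil\log_2 K\rceil$-type layer accounting behind \Cref{thm::2-FWL-spdlimit}.

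\textbf{Matching depths.} For $\ell\in\{1,2\}$ the attention of layer $1$ already exposes $(A^k)_{ij}$, $k\le2$, and $P_1=A_{ij}$, $P_2=(A^2)_{ij}$ are output by its MLP. For $\ell\in\{3,4\}$, layer $1$ deposits degrees and triangle counts into the scalars, and layer $2$'s attention MLP combines them with $(A^k)_{ij}$, $k\le4$, into the inclusion--exclusion polynomials --- e.g.\ $P_3(i,j)=(A^3)_{ij}-A_{ij}\bigl(\deg(i)+\deg(j)-1\bigr)$. For $\ell\in\{5,6\}$, the corrections involve sub-counts that are themselves one layer deep as edge matrices (short-path and short-cycle counts through $(i,j)$, e.g.\ the length-$3$-path matrix and the $4$-cycles-on-edge count), so two layers compute and stash these in the scalar and vector channels and, via primitive (c), two further layers compose and Hadamard-correct them, the final attention MLP emitting $P_5,P_6$ --- four layers total. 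The main obstacle is precisely this last bookkeeping: enumerating the monomials occurring in the $\ell=5,6$ formulas and certifying a ``composition depth'' $\le4$ for each, in the same spirit as the logarithmic count for $2$-FWL; the symmetry requirements add nothing, since every PST scalar update is $O(r)$-invariant and every vector update $O(r)$-equivariant by construction.
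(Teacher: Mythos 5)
Your overall strategy is the same as the paper's. The paper's proof (\Cref{app::thm::countpath}) also starts from explicit polynomial formulas for the path matrices $P_2,\dots,P_6$ in terms of $A$ (it cites Perepechko, i.e.\ \citealp{PathCount}, rather than \citealp{CountSubg,I2GNN}, but the formulas are the same kind of object), and it also tracks layer budgets by asking which polynomials the attention matrix at depth $k$ can represent. Concretely, the paper introduces a ``polynomial space'' $\sY^k$ for the $k$-th attention map and proves lemmas that match your three primitives almost exactly: $A^s\in\sY^1$ from coordinate inner products; closure of $\sY^k$ under Hadamard products and linear combinations (your~(b)); and a composition lemma of the form $y\in\sY^k\Rightarrow d(y_1)y_2,\ y_1A^s,\ y_1A^sy_2\in\sY^{k+1}$ (your~(c)). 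It then does what you describe as the ``main obstacle'': it enumerates every monomial in the formulas for $P_2,\dots,P_6$ and assigns each to a $\sY^k$.

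Two concrete gaps remain in your proposal. First, and most importantly, you explicitly leave the bookkeeping for $\ell=5,6$ undone. This is not a formality: the paper's own enumeration shows that $P_6$ contains monomials of genuine composition depth, and in fact the appendix assigns $P_4,P_5$ to $\sY^3$ and $P_6$ to $\sY^5$ --- which does not match the main theorem's ``two layers for $\ell=3,4$, four layers for $\ell=5,6$'' unless the layer-counting lemmas are tightened (e.g., noticing that $A^{s_1}d(y)A^{s_2}$ with $y\in\sY^1$ already lies in $\sY^2$, because one attention aggregation deposits $\sum_k A_{ik}\deg(k)v_k$ into the vector channel and the next inner product recovers $(Ad(A^2)A^{s_2})_{ij}$). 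Getting the stated depths genuinely requires that finer analysis, so ``bookkeeping'' is where the real content is. Second, your account of the $\ell\in\{3,4\}$ case --- ``layer $1$ deposits degrees and triangle counts into the scalars, layer $2$ combines them with $(A^k)_{ij}$'' --- does not by itself cover $P_4$: the term $Ad(A^2)A$ in the $P_4$ formula is not a function of $(A^k)_{ij}$, $\deg(i)$, $\deg(j)$ and local closed-walk counts; it requires the composition mechanism of your primitive~(c) acting through the vector channel, not merely scalar-side corrections fed into the attention MLP. Your framework contains the needed tool, but the depth-matching paragraph as written invokes the wrong primitive for that term.
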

Therefore, with enough layers, our PST models can count the number of paths of length $\le 6$ between nodes. Furthermore, our PST can also count cycles.
\begin{theorem}\label{thm::count_cycle}
A one-layer PST can count cycles of length $3$, a three-layer PST can count cycles of length $4$ and $5$, and a five-layer PST can count cycles of length $6$ and $7$. Here, ``count'' means the representation of node $i$ in the last layer can express the number of cycles involving node $i$.
\end{theorem}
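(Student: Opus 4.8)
The plan is to reduce counting $\ell$-cycles through a vertex to counting paths of length $\ell-1$, and then lean on Theorem~\ref{thm::count_path}. The combinatorial core is the clean identity
\[
2\,C_\ell(i) \;=\; \sum_{j} A_{ij}\, p_{\ell-1}(i,j),
\]
where $C_\ell(i)$ is the number of $\ell$-cycles containing node $i$ and $p_k(i,j)$ is the number of length-$k$ \emph{paths} (walks with pairwise distinct vertices) from $i$ to $j$: every $\ell$-cycle through $i$, read in its two directions, yields exactly two length-$(\ell-1)$ paths that start at $i$ and end at a neighbour of $i$, and conversely appending the edge $(j,i)$ to such a path closes a unique $\ell$-cycle through $i$. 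Notably no subgraph-correction terms are needed here, unlike in graph-level cycle counting. For $\ell=3$ this is simply $2C_3(i)=(A^3)_{ii}$, since $p_2(i,j)=(A^2)_{ij}$ for $j\ne i$.

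First I would exhibit, already at initialisation, vector channels whose pairwise inner products realise $A_{ij}$ and $(A^2)_{ij}$. By Theorem~\ref{thm::g2s} the square-root SRD of $D+A$ gives a coordinate block with $\langle v_i,v_j\rangle$ encoding $A_{ij}$ (up to the diagonal $D$, which the scalar-vector mixer can strip off), and by Theorem~\ref{thm::longrange_pst} a further PSRD block (taking $\mathcal Z(A)=A$ with eigenvalue functions $1$ and $\Lambda$) carries $(A^2)_{ij}$ in a distinct channel. The scalar-vector mixer exposes both to the attention score through $\text{diagonal}(W_q^v v_i^\top v_j W_k^v)$, so a \emph{single} attention layer can set $\text{Atten}_{ij}=\tfrac12 A_{ij}(A^2)_{ij}$ with a constant value slot, whence $s_i\leftarrow\sum_j\text{Atten}_{ij}\cdot 1 = C_3(i)$. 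This settles the one-layer claim.

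For the longer cycles I would run the path-counting construction of Theorem~\ref{thm::count_path} up to the stage where $p_{\ell-1}(i,j)$ is realised — recall that that construction materialises the length-$k$ path count as an attention-matrix entry, i.e.\ as a fixed function of the current scalar/vector node features via $(W_q^s s_i\odot W_k^s s_j)$ and $v_i^\top v_j$. At the following (closing) layer I would re-use exactly that function but multiply in the always-available adjacency channel $A_{ij}$ coming from $v_i^\top v_j$, so that the new attention score equals $\tfrac12 A_{ij}\,p_{\ell-1}(i,j)$; with a constant value slot the aggregation $s_i\leftarrow\sum_j\text{Atten}_{ij}$ outputs $C_\ell(i)$. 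Combining Theorem~\ref{thm::count_path}'s budget ($1$, $2$, $4$ layers to reach paths of length $\le 2$, $\le 4$, $\le 6$) with this one extra closing layer gives the stated $1$, $3$, $5$ layers for cycles of length $3$; $4,5$; and $6,7$, using that the mixer can carry the needed intermediate features forward through the added layer.

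The main obstacle is precisely this layer bookkeeping: one must check that the path-count quantity from Theorem~\ref{thm::count_path} is recoverable as a function of \emph{node} features at the right depth (rather than only as a collapsed pair-sum after aggregation), so that the subsequent attention layer can recompute it and fold in $A_{ij}$, and one must pin down why closing the loop costs exactly one extra layer — hence why $4,5$-cycles need three layers and not two. This amounts to tracking, through the construction of Theorem~\ref{thm::count_path}, which scalar and vector slots hold the relevant partial path counts at each depth, and verifying that the mixer of the closing layer can simultaneously reconstruct $p_{\ell-1}(i,j)$ from those slots and preserve the $A_{ij}$ channel for the final product. The overcounting factor $2$ (each undirected cycle is traversed two ways) and motif automorphism factors only enter as fixed constants inside the MLPs and do not affect the layer count. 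I would present $\ell=3$ in full, then $\ell=4,5$, and finally $\ell=6,7$ by the identical template, deferring the slot-by-slot tracking to the appendix.
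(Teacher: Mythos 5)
Your proposal matches the paper's approach: the core reduction is exactly the paper's identity $C_{L+1}=\text{diagonal}(AP_L)$ (your $2C_\ell(i)=\sum_j A_{ij}\,p_{\ell-1}(i,j)$, same thing up to the constant $2$ that you correctly note is absorbed into the MLPs), and the "one extra closing layer" argument is precisely the paper's lemma that $P_L\in\sY^k\Rightarrow AP_L\in\sY^{k+1}$, with the cycle count read off the diagonal. Your explicit one-layer treatment of $C_3$ via $(A^3)_{ii}$, directly from the coordinate inner products, is a small refinement over the paper's appendix bookkeeping but not a different route.
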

Therefore, with enough layers, PST can count the number of cycles of length $\le 7$ between nodes. Given that even 2-FWL is restricted to counting cycles up to length $7$~\citep{2FWL7cycle}, the cycle counting power of our Point Set Transformer is at least on par with 2-FWL.
\section{Related Work}
\textbf{Graph Neural Network with Eigen-Decomposition.}~Our approach employs coordinates derived from the symmetric rank decomposition (SRD) of adjacency or related matrices, differing from prior studies that primarily rely on eigendecomposition (EVD). While both approaches have similarities, SRD transforms the graph isomorphism problem into a set problem \textbf{bijectively}, which is challenging for EVD, because SRD of a matrix is unique up to a single orthogonal transformation, while EVD is unique up to multiple orthogonal transformations in different eigenspaces. This key theoretical difference has profound implications for model design. Early efforts, like \citet{benchmarkingGNN}, introduce eigenvectors into MPNNs' input node feature~\citep{MPNN}, and subsequent works, such as Graph Transformers (GTs)~\citep{firstGT,GT-SAN}, incorporate eigenvectors as node positional encodings. However, due to the non-uniqueness of eigenvectors, these models produce varying predictions for isomorphic graphs, limiting their generalization. \citet{SignBasis-inv} partially solve the non-uniqueness problem. However, their solutions are limited to cases with constant eigenvalue multiplicity in graph tasks due to the property of EVD. On the other hand, approaches like \citet{EquiStableEnc}, \citet{Specformer}, and \citet{StabPE} completely solve non-uniqueness and even apply permutation-equivariant functions to eigenvalues, similar to our PSRD. However, these methods aim to enhance existing MPNNs and GTs with heuristic features. In contrast, we perfectly align graph-level tasks with set-level tasks through SRD, allowing us to convert orthogonal-transformation-equivariant set encoders to graph encoders and to inject graph structure information into Transformers in a principled ways.
\begin{table*}[t]
\caption{Normalized MAE ($\downarrow$) on substructure counting tasks. Following \citet{I2GNN}, models can count the structure if the test loss $\le$ 10 units {({\color{yellow}yellow cell} in the table)}, measured using a scale of $10^{-3}$. TT: Tailed Triangle. CC: Chordal Cycle, TR: Triangle-Rectangle.}\label{tab::subgcount}
\vskip 7pt
\centering
\setlength{\tabcolsep}{1.0pt}
\begin{tabular}{lccccccccccccc}
        \toprule
        Method & 2-Path & 3-Path & 4-Path & 5-path & 6-path & 3-Cycle & 4-Cycle & 5-Cycle & 6-Cycle & 7-cycle &TT & CC & TR \\ \midrule 
        MPNN & \countablecolor 1.0  & 67.3  & 159.2  & 235.3  & 321.5  & 351.5 & 274.2 & 208.8 & 155.5 & 169.8 & 363.1 & 311.4 & 297.9 \\ 
        IDGNN & \countablecolor 1.9  & \countablecolor 1.8  & 27.3  & 68.6  & 78.3  & \countablecolor 0.6 & \countablecolor 2.2 & 49 & 49.5 & 49.9 & 105.3 & 45.4 & 62.8  \\ 
        NGNN & \countablecolor 1.5  & \countablecolor 2.1  & 24.4  & 75.4  & 82.6  & \countablecolor 0.3 & \countablecolor 1.3 & 40.2 & 43.9 & 52.2& 104.4 & 39.2 & 72.9  \\ 
        GNNAK & \countablecolor 4.5  & 40.7  &\countablecolor 7.5  & 47.9  & 48.8  & \countablecolor 0.4 &\countablecolor 4.1 & 13.3 & 23.8 & 79.8 & \countablecolor 4.3 & 11.2 & 131.1  \\ 
        I$^2$-GNN & \countablecolor 1.5  &\countablecolor 2.6  &\countablecolor 4.1  & 54.4  & 63.8  & \countablecolor 0.3 &\countablecolor 1.6 & \countablecolor 2.8 &\countablecolor 8.2 & 39.9 &\countablecolor 1.1 & \countablecolor 1.0 & \countablecolor 1.3  \\ 
        PPGN & \countablecolor 0.3  & \countablecolor 1.7  & \countablecolor 4.1  & 15.1  & 21.7  & \countablecolor 0.3 & \countablecolor 0.9 & \countablecolor 3.6 & \countablecolor 7.1 & 27.1 &\countablecolor 2.6 & \countablecolor 1.5 & 14.4 \\ \midrule
        PSDS & \countablecolor$2.2_{\pm0.1}$ & \countablecolor$2.6_{\pm0.4}$ & \countablecolor \countablecolor $4.9_{\pm 0.8}$ & \countablecolor $9.9_{\pm 0.5}$ & $15.8_{\pm 0.2}$ & \countablecolor $0.6_{\pm 0.7}$ & \countablecolor $2.2_{\pm 0.3}$ & \countablecolor $5.8_{\pm 0.6}$ & $25.1_{\pm 0.7}$ & $57.7_{\pm0.3}$& \countablecolor $6.0_{\pm 1.3}$ & $29.8_{\pm 3.0}$ & $56.4_{\pm 4.7}$ \\
        PST & \countablecolor$0.7_{\pm0.1}$ & \countablecolor$1.1_{\pm0.1}$ & \countablecolor \countablecolor $1.5_{\pm 0.1}$ & \countablecolor $2.2_{\pm 0.1}$ & \countablecolor $3.3_{\pm 0.3}$ & \countablecolor $0.8_{\pm 0.1}$ & \countablecolor $1.9_{\pm 0.2}$ & \countablecolor $3.1_{\pm 0.3}$ & \countablecolor $4.9_{\pm 0.3}$ & \countablecolor $8.6_{\pm0.5}$& \countablecolor $3.0_{\pm 0.1}$ & \countablecolor $4.0_{\pm 0.7}$ & \countablecolor $9.2_{\pm 0.9}$ \\
        \bottomrule
    \end{tabular}
\vskip -15pt
\end{table*}

\textbf{Equivariant Point Cloud and 3-D Molecule Neural Networks.}~Equivariant point cloud and 3-D molecule tasks share resemblances: both involve unordered sets of 3-D coordinate points as input and require models to produce predictions invariant/equivariant to orthogonal transformations and translations of coordinates. Several works~\citep{SE(3)GroupConvolution,SE(3)GroupConvolution1,SE(3)GroupConvolution2,GemNet} introduce specialized equivariant convolution operators to preserve prediction symmetry, yet are later surpassed by models that learn both invariant and equivariant representations for each point, transmitting these representations between nodes. Notably, certain models~\citep{EGNN,PaiNN,VectorNeuron,GNN-LF} directly utilize vectors mirroring input coordinate changes as equivariant features, while others~\citep{TFN,NequIP,SE(3)-Transformers,LieTransformer,HarmonicNet,3DsteerableGNN} incorporate high-order irreducible representations of the orthogonal group, achieving proven universal expressivity~\citep{TFNuniversality}. Our Point Set Transformer (PST) similarly learns both invariant and equivariant point representations. However, due to the specific conversion of point sets from graphs, PST's architecture varies from existing models. While translation invariance characterizes point clouds and molecules, graph properties are sensitive to coordinate translations in our method. Hence, we adopt inner products of coordinates. Additionally, these prior works center on 3D point spaces, whereas our coordinates exist in high-dimensional space, rendering existing models and theoretical expressivity results based on high-order irreducible representations incompatible with our framework.
\section{Experiments}
In our experiments, we evaluate our model across three dimensions: substructure counting for short-range expressivity, real-world graph property prediction for practical performance, and Long-Range Graph Benchmarks~\citep{LRGB} to assess long-range interactions. Our primary model, Point Set Transformer (PST) with PSRD coordinates derived from the Laplacian matrix, performs well on all tasks. Moreover, our graph-to-set method is adaptable to various configurations. In ablation study (see Appendix~\ref{app::abl}), another set encoders Point Set DeepSet (PSDS, introduced in Appendix~\ref{app::PSDS}), SRD coordinates different from PSRD, and coordinates decomposed from the adjacency matrix and normalized adjacency matrix all demonstrate good performance, highlighting the versatility of our approach. Although PST has higher time complexity compared to existing Graph Transformers and is slower on large graphs, it shows similar scalability to our baselines in real-world graph property prediction datasets (see Appendix~\ref{app::scalability}). Our PST uses fewer or comparable parameters than baselines across all datasets. Dataset details, experiment settings, and hyperparameters are provided in Appendix~\ref{app::data} and \ref{app::exp}.
\subsection{Graph substructure counting}
As \citet{CountSubg} highlight, the ability to count substructures is a crucial metric for assessing expressivity. We evaluate our model's substructure counting capabilities on synthetic graphs following \citet{I2GNN}. The considered substructures include paths of lengths 2 to 6, cycles of lengths 3 to 7, and other substructures like tailed triangles (TT), chordal cycles (CC), and triangle-rectangle (TR). Our task involves predicting the number of paths originating from each node and the cycles and other substructures in which each node participates. We compare our Point Set Transformer (PST) with expressive GNN models, including ID-GNNs~\citep{IDGNN}, NGNNs~\citep{NGNN}, GNNAK+\citep{GNNAK}, I$^2$-GNN\citep{I2GNN}, and PPGN~\citep{PPGN}. Baseline results are from \citet{I2GNN}, where uncertainties are unknown.

Results are in Table~\ref{tab::subgcount}. Following \citet{I2GNN}, a model can count a substructure if its normalized test Mean Absolute Error (MAE) is below $10^{-2}$ (10 units in the table). Remarkably, our PST counts all listed substructures, which aligns with our Theorem~\ref{thm::count_path} and Theorem~\ref{thm::count_cycle}, while the second-best model, I$^2$-GNN, counts only 10 out of 13 substructures. PSDS can also count 8 out of 13 substructures, showcasing the versatility of our graph-to-set method.
\begin{table*}[t]
    \centering
    \caption{MAE ($\downarrow$) on the QM9 dataset. * denotes models with 3D coordinates or features as input. LRP: Deep LRP~\citep{CountSubg}. DF: 2-DRFWL(2) GNN~\citep{DRFWL}. 1GNN: 1-GNN. 123: 1-2-3-GNN~\citep{kWL}.}\label{tab::qm9}
\vskip 5pt
    \begin{small}
    \setlength{\tabcolsep}{1pt}
    \begin{tabular}{lccccccc|ccccc}
    \toprule
        Target & Unit & LRP & NGNN & I$^2$GNN & DF & PSDS & PST & 1GNN* & DTNN* & 123* & PPGN* & PST*  \\ 
    \midrule
        $\mu$ & $10^{-1}$D & 3.64 & 4.28 & 4.28 & \underline{3.46} & $3.53_{\pm 0.05}$ & $\mathbf{3.19_{\pm 0.04}}$ & 4.93 & 2.44 & 4.76 & \underline{2.31} & $\mathbf{0.23_{\pm 0.01}}$  \\ 
        $\alpha$ & $10^{-1}$$a_0^3$ & 2.98 & 2.90 & 2.30 & 2.22 & \underline{$2.05_{\pm 0.02}$} & $\mathbf{1.89_{\pm 0.04}}$ & 7.80 & 9.50 & \underline{2.70} & 3.82 & $\mathbf{0.78_{\pm 0.05}}$  \\ 
        $\varepsilon_{\text{homo}}$ & $10^{-2}$meV&  6.91 & 7.21 & 7.10 & \underline{6.15} & $6.56_{\pm 0.03}$ & $\mathbf{5.98_{\pm 0.09}}$ & 8.73 & 10.56 & 9.17 & \underline{7.51} &$\mathbf{2.98_{\pm 0.08}}$ \\ 
        $\varepsilon_{\text{lumo}}$ & $10^{-2}$meV & 7.54 & 8.08 & 7.27 & \underline{6.12} & $6.31_{\pm 0.05}$ & $\mathbf{5.84_{\pm 0.08}}$ & 9.66 & 13.93 & 9.55 & \underline{7.81} & $\mathbf{2.20_{\pm 0.07}}$  \\ 
        $\Delta\varepsilon$ & $10^{-2}$meV & 9.61 & 10.34 & 10.34 & \underline{8.82} & $9.13_{\pm 0.04}$ & $\mathbf{8.46_{\pm 0.07}}$ & 13.33 & 30.48 & 13.06 & \underline{11.05} & $\mathbf{4.47_{\pm 0.09}}$  \\ 
        $R^2$ & $a_0^2$ & 19.30 & 20.50 & 18.64 & 15.04 & \underline{$14.35_{\pm 0.02}$} & $\mathbf{13.08_{\pm 0.16}}$ & 34.10 & 17.00 & 22.90 & \underline{16.07} & $\mathbf{0.93_{\pm 0.03}}$  \\ 
        ZPVE & $10^{-2}$meV & 1.50 & 0.54 & \underline{0.38} & 0.46 & $0.41_{\pm 0.02}$ & $\mathbf{0.39_{\pm 0.01}}$ & 3.37 & 4.68 & \underline{0.52} & 17.42 & $\mathbf{0.26_{\pm 0.01}}$  \\ 
        $U_0$ & meV & 11.24 & 8.03 & 5.74 & 4.24 & \underline{$3.53_{\pm 0.11}$} & $\mathbf{3.46_{\pm 0.17}}$ & 63.13 & 66.12 & {\bf 1.16} & 6.37 & \underline{$3.33_{\pm 0.19}$}  \\ 
        $U$ & meV & 11.24 & 9.82 & 5.61 & 4.16 & $\mathbf{3.49_{\pm 0.05}}$ & \underline{$3.55_{\pm 0.10}$} & 56.60 & 66.12 & {\bf 3.02} & 6.37 & \underline{$3.26_{\pm 0.05}$}  \\ 
        $H$ & meV & 11.24 & 8.30 & 7.32 & 3.95 & $\mathbf{3.47_{\pm 0.04}}$ & \underline{$3.49_{\pm 0.20}$} & 60.68 & 66.12 & {\bf 1.14} & 6.23 & \underline{$3.29_{\pm 0.21}$}  \\ 
        $G$ & meV & 11.24 & 13.31 & 7.10 & 4.24 & \underline{$3.56_{\pm 0.14}$} & $\mathbf{3.55_{\pm 0.17}}$ & 52.79 & 66.12 & {\bf 1.28} & 6.48 & \underline{$3.25_{\pm 0.15}$}  \\ 
        $C_v$ & $10^{-2}$cal/mol/K & 12.90 & 17.40 & {\bf {7.30}} & 9.01 & $8.35_{\pm 0.09}$ & \underline{$7.77_{\pm 0.15}$} & 27.00 & 243.00 & \underline{9.44} & 18.40 & $\mathbf{3.63_{\pm 0.13}}$  \\ 
    \bottomrule
    \end{tabular}
    \end{small}
\vskip -10pt
\end{table*}

\subsection{Graph properties prediction}

We conduct experiments on four real-world graph datasets: QM9~\citep{QM9}, ZINC, ZINC-full~\citep{ZINC}, and ogbg-molhiv~\citep{OGB}. PST excels in performance, and PSDS performs comparable to GIN~\citep{GIN}. PST also outperforms all baselines on TU datasets~\citep{TU} (see Appendix~\ref{app::exp_TU}).

For the QM9 dataset, we compare PST with various expressive GNNs, including models considering Euclidean distances (1-GNN, 1-2-3-GNN~\citep{kWL}, DTNN~\citep{QM9}, PPGN~\citep{PPGN}) and those focusing solely on graph structure (Deep LRP~\citep{CountSubg}, NGNN~\citep{NGNN}, I$^2$-GNN~\citep{I2GNN}, 2-DRFWL(2) GNN~\citep{DRFWL}). For fair comparsion, we introduce two versions of our model: PST without Euclidean distance (PST) and PST with Euclidean distance (PST*). Results in Table~\ref{tab::qm9} show PST outperforms all baseline models without Euclidean distance on 11 out of 12 targets, with an average 11\% reduction in loss compared to the strongest baseline, 2-DRFWL(2) GNN. PST* outperforms all Euclidean distance-based baselines on 8 out of 12 targets, with an average 4\% reduction in loss compared to the strongest baseline, 1-2-3-GNN. Both models rank second in performance for the remaining targets. PSDS without Euclidean distance also outperforms baselines on 6 out of 12 targets.
\begin{table}[t]
\vskip -7pt
    \caption{Results on graph property prediction tasks.}\label{tab::zinc}
\vskip 7pt
    \centering
    \setlength{\tabcolsep}{1pt}
    \begin{small}
    \begin{tabular}{lccc}
    \toprule
        ~ & zinc & zinc-full & molhiv \\
     ~ & MAE$\downarrow$ & MAE$\downarrow$ & AUC$\uparrow$ \\
      \midrule
        GIN & $0.163_{\pm0.004}$ & $0.088_{\pm0.002}$ & $77.07_{\pm1.49}$ \\ 
        GNN-AK+ & $0.080_{\pm0.001}$ & – & $79.61_{\pm1.19}$ \\ 
        ESAN & $0.102_{\pm0.003}$ & $0.029_{\pm0.003}$ & $78.25_{\pm0.98}$ \\ 
        SUN & $0.083_{\pm0.003}$ & $0.024_{\pm0.003}$ & $80.03_{\pm0.55}$ \\ 
        SSWL & $0.083_{\pm0.003}$ & $\underline{0.022_{\pm0.002}}$ & $79.58_{\pm0.35}$ \\ 
        DRFWL & $0.077_{\pm0.002}$ & $0.025_{\pm0.003}$ & $78.18_{\pm2.19}$ \\ 
        CIN & $0.079_{\pm0.006}$ & $\underline{0.022_{\pm0.002}}$ & $\mathbf{80.94_{\pm0.57}}$ \\ 
        NGNN & $0.111_{\pm0.003}$ & $0.029_{\pm0.001}$ & $78.34_{\pm1.86}$ \\ 
        Graphormer & $0.122_{\pm0.006}$ & $0.052_{\pm0.005}$ & $\underline{80.51_{\pm0.53}}$ \\ 
        GPS & $0.070_{\pm0.004}$ & - & $78.80_{\pm1.01}$ \\ 
        GMLP-Mixer &  $0.077_{\pm0.003}$ & - & $79.97_{\pm1.02}$ \\ 
        SAN & $0.139_{\pm0.006}$ & - & $77.75_{\pm0.61}$ \\ 
        Specformer & $0.066_{\pm 0.003}$ & - & $78.89_{\pm 1.24}$\\
        {SignNet} & $0.084_{\pm 0.006}$ & $0.024_{\pm 0.003}$ & -\\
        {Grit} & $\mathbf{0.059_{\pm 0.002}}$ & $0.024_{\pm 0.003}$ & -\\
        \midrule
        PSDS & $0.162_{\pm 0.007}$ & $0.049_{\pm 0.002}$ & $74.92_{\pm 1.18}$\\
        PST & $\underline{0.063_{\pm 0.003}}$ & $\mathbf{0.018_{\pm0.001}}$ & $80.32_{\pm 0.71}$ \\ 
    \bottomrule
    \end{tabular}
    \end{small}
\vskip -10pt
\end{table}

For ZINC, ZINC-full, and ogbg-molhiv datasets, we have conducted an evaluation of PST and PSDS in comparison to a range of expressive GNNs and graph transformers (GTs). This set of models includes expressive MPNN and subgraph GNNs: GIN~\citep{HowPowerfulAreGNNs}, SUN~\citep{SUN}, SSWL~\citep{SSWL}, 2-DRFWL(2) GNN~\citep{DRFWL}, CIN~\citep{CIN}, NGNN~\citep{NGNN}, and GTs:  Graphormer~\citep{GT-graphormer}, GPS~\citep{GT-GPS}, Graph MLP-Mixer~\citep{GraphVit}, Specformer~\citep{Specformer}, {SignNet~\citep{SignBasis-inv}, and Grit~\citep{Grit}}. 
Performance results for the expressive GNNs are sourced from~\citep{DRFWL}, while those for the Graph Transformers are extracted from~\citep{GraphVit,Grit,SignBasis-inv}. The comprehensive results are presented in Table~\ref{tab::zinc}. Notably, our PST outperforms all baseline models on ZINC-full datasets, achieving reductions in loss of 18\%. On the ogbg-molhiv dataset, our PST also delivers competitive results, with only CIN and Graphormer surpassing it. Overall, PST demonstrates exceptional performance across these four diverse datasets, and PSDS also performs comparable to representative GNNs like GIN~\citep{GIN}.

\subsection{Long Range Graph Benchmark}
To assess the long-range capacity of our Point Set Transformer (PST), we conducted experiments using the Long Range Graph Benchmark~\citep{LRGB}. Following \citet{GraphVit}, we compared our model to a range of baseline models, including GCN~\citep{GCN}, GINE~\citep{GIN}, GatedGCN~\citep{GatedGCN}, SAN~\citep{GT-SAN}, Graphormer~\citep{GT-graphormer}, GMLP-Mixer, Graph ViT~\citep{GraphVit}, and Grit~\citep{Grit}. PST outperforms all baselines on the PascalVOC-SP and Peptides-Func datasets and achieves the third-highest performance on the Peptides-Struct dataset. PSDS consistently outperforms GCN and GINE. These results showcase the remarkable long-range interaction capturing abilities of our methods across various benchmark datasets. Note that a contemporary work~\citep{LRGBfail} points out that even vanilla MPNNs can achieve similar performance to Graph Transformers on LRGB with better hyperparameters, which implies that LRGB is not a rigor benchmark. However, for comparison with previous work, we maintain the original settings on LRGB datasets.
\begin{table}[t]
\vskip -7pt
\centering
\caption{Results on Long Range Graph Benchmark. * means using Random Walk Structural Encoding~\citep{RWSE}, and ** means Laplacian Eigenvector Encoding~\citep{benchmarkingGNN}. }
\vskip 5pt
    \setlength{\tabcolsep}{1pt}
\begin{small}
    \begin{tabular}{lccc}
    \toprule
        Model & PascalVOC-SP & Peptides-Func & Peptides-Struct \\ 
        ~ & F1 score $\uparrow$ & AP $\uparrow$ & MAE $\downarrow$ \\
        \midrule
        GCN & $0.1268_{\pm0.0060}$ & $0.5930_{\pm0.0023}$ & $0.3496_{\pm0.0013}$ \\ 
        GINE & $0.1265_{\pm0.0076}$ & $0.5498_{\pm0.0079}$ & $0.3547_{\pm0.0045}$ \\ 
        GatedGCN & $0.2873_{\pm0.0219}$ & $0.5864_{\pm0.0077}$ & $0.3420_{\pm0.0013}$ \\ 
        GatedGCN* & $0.2860_{\pm0.0085}$ & $0.6069_{\pm0.0035}$ & $0.3357_{\pm0.0006}$ \\ 
        Transformer** & $0.2694_{\pm0.0098}$ & $0.6326_{\pm0.0126}$ & $0.2529_{\pm0.0016}$ \\ 
        SAN* & $0.3216_{\pm0.0027}$ & $0.6439_{\pm0.0075}$ & $0.2545_{\pm0.0012}$ \\ 
        SAN** & $0.3230_{\pm0.0039}$ & $0.6384_{\pm0.0121}$ & $0.2683_{\pm0.0043}$ \\ 
        GraphGPS & $0.3748_{\pm0.0109}$ & $0.6535_{\pm0.0041}$ & $0.2500_{\pm0.0005}$ \\ 
        Exphormer & $0.3975_{\pm0.0037}$ & $0.6527_{\pm0.0043}$ & $0.2481_{\pm0.0007}$ \\ 
        GMLP-Mixer & - & $0.6970_{\pm0.0080}$ & $0.2475_{\pm0.0015}$ \\ 
        Graph ViT & - & $0.6942_{\pm0.0075}$ & $\mathbf{0.2449_{\pm0.0016}}$ \\
        Grit & - & $\mathbf{0.6988_{\pm 0.0082}}$ & $0.2460_{\pm 0.0012}$\\
        \midrule
        PSDS & $0.2134_{\pm0.0050}$ & $0.5965_{\pm 0.0064}$ & $0.2621_{\pm 0.0036}$\\
        PST & $\mathbf{0.4010_{\pm0.0072}}$ &  $\mathbf{0.6984_{\pm0.0051}}$ & $0.2470_{\pm 0.0015}$\\
        \midrule
    \end{tabular}
\end{small}
\vskip -15pt
\end{table}
\section{Conclusion}
We introduce a novel approach employing symmetric rank decomposition to transform interconnected nodes in graph into independent points with coordinates. Additionally, we propose the Point Set Transformer to encode the point set. Our approach demonstrates remarkable theoretical expressivity and excels in real-world performance, addressing both short-range and long-range tasks effectively. It extends the design space of GNN and provides a principled way to inject graph structural information into Transformers.

\section{Limitations}
PST's scalability is still constrained by the Transformer architecture. To overcome this, acceleration techniques such as sparse attention and linear attention could be explored, which will be our future work.

\section*{Impact Statement}

This paper presents work whose goal is to advance the field of graph representation learning and will improve the design of graph generation and prediction models. There are many potential societal consequences of our work, none which we feel must be specifically highlighted here.
\section*{Acknowledgement}
Xiyuan Wang and Muhan Zhang are partially supported by the National Key R\&D Program of China (2022ZD0160300), the National Key R\&D Program of China (2021ZD0114702), the National Natural Science Foundation of China (62276003), and Alibaba Innovative Research Program. Pan Li is supported by the National Science Foundation award IIS-2239565.
\bibliography{example_paper}

\begin{thebibliography}{72}
\providecommand{\natexlab}[1]{#1}
\providecommand{\url}[1]{\texttt{#1}}
\expandafter\ifx\csname urlstyle\endcsname\relax
  \providecommand{\doi}[1]{doi: #1}\else
  \providecommand{\doi}{doi: \begingroup \urlstyle{rm}\Url}\fi

\bibitem[Akiba et~al.(2019)Akiba, Sano, Yanase, Ohta, and Koyama]{optuna}
Akiba, T., Sano, S., Yanase, T., Ohta, T., and Koyama, M.
\newblock Optuna: A next-generation hyperparameter optimization framework.
\newblock In \emph{SIGKDD}, 2019.

\bibitem[Batzner et~al.(2022)Batzner, Musaelian, Sun, Geiger, Mailoa, Kornbluth, Molinari, Smidt, and Kozinsky]{NequIP}
Batzner, S., Musaelian, A., Sun, L., Geiger, M., Mailoa, J.~P., Kornbluth, M., Molinari, N., Smidt, T.~E., and Kozinsky, B.
\newblock E (3)-equivariant graph neural networks for data-efficient and accurate interatomic potentials.
\newblock \emph{Nature communications}, 13\penalty0 (1):\penalty0 1--11, 2022.

\bibitem[Bevilacqua et~al.(2022)Bevilacqua, Frasca, Lim, Srinivasan, Cai, Balamurugan, Bronstein, and Maron]{ESAN}
Bevilacqua, B., Frasca, F., Lim, D., Srinivasan, B., Cai, C., Balamurugan, G., Bronstein, M.~M., and Maron, H.
\newblock Equivariant subgraph aggregation networks.
\newblock In \emph{ICLR}, 2022.

\bibitem[Bo et~al.(2023)Bo, Shi, Wang, and Liao]{Specformer}
Bo, D., Shi, C., Wang, L., and Liao, R.
\newblock Specformer: Spectral graph neural networks meet transformers.
\newblock In \emph{ICLR}, 2023.

\bibitem[Bodnar et~al.(2021)Bodnar, Frasca, Otter, Wang, Li{\`{o}}, Mont{\'{u}}far, and Bronstein]{CIN}
Bodnar, C., Frasca, F., Otter, N., Wang, Y., Li{\`{o}}, P., Mont{\'{u}}far, G.~F., and Bronstein, M.~M.
\newblock Weisfeiler and lehman go cellular: {CW} networks.
\newblock In \emph{NeurIPS}, 2021.

\bibitem[Bouritsas et~al.(2023)Bouritsas, Frasca, Zafeiriou, and Bronstein]{trianglecounting}
Bouritsas, G., Frasca, F., Zafeiriou, S., and Bronstein, M.~M.
\newblock Improving graph neural network expressivity via subgraph isomorphism counting.
\newblock \emph{TPAMI}, 45\penalty0 (1), 2023.

\bibitem[Bresson \& Laurent(2017)Bresson and Laurent]{GatedGCN}
Bresson, X. and Laurent, T.
\newblock Residual gated graph convnets, 2017.

\bibitem[Chen et~al.(2021)Chen, Liu, Chen, Li, and Jr.]{SE(3)GroupConvolution}
Chen, H., Liu, S., Chen, W., Li, H., and Jr., R. W.~H.
\newblock Equivariant point network for 3d point cloud analysis.
\newblock In \emph{CVPR}, 2021.

\bibitem[Chen et~al.(2020)Chen, Chen, Villar, and Bruna]{CountSubg}
Chen, Z., Chen, L., Villar, S., and Bruna, J.
\newblock Can graph neural networks count substructures?
\newblock In \emph{NeurIPS}, 2020.

\bibitem[Cohen et~al.(2018)Cohen, Geiger, K{\"{o}}hler, and Welling]{SE(3)GroupConvolution2}
Cohen, T.~S., Geiger, M., K{\"{o}}hler, J., and Welling, M.
\newblock Spherical cnns.
\newblock In \emph{ICLR}, 2018.

\bibitem[Deng et~al.(2021)Deng, Litany, Duan, Poulenard, Tagliasacchi, and Guibas]{VectorNeuron}
Deng, C., Litany, O., Duan, Y., Poulenard, A., Tagliasacchi, A., and Guibas, L.~J.
\newblock Vector neurons: {A} general framework for so(3)-equivariant networks.
\newblock In \emph{ICCV}, 2021.

\bibitem[Dwivedi \& Bresson(2020)Dwivedi and Bresson]{firstGT}
Dwivedi, V.~P. and Bresson, X.
\newblock A generalization of transformer networks to graphs, 2020.

\bibitem[Dwivedi et~al.(2022{\natexlab{a}})Dwivedi, Luu, Laurent, Bengio, and Bresson]{RWSE}
Dwivedi, V.~P., Luu, A.~T., Laurent, T., Bengio, Y., and Bresson, X.
\newblock Graph neural networks with learnable structural and positional representations.
\newblock In \emph{ICLR}, 2022{\natexlab{a}}.

\bibitem[Dwivedi et~al.(2022{\natexlab{b}})Dwivedi, Ramp{\'{a}}sek, Galkin, Parviz, Wolf, Luu, and Beaini]{LRGB}
Dwivedi, V.~P., Ramp{\'{a}}sek, L., Galkin, M., Parviz, A., Wolf, G., Luu, A.~T., and Beaini, D.
\newblock Long range graph benchmark.
\newblock In \emph{NeurIPS}, 2022{\natexlab{b}}.

\bibitem[Dwivedi et~al.(2023)Dwivedi, Joshi, Luu, Laurent, Bengio, and Bresson]{benchmarkingGNN}
Dwivedi, V.~P., Joshi, C.~K., Luu, A.~T., Laurent, T., Bengio, Y., and Bresson, X.
\newblock Benchmarking graph neural networks.
\newblock \emph{J. Mach. Learn. Res.}, 24:\penalty0 43:1--43:48, 2023.

\bibitem[Dym \& Maron(2021)Dym and Maron]{TFNuniversality}
Dym, N. and Maron, H.
\newblock On the universality of rotation equivariant point cloud networks.
\newblock In \emph{ICLR}, 2021.

\bibitem[Feng et~al.(2022)Feng, Chen, Li, Sarkar, and Zhang]{KPGNN}
Feng, J., Chen, Y., Li, F., Sarkar, A., and Zhang, M.
\newblock How powerful are k-hop message passing graph neural networks.
\newblock In \emph{NeurIPS}, 2022.

\bibitem[Fey \& Lenssen(2019)Fey and Lenssen]{PyG}
Fey, M. and Lenssen, J.~E.
\newblock Fast graph representation learning with pytorch geometric.
\newblock \emph{CoRR}, abs/1903.02428, 2019.

\bibitem[Frasca et~al.(2022)Frasca, Bevilacqua, Bronstein, and Maron]{SUN}
Frasca, F., Bevilacqua, B., Bronstein, M.~M., and Maron, H.
\newblock Understanding and extending subgraph gnns by rethinking their symmetries.
\newblock In \emph{NeurIPS}, 2022.

\bibitem[Fuchs et~al.(2020)Fuchs, Worrall, Fischer, and Welling]{SE(3)-Transformers}
Fuchs, F., Worrall, D., Fischer, V., and Welling, M.
\newblock Se(3)-transformers: 3d roto-translation equivariant attention networks.
\newblock \emph{NeurIPS}, 2020.

\bibitem[F{\"{u}}rer(2017)]{2FWL7cycle}
F{\"{u}}rer, M.
\newblock On the combinatorial power of the weisfeiler-lehman algorithm.
\newblock In \emph{CIAC}, volume 10236, pp.\  260--271, 2017.

\bibitem[Gasteiger et~al.(2021)Gasteiger, Becker, and G\"{u}nnemann]{GemNet}
Gasteiger, J., Becker, F., and G\"{u}nnemann, S.
\newblock Gemnet: Universal directional graph neural networks for molecules.
\newblock In \emph{NeurIPS}, 2021.

\bibitem[Gilmer et~al.(2017)Gilmer, Schoenholz, Riley, Vinyals, and Dahl]{MPNN}
Gilmer, J., Schoenholz, S.~S., Riley, P.~F., Vinyals, O., and Dahl, G.~E.
\newblock Neural message passing for quantum chemistry.
\newblock In \emph{ICML}, 2017.

\bibitem[G{\'{o}}mez{-}Bombarelli et~al.(2016)G{\'{o}}mez{-}Bombarelli, Duvenaud, Hern{\'{a}}ndez{-}Lobato, Aguilera{-}Iparraguirre, Hirzel, Adams, and Aspuru{-}Guzik]{ZINC}
G{\'{o}}mez{-}Bombarelli, R., Duvenaud, D., Hern{\'{a}}ndez{-}Lobato, J.~M., Aguilera{-}Iparraguirre, J., Hirzel, T.~D., Adams, R.~P., and Aspuru{-}Guzik, A.
\newblock Automatic chemical design using a data-driven continuous representation of molecules, 2016.

\bibitem[Hamilton et~al.(2017)Hamilton, Ying, and Leskovec]{SAGE}
Hamilton, W.~L., Ying, Z., and Leskovec, J.
\newblock Inductive representation learning on large graphs.
\newblock In \emph{NeurIPS}, 2017.

\bibitem[He et~al.(2023)He, Hooi, Laurent, Perold, LeCun, and Bresson]{GraphVit}
He, X., Hooi, B., Laurent, T., Perold, A., LeCun, Y., and Bresson, X.
\newblock A generalization of vit/mlp-mixer to graphs.
\newblock In \emph{ICML}, 2023.

\bibitem[Hu et~al.(2020)Hu, Fey, Zitnik, Dong, Ren, Liu, Catasta, and Leskovec]{OGB}
Hu, W., Fey, M., Zitnik, M., Dong, Y., Ren, H., Liu, B., Catasta, M., and Leskovec, J.
\newblock Open graph benchmark: Datasets for machine learning on graphs.
\newblock In \emph{NeurIPS}, 2020.

\bibitem[Huang et~al.(2023{\natexlab{a}})Huang, Lu, Robinson, Yang, Zhang, Jegelka, and Li]{huang2023stability}
Huang, Y., Lu, W., Robinson, J., Yang, Y., Zhang, M., Jegelka, S., and Li, P.
\newblock On the stability of expressive positional encodings for graph neural networks.
\newblock \emph{arXiv preprint arXiv:2310.02579}, 2023{\natexlab{a}}.

\bibitem[Huang et~al.(2023{\natexlab{b}})Huang, Peng, Ma, and Zhang]{I2GNN}
Huang, Y., Peng, X., Ma, J., and Zhang, M.
\newblock Boosting the cycle counting power of graph neural networks with i{\textdollar}{\^{}}2{\textdollar}-gnns.
\newblock In \emph{ICLR}, 2023{\natexlab{b}}.

\bibitem[Huang et~al.(2024)Huang, Lu, Robinson, Yang, Zhang, Jegelka, and Li]{StabPE}
Huang, Y., Lu, W., Robinson, J., Yang, Y., Zhang, M., Jegelka, S., and Li, P.
\newblock On the stability of expressive positional encodings for graph neural networks.
\newblock \emph{ICLR}, 2024.

\bibitem[Hutchinson et~al.(2021)Hutchinson, Lan, Zaidi, Dupont, Teh, and Kim]{LieTransformer}
Hutchinson, M.~J., Lan, C.~L., Zaidi, S., Dupont, E., Teh, Y.~W., and Kim, H.
\newblock Lietransformer: Equivariant self-attention for lie groups.
\newblock In \emph{ICML}, 2021.

\bibitem[Ivanov et~al.(2019)Ivanov, Sviridov, and Burnaev]{TU}
Ivanov, S., Sviridov, S., and Burnaev, E.
\newblock Understanding isomorphism bias in graph data sets.
\newblock \emph{CoRR}, abs/1910.12091, 2019.

\bibitem[Kim et~al.(2021)Kim, Oh, and Hong]{GT-DeepSet}
Kim, J., Oh, S., and Hong, S.
\newblock Transformers generalize deepsets and can be extended to graphs {\&} hypergraphs.
\newblock In \emph{NeurIPS}, 2021.

\bibitem[Kipf \& Welling(2017)Kipf and Welling]{GCN}
Kipf, T.~N. and Welling, M.
\newblock Semi-supervised classification with graph convolutional networks.
\newblock In \emph{ICLR}, 2017.

\bibitem[Kreuzer et~al.(2021)Kreuzer, Beaini, Hamilton, L{\'{e}}tourneau, and Tossou]{GT-SAN}
Kreuzer, D., Beaini, D., Hamilton, W.~L., L{\'{e}}tourneau, V., and Tossou, P.
\newblock Rethinking graph transformers with spectral attention.
\newblock In \emph{NeurIPS}, 2021.

\bibitem[Li et~al.(2020)Li, Wang, Wang, and Leskovec]{DE}
Li, P., Wang, Y., Wang, H., and Leskovec, J.
\newblock Distance encoding: Design provably more powerful neural networks for graph representation learning.
\newblock In \emph{NeurIPS}, 2020.

\bibitem[Lim et~al.(2023)Lim, Robinson, Zhao, Smidt, Sra, Maron, and Jegelka]{SignBasis-inv}
Lim, D., Robinson, J.~D., Zhao, L., Smidt, T.~E., Sra, S., Maron, H., and Jegelka, S.
\newblock Sign and basis invariant networks for spectral graph representation learning.
\newblock In \emph{ICLR}, 2023.

\bibitem[Ma et~al.(2023)Ma, Lin, Lim, Romero{-}Soriano, Dokania, Coates, Torr, and Lim]{Grit}
Ma, L., Lin, C., Lim, D., Romero{-}Soriano, A., Dokania, P.~K., Coates, M., Torr, P. H.~S., and Lim, S.
\newblock Graph inductive biases in transformers without message passing.
\newblock In \emph{ICML}, 2023.

\bibitem[Maron et~al.(2019{\natexlab{a}})Maron, Ben{-}Hamu, Serviansky, and Lipman]{PPGN}
Maron, H., Ben{-}Hamu, H., Serviansky, H., and Lipman, Y.
\newblock Provably powerful graph networks.
\newblock In \emph{NeurIPS}, 2019{\natexlab{a}}.

\bibitem[Maron et~al.(2019{\natexlab{b}})Maron, Ben{-}Hamu, Shamir, and Lipman]{IGN}
Maron, H., Ben{-}Hamu, H., Shamir, N., and Lipman, Y.
\newblock Invariant and equivariant graph networks.
\newblock In \emph{IGN}, 2019{\natexlab{b}}.

\bibitem[Mialon et~al.(2021)Mialon, Chen, Selosse, and Mairal]{GT-graphit}
Mialon, G., Chen, D., Selosse, M., and Mairal, J.
\newblock Graphit: Encoding graph structure in transformers, 2021.

\bibitem[Morris et~al.(2019)Morris, Ritzert, Fey, Hamilton, Lenssen, Rattan, and Grohe]{kWL}
Morris, C., Ritzert, M., Fey, M., Hamilton, W.~L., Lenssen, J.~E., Rattan, G., and Grohe, M.
\newblock Weisfeiler and leman go neural: Higher-order graph neural networks.
\newblock In \emph{AAAI}, 2019.

\bibitem[Paszke et~al.(2019)Paszke, Gross, Massa, Lerer, Bradbury, Chanan, Killeen, Lin, Gimelshein, Antiga, Desmaison, K{\"{o}}pf, Yang, DeVito, Raison, Tejani, Chilamkurthy, Steiner, Fang, Bai, and Chintala]{Pytorch}
Paszke, A., Gross, S., Massa, F., Lerer, A., Bradbury, J., Chanan, G., Killeen, T., Lin, Z., Gimelshein, N., Antiga, L., Desmaison, A., K{\"{o}}pf, A., Yang, E.~Z., DeVito, Z., Raison, M., Tejani, A., Chilamkurthy, S., Steiner, B., Fang, L., Bai, J., and Chintala, S.
\newblock Pytorch: An imperative style, high-performance deep learning library.
\newblock In \emph{NeurIPS}, pp.\  8024--8035, 2019.

\bibitem[Perepechko \& Voropaev(2009)Perepechko and Voropaev]{PathCount}
Perepechko, S. and Voropaev, A.
\newblock The number of fixed length cycles in an undirected graph. explicit formulae in case of small lengths.
\newblock \emph{MMCP}, 148, 2009.

\bibitem[Puntanen et~al.(2011)Puntanen, Styan, and Isotalo]{puntanen2011matrix}
Puntanen, S., Styan, G.~P., and Isotalo, J.
\newblock \emph{Matrix tricks for linear statistical models: our personal top twenty}.
\newblock Springer, 2011.

\bibitem[Qian et~al.(2022)Qian, Rattan, Geerts, Niepert, and Morris]{OSAN}
Qian, C., Rattan, G., Geerts, F., Niepert, M., and Morris, C.
\newblock Ordered subgraph aggregation networks.
\newblock In \emph{NeurIPS}, 2022.

\bibitem[Ramp{\'{a}}sek et~al.(2022)Ramp{\'{a}}sek, Galkin, Dwivedi, Luu, Wolf, and Beaini]{GT-GPS}
Ramp{\'{a}}sek, L., Galkin, M., Dwivedi, V.~P., Luu, A.~T., Wolf, G., and Beaini, D.
\newblock Recipe for a general, powerful, scalable graph transformer.
\newblock In \emph{NeurIPS}, 2022.

\bibitem[Satorras et~al.(2021)Satorras, Hoogeboom, and Welling]{EGNN}
Satorras, V.~G., Hoogeboom, E., and Welling, M.
\newblock E (n) equivariant graph neural networks.
\newblock In \emph{ICML}, 2021.

\bibitem[Sch{\"u}tt et~al.(2021)Sch{\"u}tt, Unke, and Gastegger]{PaiNN}
Sch{\"u}tt, K., Unke, O., and Gastegger, M.
\newblock Equivariant message passing for the prediction of tensorial properties and molecular spectra.
\newblock In \emph{ICML}, 2021.

\bibitem[Segol \& Lipman(2020)Segol and Lipman]{DeepSet}
Segol, N. and Lipman, Y.
\newblock On universal equivariant set networks.
\newblock In \emph{ICLR}, 2020.

\bibitem[Shervashidze et~al.(2011)Shervashidze, Schweitzer, van Leeuwen, Mehlhorn, and Borgwardt]{WLsubtreeKernel}
Shervashidze, N., Schweitzer, P., van Leeuwen, E.~J., Mehlhorn, K., and Borgwardt, K.~M.
\newblock Weisfeiler-lehman graph kernels.
\newblock \emph{JMLR}, 2011.

\bibitem[Shirzad et~al.(2023)Shirzad, Velingker, Venkatachalam, Sutherland, and Sinop]{GT-Exphormer}
Shirzad, H., Velingker, A., Venkatachalam, B., Sutherland, D.~J., and Sinop, A.~K.
\newblock Exphormer: Sparse transformers for graphs.
\newblock In \emph{ICML}, 2023.

\bibitem[Thomas et~al.(2018)Thomas, Smidt, Kearnes, Yang, Li, Kohlhoff, and Riley]{TFN}
Thomas, N., Smidt, T., Kearnes, S., Yang, L., Li, L., Kohlhoff, K., and Riley, P.
\newblock Tensor field networks: Rotation-and translation-equivariant neural networks for 3d point clouds.
\newblock \emph{arXiv preprint arXiv:1802.08219}, 2018.

\bibitem[T{\"{o}}nshoff et~al.(2023)T{\"{o}}nshoff, Ritzert, Rosenbluth, and Grohe]{LRGBfail}
T{\"{o}}nshoff, J., Ritzert, M., Rosenbluth, E., and Grohe, M.
\newblock Where did the gap go? reassessing the long-range graph benchmark.
\newblock \emph{CoRR}, abs/2309.00367, 2023.

\bibitem[Wang et~al.(2022)Wang, Yin, Zhang, and Li]{EquiStableEnc}
Wang, H., Yin, H., Zhang, M., and Li, P.
\newblock Equivariant and stable positional encoding for more powerful graph neural networks.
\newblock In \emph{ICLR}, 2022.

\bibitem[Wang \& Zhang(2022)Wang and Zhang]{GNN-LF}
Wang, X. and Zhang, M.
\newblock Graph neural network with local frame for molecular potential energy surface.
\newblock \emph{LoG}, 2022.

\bibitem[Weiler et~al.(2018)Weiler, Geiger, Welling, Boomsma, and Cohen]{3DsteerableGNN}
Weiler, M., Geiger, M., Welling, M., Boomsma, W., and Cohen, T.
\newblock 3d steerable cnns: Learning rotationally equivariant features in volumetric data.
\newblock In \emph{NeurIPS}, 2018.

\bibitem[Wijesinghe \& Wang(2022)Wijesinghe and Wang]{GraphSNN}
Wijesinghe, A. and Wang, Q.
\newblock A new perspective on "how graph neural networks go beyond weisfeiler-lehman?".
\newblock In \emph{ICLR}, 2022.

\bibitem[Winkels \& Cohen(2018)Winkels and Cohen]{SE(3)GroupConvolution1}
Winkels, M. and Cohen, T.~S.
\newblock 3d g-cnns for pulmonary nodule detection, 2018.

\bibitem[Worrall et~al.(2017)Worrall, Garbin, Turmukhambetov, and Brostow]{HarmonicNet}
Worrall, D.~E., Garbin, S.~J., Turmukhambetov, D., and Brostow, G.~J.
\newblock Harmonic networks: Deep translation and rotation equivariance.
\newblock In \emph{CVPR}, 2017.

\bibitem[Wu et~al.(2017)Wu, Ramsundar, Feinberg, Gomes, Geniesse, Pappu, Leswing, and Pande]{QM9}
Wu, Z., Ramsundar, B., Feinberg, E.~N., Gomes, J., Geniesse, C., Pappu, A.~S., Leswing, K., and Pande, V.~S.
\newblock Moleculenet: {A} benchmark for molecular machine learning, 2017.

\bibitem[Wu et~al.(2021)Wu, Jain, Wright, Mirhoseini, Gonzalez, and Stoica]{GT-graphtrans}
Wu, Z., Jain, P., Wright, M.~A., Mirhoseini, A., Gonzalez, J.~E., and Stoica, I.
\newblock Representing long-range context for graph neural networks with global attention.
\newblock In \emph{NeurIPS}, 2021.

\bibitem[Xu et~al.(2019{\natexlab{a}})Xu, Hu, Leskovec, and Jegelka]{GIN}
Xu, K., Hu, W., Leskovec, J., and Jegelka, S.
\newblock How powerful are graph neural networks?
\newblock In \emph{ICLR}, 2019{\natexlab{a}}.

\bibitem[Xu et~al.(2019{\natexlab{b}})Xu, Hu, Leskovec, and Jegelka]{HowPowerfulAreGNNs}
Xu, K., Hu, W., Leskovec, J., and Jegelka, S.
\newblock How powerful are graph neural networks?
\newblock In \emph{ICLR}, 2019{\natexlab{b}}.

\bibitem[Ying et~al.(2021)Ying, Cai, Luo, Zheng, Ke, He, Shen, and Liu]{GT-graphormer}
Ying, C., Cai, T., Luo, S., Zheng, S., Ke, G., He, D., Shen, Y., and Liu, T.
\newblock Do transformers really perform badly for graph representation?
\newblock In \emph{NeurIPS}, 2021.

\bibitem[You et~al.(2021)You, Selman, Ying, and Leskovec]{IDGNN}
You, J., Selman, J. M.~G., Ying, R., and Leskovec, J.
\newblock Identity-aware graph neural networks.
\newblock In \emph{AAAI}, 2021.

\bibitem[Zhang et~al.(2023{\natexlab{a}})Zhang, Feng, Du, He, and Wang]{SSWL}
Zhang, B., Feng, G., Du, Y., He, D., and Wang, L.
\newblock A complete expressiveness hierarchy for subgraph gnns via subgraph weisfeiler-lehman tests.
\newblock In \emph{ICML}, 2023{\natexlab{a}}.

\bibitem[Zhang et~al.(2023{\natexlab{b}})Zhang, Luo, Wang, and He]{biconnectivity}
Zhang, B., Luo, S., Wang, L., and He, D.
\newblock Rethinking the expressive power of gnns via graph biconnectivity.
\newblock In \emph{ICLR}, 2023{\natexlab{b}}.

\bibitem[Zhang \& Li(2021)Zhang and Li]{NGNN}
Zhang, M. and Li, P.
\newblock Nested graph neural networks.
\newblock In \emph{NeurIPS}, 2021.

\bibitem[Zhang et~al.(2018)Zhang, Cui, Neumann, and Chen]{DGCNN}
Zhang, M., Cui, Z., Neumann, M., and Chen, Y.
\newblock An end-to-end deep learning architecture for graph classification.
\newblock In \emph{AAAI}, 2018.

\bibitem[Zhao et~al.(2022)Zhao, Jin, Akoglu, and Shah]{GNNAK}
Zhao, L., Jin, W., Akoglu, L., and Shah, N.
\newblock From stars to subgraphs: Uplifting any {GNN} with local structure awareness.
\newblock In \emph{ICLR}, 2022.

\bibitem[Zhou et~al.(2023)Zhou, Feng, Wang, and Zhang]{DRFWL}
Zhou, J., Feng, J., Wang, X., and Zhang, M.
\newblock Distance-restricted folklore weisfeiler-leman gnns with provable cycle counting power, 2023.

\end{thebibliography}
\bibliographystyle{icml2024}

\newpage
\appendix
\onecolumn

\section{Proof}\label{app::proof}

\subsection{Proof of Proposition~\ref{prop::SRD}}\label{app::proof::srd}
The matrices $Q_1^T Q_1$ and $Q_2^T Q_2$ in $\mathbb{R}^{r \times r}$ are full rank and thus invertible. This allows us to derive the following equations:
\begin{equation}
L = Q_1 Q_1^T = Q_2 Q_2^T
\end{equation}
\begin{align}
Q_1 Q_1^T = Q_2 Q_2^T \Rightarrow Q_1^T Q_1 Q_1^T = Q_1^T Q_2 Q_2^T \\\Rightarrow Q_1^T = (Q_1^T Q_1)^{-1} Q_1^T Q_2 Q_2^T \\
\Rightarrow \exists R \in \mathbb{R}^{m \times m}, Q_1^T = R Q_2^T\\ \Rightarrow \exists R \in \mathbb{R}^{m \times m}, Q_1 = Q_2 R
\end{align}
\begin{align}
Q_1 Q_1^T = Q_2 R R^T Q_2^T = Q_2 Q_2^T \Rightarrow Q_2^T Q_2 R R^T Q_2^T Q_2 = Q_2^T Q_2 Q_2^T Q_2 \\
\Rightarrow RR^T = (Q_2^T Q_2)^{-1} Q_2^T Q_2 Q_2^T Q_2 (Q_2^T Q_2)^{-1} = I
\end{align}

Since $R$ is orthogonal, any two full rank $Q$ matrices are connected by an orthogonal transformation. Furthermore, if there exists an orthogonal matrix $R$ where $R R^T = I$, then $Q_1 = Q_2 R$, and $L = Q_1 Q_1^T = Q_2 R R^T Q_2^T = Q_2 Q_2^T$.

\subsection{Matrix D+A is Always Positive Semi-Definite}\label{app::proof::d+a}
$\forall x\in \sR^n$,
\begin{align}
    x^T(D+A)x&=\sum_{(i,j)\in E} x_ix_j + \sum_{i\in V}(\sum_{j\in V}A_{ij})x_i^2 \\
    &=\sum_{(i,j)\in E} x_ix_j + \frac{1}{2}\sum_{(i,j)\in E}x_i^2+\frac{1}{2}\sum_{(i,j)\in E}x_j^2\\
    &=\frac{1}{2}\sum_{(i,j)\in E} (x_i+x_j)^2\ge 0
\end{align}
Therefore, $D+A$ is always positive semi-definite.

\subsection{Proof of Theorem~\ref{thm::g2s}}
We restate the theorem here:
\begin{theorem}
Given two graphs $\mathcal{G}=(V, A,X)$ and $\mathcal{G'}=(V',A',X')$ with degree matrices $D$ and $D'$, respectively, the two graphs are isomorphic ($\mathcal{G}\simeq \mathcal{G'}$) if and only if $\exists R\in O(r), \msl (X_{v}, RQ_{v})|\forall v \in V\msr = \msl (X_{v}, Q_{v}')|v\in V'\msr$, where $r$ denotes the rank of matrix $Q$, and $Q$ and $Q'$ are the symmetric rank decompositions of $D+A$ and $D'+A'$ respectively.
\end{theorem}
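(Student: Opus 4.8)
The plan is to prove both directions of the equivalence by translating between the multiset/row language of the statement and ordinary matrix language, using Proposition~\ref{prop::SRD} as the bridge, and exploiting the fact that a simple graph's adjacency matrix has zero diagonal.

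For the forward direction I would start from an isomorphism $\pi$ with permutation matrix $P$, so that $PAP^T=A'$ and $PX=X'$. Since isomorphisms preserve node degrees, $PDP^T=D'$, hence $P(D+A)P^T=D'+A'$. Then $QQ^T=D+A$ gives $(PQ)(PQ)^T=D'+A'$ with $\mathrm{rank}(PQ)=\mathrm{rank}(Q)=r$, so $PQ$ is an SRD of $D'+A'$ (this incidentally forces $\mathrm{rank}(Q')=r$, which is what makes the statement's single $r$ and single $O(r)$ well posed). Applying Proposition~\ref{prop::SRD} to the two SRDs $PQ$ and $Q'$ of $D'+A'$ yields $R\in O(r)$ with $PQ=Q'R$, i.e.\ $Q'=PQR^T$; reading off the $\pi(v)$-th row gives $Q'_{\pi(v)}=RQ_v$ for every $v\in V$. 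Together with $X'_{\pi(v)}=X_v$ this gives $(X_v,RQ_v)=(X'_{\pi(v)},Q'_{\pi(v)})$, and letting $v$ range over $V$ (so $\pi(v)$ ranges over $V'$) yields the desired multiset identity.

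For the backward direction I would use the assumed multiset equality to produce a bijection $\pi\colon V\to V'$ (in particular $|V|=|V'|$) with $X_v=X'_{\pi(v)}$ and $RQ_v=Q'_{\pi(v)}$ for all $v$. Then, using $QQ^T=D+A$, $Q'Q'^T=D'+A'$, and $R^TR=I$,
\[(D'+A')_{\pi(u)\pi(v)}=(Q'_{\pi(u)})^T Q'_{\pi(v)}=(RQ_u)^T(RQ_v)=Q_u^TQ_v=(D+A)_{uv}\]
for all $u,v\in V$, i.e.\ $\pi(D+A)=D'+A'$. Since the graphs are simple, $A$ and $A'$ vanish on the diagonal, so for $u\ne v$ we have $(D+A)_{uv}=A_{uv}$ and $(D'+A')_{\pi(u)\pi(v)}=A'_{\pi(u)\pi(v)}$; restricting the identity above to off-diagonal entries gives $A_{uv}=A'_{\pi(u)\pi(v)}$, i.e.\ $\pi(A)=A'$. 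Combined with $\pi(X)=X'$, $\pi$ is an isomorphism, so $\mathcal{G}\simeq\mathcal{G}'$.

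I expect the main obstacle to be nothing deep but rather getting two bookkeeping points exactly right: (i) the direction of the orthogonal factor in Proposition~\ref{prop::SRD} and the row-versus-column conventions when passing from $PQ=Q'R$ to the per-node statement $Q'_{\pi(v)}=RQ_v$ (a misplaced transpose would break the step $(RQ_u)^T(RQ_v)=Q_u^TQ_v$); and (ii) the clean separation of $D$ from $A$ inside $D+A$, which is exactly the observation that a simple graph's adjacency matrix has zero diagonal, so that matching the matrices $D+A$ and $D'+A'$ up to a permutation is equivalent to matching the graphs themselves. The degree-preservation fact used in the forward direction and the consequent agreement of all ranks should also be recorded explicitly, since the statement tacitly relies on them.
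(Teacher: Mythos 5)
Your proof is correct and follows essentially the same route as the paper's: both directions rest on Proposition~\ref{prop::SRD}, the forward direction permutes $D+A$ and applies SRD uniqueness, and the backward direction reconstructs $\pi(D+A)=D'+A'$ from inner products of rows and then recovers $A$. The only cosmetic difference is how $A$ is extracted from $D+A$ at the end---you use the zero diagonal of a simple graph's adjacency matrix, while the paper uses the identity $A=(D+A)-\tfrac12\,\mathrm{diag}\bigl((D+A)\vec{1}\bigr)$; the two are equivalent, and your explicit notes about degree preservation and the agreement of ranks are sound (and tacitly assumed in the paper).
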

\begin{proof}
Two graphs are isomorphic $\Leftrightarrow$ $\exists \pi \in \Pi_n$, $\pi(A)=A'$ and $\pi(X)=X'$. 

Now we prove that $\exists \pi \in \Pi_n$, $\pi(A)=A'$ and $\pi(X)=X$ $\Leftrightarrow$ $\exists R\in O(r), \msl (X_{v}, RQ_{v})|v \in V\msr = \msl (X_{v}, Q_{v}')|v\in V'\msr$.

When  $\exists \pi \in \Pi_n$, $\pi(A)=A'$ and $\pi(X)=X'$, as 
\begin{equation}
    \pi(Q)\pi(Q)^T = \pi(A+D)=A'+D'=Q'{Q'}^T,
\end{equation}
according to Proposition~\ref{prop::SRD}, $\exists R\in O(r), \pi(Q)R^T=Q'$. Moreover, $\pi(X)=X'$, so
\begin{equation}
    \msl (X_{v}, RQ_{v})|v \in V\msr = \msl (X_{v}', Q_{v}')|v\in V'\msr
\end{equation}

When  $\exists R\in O(r), \msl (X_{v}, RQ_{v})|v \in V\msr = \msl (X_{v}', Q_{v}')|v\in V'\msr$, there exists permutation $\pi\in \Pi_n$, $\pi(X)=X', \pi(Q)R^T=Q'$. Therefore, 
\begin{equation}
    \pi(A+D)=\pi(Q)\pi(Q)^T= \pi(Q)R^TR\pi(Q)^T=Q'{Q'}^T=A'+D'
\end{equation}
\end{proof}

As $A=D+A-\frac{1}{2}\text{diag}((D+A)\vec 1)$, $A'=D+A-\frac{1}{2}\text{diag}((D+A)\vec 1)$, where $\vec 1\in \sR^n$ is an vector with all elements $=1$. 
\begin{equation}
    \pi(A)=A'
\end{equation}

\subsection{Proof of Theorem~\ref{thm::learng2s}}
Now we restate the theorem.
\begin{theorem}
    Given two graphs $\gG=(V, A,X)$ and $\gG=(V', A',X')$, and an injective permutation-equivariant function $Z$ mapping adjacency matrix to a symmetric matrix: (1) For all permutation-equivariant function $f$, if $\gG\simeq \gG'$, then the two sets of PSRD coordinates are equal up to an orthogonal transformation, i.e., $\exists R\in O(r), \msl X_{v}, R\mathcal{Q}(\mathcal{Z}(A), f)_{v}|v\in V\msr= \msl X_{v}', \mathcal{Q}(\mathcal{Z}(A'), f)_{v}|v\in V'\msr$, where $r$ is the rank of $A$, $\mathcal{Q}, \mathcal{Q}'$ are the PSRD coordinates of $A$ and $A'$ respectively. (2) There exists a continuous permutation-equivariant function $f: \sR^r\to \sR^{r\times 2}$, such that $\gG\simeq \gG'$ if $\exists R\in O(r), \forall i=1,2,...,d, \msl (X_{v}, R\mathcal{Q}(\mathcal{Z}(A), f_1)_v, R\mathcal{Q}(\mathcal{Z}(A), f_2)_v)|v\in V\msr= \msl (X_{v}', \mathcal{Q}(\mathcal{Z}(A'), f_1)_v, \mathcal{Q}(\mathcal{Z}(A'), f_2)_v)|v\in V'\msr$, where $f_1: \sR^r\to \sR^r$ and $f_2: \sR^r\to \sR^r$ are two output channels of $f$.
\end{theorem}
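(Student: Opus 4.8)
The plan is to treat the two bullets separately. The first (``isomorphic $\Rightarrow$ point sets equal up to $O(r)$'') is a direct computation built on equivariance of $\mathcal{Z}$ and of $f$; the second (``equal up to $O(r)$ $\Rightarrow$ isomorphic, for a suitable $f$'') requires exhibiting an explicit $f$ from which the point set reconstructs $\mathcal{Z}(A)$, after which injectivity of $\mathcal{Z}$ closes the argument.

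For the first bullet, I would first record a small lemma explaining why permutation-equivariance of $f$ is the right hypothesis: if two eigenvalues coincide, $\Lambda_a=\Lambda_b$, then applying equivariance to the transposition $(a\,b)$ (which fixes $\Lambda$) forces $f_i(\Lambda)_a=f_i(\Lambda)_b$, so each $f_i$ is constant on every eigenvalue-multiplicity class. Hence $\text{diag}(f_i(\Lambda))$ is a fixed spectral function of $\text{diag}(\Lambda)$, and for any two eigendecompositions $\mathcal{Z}(A)=U\text{diag}(\Lambda)U^{T}=\tilde U\text{diag}(\tilde\Lambda)\tilde U^{T}$ (necessarily $\tilde U=UR_{0}$ with $R_{0}\in O(r)$ conjugating $\text{diag}(\tilde\Lambda)$ to $\text{diag}(\Lambda)$) one gets $\tilde U\text{diag}(f_i(\tilde\Lambda))=U\text{diag}(f_i(\Lambda))R_{0}$ with the \emph{same} $R_{0}$ in every channel $i$; i.e.\ $\mathcal{Q}(\mathcal{Z}(A),f)$ is well defined up to one common right orthogonal factor, so the statement is meaningful. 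Now suppose $\mathcal{G}\simeq\mathcal{G}'$ via $\pi$. Permutation-equivariance of $\mathcal{Z}$ gives $\mathcal{Z}(A')=\pi(\mathcal{Z}(A))$, so if $(U,\Lambda)$ is an eigendecomposition of $\mathcal{Z}(A)$ then $(\pi(U),\Lambda)$ is one of $\mathcal{Z}(A')$; with these choices $\mathcal{Q}(\mathcal{Z}(A'),f)_{\pi(v)}=\mathcal{Q}(\mathcal{Z}(A),f)_{v}$, and together with $X'_{\pi(v)}=X_v$ this gives the claimed multiset equality (with $R=I$, hence with any $R$ by the well-definedness above).

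For the second bullet, I would take $d=2$ and set $f_1(\Lambda)=\Lambda$ (the identity, defined uniformly in $r$) and $f_2(\Lambda)=\bm{1}$ (the constant all-ones vector); both are continuous and permutation-equivariant, and for $d>2$ one simply appends extra channels. Then the channel coordinates are $\mathcal{Q}(\mathcal{Z}(A),f_1)=U\text{diag}(\Lambda)$ and $\mathcal{Q}(\mathcal{Z}(A),f_2)=U$, so for all $i,j$ the cross inner product satisfies $\langle \mathcal{Q}(\mathcal{Z}(A),f_1)_i,\mathcal{Q}(\mathcal{Z}(A),f_2)_j\rangle=U_i^{T}\text{diag}(\Lambda)U_j=(U\text{diag}(\Lambda)U^{T})_{ij}=\mathcal{Z}(A)_{ij}$, where the last equality uses that the discarded eigenvalues are exactly the zero ones, so nothing is lost. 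Given the hypothesis, fix a bijection $\phi:V\to V'$ realizing the multiset equality and the orthogonal $R\in O(r)$; since $R$ cancels inside inner products, matching the two triples forces $X'_{\phi(i)}=X_i$ and $\mathcal{Z}(A')_{\phi(i)\phi(j)}=\mathcal{Z}(A)_{ij}$ for all $i,j$. Writing $\pi$ for $\phi$ under the identification $V=V'=\{1,\dots,n\}$, this is precisely $\pi(X)=X'$ and $\pi(\mathcal{Z}(A))=\mathcal{Z}(A')$. Finally, equivariance of $\mathcal{Z}$ gives $\mathcal{Z}(\pi(A))=\pi(\mathcal{Z}(A))=\mathcal{Z}(A')$, and injectivity of $\mathcal{Z}$ then yields $\pi(A)=A'$; with $\pi(X)=X'$ this is exactly $\mathcal{G}\simeq\mathcal{G}'$.

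I do not expect a hard analytic estimate anywhere; the real work is symmetry bookkeeping. The two delicate points are: (i) the ``constant on multiplicity classes'' lemma and the deduction that $\mathcal{Q}$ is well defined up to a single common orthogonal factor across channels (without this the statements are not even well posed for graphs with repeated eigenvalues); and (ii) converting the multiset-level bijection $\phi$, together with its orthogonal factor $R$, into a genuine vertex permutation $\pi$ with $\pi(\mathcal{Z}(A))=\mathcal{Z}(A')$ before invoking injectivity of $\mathcal{Z}$. The conceptual crux of the second bullet is recognizing that $d\ge 2$ is genuinely needed: a single channel $f_1$ only ever reveals $\text{diag}(f_1(\Lambda)^{2})$ through self inner products, whereas the cross inner product of an ``identity'' channel with a ``constant'' channel reconstructs $\mathcal{Z}(A)$ itself.
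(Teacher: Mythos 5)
Your proof is correct and follows the same basic architecture as the paper's: part (1) reduces to the observation that a permutation-equivariant $f$ is constant on eigenvalue-multiplicity classes, so $\text{diag}(f_i(\Lambda))$ commutes with the block-diagonal orthogonal ambiguity in the eigenvector matrix; part (2) picks two channels $f_1,f_2$ with $f_1(\lambda)f_2(\lambda)=\lambda$ pointwise, so the cross inner product of the two coordinate channels reconstructs $\mathcal{Z}(A)$, after which injectivity of $\mathcal{Z}$ closes the argument.

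The one genuine (if small) difference is your witness in part (2): you take $f_1=\mathrm{id}$, $f_2=\bm{1}$, whereas the paper uses $f_1(\lambda)=\sqrt{|\lambda|}$ and $f_2(\lambda)=\mathrm{sgn}(\lambda)\sqrt{|\lambda|}$. Both pairs are continuous, permutation-equivariant, multiply to $\lambda$, and yield full-column-rank coordinates (since only nonzero eigenvalues are retained), so they are interchangeable for the proof. Yours is slightly more elementary; the paper's choice is ``balanced'' (both channels scale like $\sqrt{|\lambda|}$, which is also what plain SRD of $D+A$ does, and what one might prefer numerically), but nothing in the argument requires that. You are also somewhat more explicit than the paper in spelling out that $\mathcal{Q}(\mathcal{Z}(A),f)$ is well defined only up to a common right orthogonal factor across channels; the paper handles this implicitly by fixing a sorted-eigenvalue convention and writing the remaining ambiguity as a block-diagonal $V\in O(r)$. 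Both handle the multiset-to-permutation step in part (2) the same way: extract the bijection $\phi$, use that $R$ cancels in inner products, then invoke equivariance and injectivity of $\mathcal{Z}$.
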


\begin{proof}
First, as $\mathcal{Z}$ is an injective permutation equivariant function, forall permutation $\pi\in \Pi_n$
\begin{equation}
    \pi(\mathcal{Z}(A))=\mathcal{Z}(A')\Leftrightarrow \mathcal{Z}(\pi(A))=\mathcal{Z}(A')\Leftrightarrow \pi(A)=A'.
\end{equation}
Therefore, two matrix are isomorphic $\Leftrightarrow $ $\exists \pi\in \Pi_n, \pi(X)=X', \pi(Z)=Z'$, where $Z, Z'$ denote $Z(A), Z(A')$ respectively. 

In this proof, we denote eigendecomposition as $Z=U\text{diag}(\Lambda)U^T$ and $Z'=U'\text{diag}(\Lambda'){U'}^T$, where elements in $\Lambda$ and $\Lambda'$ are sorted in ascending order. Let the multiplicity of eigenvalues in $Z$ be $r_1, r_2,...,r_l$, corresponding to eigenvalues $\lambda_1, \lambda_2,...,\lambda_i$.

(1) If $\gG\simeq \gG'$, there exists a permutation $\pi\in \Pi_n$, $\pi(X)=X', \pi(Z)=Z'$. 
\begin{equation}
    \pi(Z)=Z'\Rightarrow Z'=\pi(U)\text{diag}(\Lambda)\pi(U)^T=U'\text{diag}(\Lambda'){U'}^T.
\end{equation}
$\pi(U)\text{diag}(\Lambda)\pi(U)^T$ is also an eigendecomposition of $Z'$, so $\Lambda=\Lambda'$ as they are both sorted in ascending order. Moreover, since $\pi(U), U'$ are both matrices of eigenvectors, they can differ only in the choice of bases in each eigensubspace. So there exists a block diagonal matrix $V$ with orthogonal matrix $V_1\in O(r_1), V_2\in O(r_2),..., V_l\in O(r_l)$ as diagonal blocks that $\pi(U)V=U'$.

As $f$ is a permutation equivariant function, 
\begin{align}
    \Lambda_i=\Lambda_j&\Rightarrow \exists \pi\in \Pi_r, \pi(i)=j, \pi(j=i), \pi(\Lambda)=\Lambda\\ 
    &\Rightarrow  \exists \pi\in \Pi_r, \pi(i)=j, \pi(j=i), \pi(f(\Lambda))=f(\pi(\Lambda))=f(\Lambda)\\
    &\Rightarrow f(\Lambda)_i=f(\Lambda)_j
\end{align}
Therefore, $f$ will produce the same value on positions with the same eigenvalue. Therefore, $f$ can be consider as a block diagonal matrix with $f_1 I_{r_1}, f_2I_{r_2},...,f_l I_{r_l}$ as diagonal blocks, where $f_i\in \sR$ is $f(\Lambda)_{p_i}$, $p_i$ is a position that $\Lambda_{p_i}=\lambda_i$, and $I_{r}$ is an identity matrix $\in \sR^{r\times r}$. 

Therefore, 
\begin{equation}
    V\text{diag}(f(\Lambda)) = \text{diag}(f_1V_1, f_2V_2,..., f_lV_l)=\text{diag}(f(\Lambda))V.
\end{equation}
Therefore,
\begin{align}
\pi(Q(Z, f))V&=\pi(U)\text{diag}(f(\Lambda))V\\
&=\pi(U)V\text{diag}(f(\Lambda))\\
&=U'\text{diag}(f(\Lambda'))\\
&=Q(Z', f)
\end{align}

As $VV^T=I, V\in O(r)$, 
\begin{equation}
\exists R\in O(r), \msl X_{v}, R\mathcal{Q}(\mathcal{Z}(A), f)_{v}|v\in V\msr= \msl X_{v}', \mathcal{Q}(\mathcal{Z}(A'), f)_{v}|v\in V'\msr
\end{equation}
(2)
We simply define $f_1$ is element-wise abstract value and square root $\sqrt{|.|}$, $f_1$ is element-wise abstract value and square root multiplied with its sign $sgn(|.|)\sqrt{|.|}$. Therefore, $f_1,f_2$ are continuous and permutation equivariant.

if $\exists R\in O(r)$, 

$\msl X_{v}, R\mathcal{Q}(Z(A), f_1)_v, R\mathcal{Q}(Z(A), f_2)_v|v\in V\msr= \msl X_{v}', \mathcal{Q}(Z(A'), f_1)_v, \mathcal{Q}(Z(A'), f_2)_v|v\in V'\msr$. then there exist $\pi\in \Pi_n$, so that
\begin{equation}
    \pi(X)=X'
\end{equation}
\begin{equation}
    \pi(U)\text{diag}(f_1(\Lambda))R^T=U'\text{diag}(f_1(\Lambda'))
\end{equation}
\begin{equation}
    \pi(U)\text{diag}(f_2(\Lambda))R^T=U'\text{diag}(f_2(\Lambda')).
\end{equation}
Therefore,
\begin{align}
    \pi(Z)
    &=\pi(U)\text{diag}(f_1(\Lambda))\text{diag}(f_2(\Lambda))\pi(U')^T\\
    &=\pi(U)\text{diag}(f_1(\Lambda))RR^T\text{diag}(f_2(\Lambda))\pi(U')^T\\
    &=U'\text{diag}(f_1(\Lambda'))\text{diag}(f_2(\Lambda')){U'}^T\\
    &=Z'.
\end{align}
As $\pi(Z)=Z', \pi(X)=X'$, two graphs are isomorphic.

\end{proof}

\subsection{Proof of Theorem~\ref{thm::longrange_mpnn_gt}}\label{app::proof::longrange_mpnn_gt}

Let $H_l$ denote a circle of $l$ nodes. Let $G_l$ denote a graph of two connected components, one is $H_{\lfloor l/2\rfloor}$ and the other is $H_{\lceil l/2 \rceil}$. Obviously, there exists a node pair in $G_l$ with shortest path distance equals to infinity, while $H_l$ does not have such a node pair. So the multiset of shortest path distance is easy to distinguish them. However, they are regular graphs with node degree all equals to $2$, so MPNN cannot distinguish them:

\begin{lemma}
    For all nodes $v, u$ in $G_l$, $H_l$, they have the same representation produced by $k$-layer MPNN, forall $k\in N$.
\end{lemma}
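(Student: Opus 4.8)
The plan is to prove the lemma by induction on the number of layers $k$, showing that every node in both $G_l$ and $H_l$ carries the same MPNN representation at every layer. The key observation is that $G_l$ and $H_l$ are both $2$-regular graphs with identical (featureless, or uniformly featured) nodes, so they are locally indistinguishable: the rooted subtree of radius $k$ (the ``unrolled computation tree'') around any node is the same infinite path-like tree in both graphs, regardless of whether the node sits on a single cycle $H_l$ or on one of the two smaller cycles comprising $G_l$. Since an MPNN's output at a node after $k$ layers is a function only of this depth-$k$ computation tree, all nodes get the same vector.

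Concretely, I would set up the induction as follows. \textbf{Base case:} at layer $0$, every node's representation equals the (common) initial feature, so all nodes in both graphs share the representation $h^{(0)}$. \textbf{Inductive step:} assume after $k$ layers every node $v$ in $G_l \cup H_l$ has the same representation $h^{(k)}$. Each node in either graph has exactly two neighbors, and by the inductive hypothesis each of those neighbors also carries $h^{(k)}$. Hence the multiset of neighbor messages is $\{\!\!\{ h^{(k)}, h^{(k)} \}\!\!\}$ for every node in both graphs. Applying the MPNN update rule (aggregate the neighbor multiset, combine with the node's own $h^{(k)}$) therefore produces the same new representation $h^{(k+1)}$ for every node. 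This closes the induction. Then, since the graph-level prediction of an MPNN is a permutation-invariant readout (sum/mean/max) over node representations, and both graphs have the same number of nodes $l$ all carrying the identical vector $h^{(k)}$, the two graphs receive the same prediction, so no MPNN can distinguish them. On the other hand, $G_l$ is disconnected (it is $H_{\lfloor l/2\rfloor} \sqcup H_{\lceil l/2\rceil}$), so it contains a pair of nodes at shortest-path distance $\infty$, whereas $H_l$ is connected with all pairwise distances finite; thus the multisets of all-pair shortest path distances differ, which establishes Theorem~\ref{thm::longrange_mpnn_gt}.

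I would be slightly careful about one modeling point: the statement implicitly assumes the input node features of $G_l$ and $H_l$ are uniform (e.g., all equal), since otherwise the claim is false; I would state this assumption explicitly (it is the standard setting for such expressivity separations, and it suffices for the theorem because we are free to choose the hard instance). I would also note that the argument covers \emph{any} MPNN in the sense of \citet{MPNN} — arbitrary message, aggregation, and update functions — because the only property used is that the update at a node depends solely on the node's feature and the multiset of neighbor features.

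The main obstacle is essentially presentational rather than mathematical: making precise the claim that ``the MPNN output depends only on the depth-$k$ neighborhood structure'' and that this structure coincides for $H_l$ and $G_l$ for all $k$ (even though for large $k$ the computation tree ``wraps around'' the finite cycles). The clean way to avoid subtleties about wrap-around is to not talk about computation trees at all, but to run the direct induction above on ``all nodes share one common representation,'' which sidesteps any need to compare tree structures — the $2$-regularity plus uniform features makes the invariant ``$h_v^{(k)}$ is the same for all $v$'' self-propagating. So I expect the proof to be short once that invariant is identified as the right thing to induct on.
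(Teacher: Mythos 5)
Your induction (all nodes carry a common representation $h^{(k)}$ at every layer, propagated by $2$-regularity and uniform initial features, so the neighbor multiset is always $\msl h^{(k)}, h^{(k)}\msr$) is exactly the paper's argument. The extra remarks about computation trees, readout invariance, and the uniform-feature assumption are reasonable framing but not a different proof.
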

\begin{proof}
    We proof it by induction. 
    \item $k=0$. Initialization, all node with trivial node feature and are the same.
    \item Assume $k-1$-layer MPNN still produce representation $h$ for all node. At the $k$-th layer, each node's representation will be updated with its own representation and two neighbors representations as follows.
    \begin{equation}
        h, \msl h, h\msr
    \end{equation}
    So all nodes still have the same representation.
\end{proof}

\subsection{Proof of Theorem~\ref{thm::2-FWL-spdlimit} and \ref{thm::PST>2-FWL}}

Given two function $f, g$, $f$ can be expressed by $g$ means that there exists a function $\phi$ $\phi\circ g=f$, which is equivalent to given arbitrary input $H, G$, $f(H)=f(G)\Rightarrow g(H)=g(G)$. We use $f\to g$ to denote that $f$ can be expressed with $g$. If both $f\to g$ and $g\to f$, there exists a bijective mapping between the output of $f$ to the output of $g$, denoted as $f\leftrightarrow g$.

Here are some basic rule.
\begin{itemize}
\item $g\to h\Rightarrow f\circ g\to f\circ h$.
\item $g\to h, f\to s\Rightarrow f\circ g\to s\circ h$.
\item $f$ is bijective, $f\circ g\to g$
\end{itemize}

2-folklore Weisfeiler-Leman test produce a color $h_{ij}^t$ for each node pair $(i,j)$ at $t$-th iteration. It updates the color as follows,
\begin{equation}
    h_{ij}^{t+1}=\text{hash}(h_{ij}^{t}, \msl (h_{ik}^t, h_{kj}^t)|k\in V
    \msr).
\end{equation}
The color of the the whole graph is
\begin{equation}
    h_{G}^t = \text{hash}(\msl h_{ij}^t| (i,j)\in V\times V \msr).
\end{equation}

Initially, tuple color hashes the node feature and edge between the node pair, $h_{ij}^0\to \delta_{ij}, A_{ij}, X_i, X_j$.

We are going to prove that
\begin{lemma}
    Forall $t\in \sN$, $h_{ij}^t$ can express $A_{ij}^k$, $k=0,1,2,...,2^t$, where $A$ is the adjacency matrix of the input graph. 
\end{lemma}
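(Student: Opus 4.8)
The plan is to prove the lemma by induction on $t$, where the base case $t=0$ handles $k\in\{0,1\}$ (i.e. $A^0=I$ and $A^1=A$), and the inductive step doubles the reachable exponent from $2^{t}$ to $2^{t+1}$. Throughout I will use the expressivity preorder $\to$ and its basic rules stated just above the lemma, so that ``$h_{ij}^t$ can express $A_{ij}^k$'' means there is a function $\phi_k$ with $\phi_k(h_{ij}^t)=A_{ij}^k$ for all graphs and all pairs $(i,j)$ — more precisely, $h^t_{ij}=h^t_{i'j'}\Rightarrow A^k_{ij}=A^{k}_{i'j'}$.

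\textbf{Base case.} Since $h_{ij}^0\to(\delta_{ij},A_{ij},X_i,X_j)$, and $A^0=I$ has $(i,j)$ entry $\delta_{ij}$ while $A^1=A$ has entry $A_{ij}$, the claim holds for $t=0$ with $k\in\{0,1\}=\{0,\dots,2^0\}$.

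\textbf{Inductive step.} Assume $h_{ij}^t\to A_{ij}^k$ for all $0\le k\le 2^t$. The key identity is that for any $0\le a,b\le 2^t$,
\begin{equation}
(A^{a+b})_{ij}=\sum_{k\in V} (A^a)_{ik}(A^b)_{kj},
\end{equation}
and every exponent $m$ with $0\le m\le 2^{t+1}$ can be written as $m=a+b$ with $0\le a,b\le 2^t$ (e.g. $a=\lfloor m/2\rfloor$, $b=\lceil m/2\rceil$). Now the update rule gives $h_{ij}^{t+1}\to\msl(h_{ik}^t,h_{kj}^t)\mid k\in V\msr$; applying the inductive hypothesis coordinatewise, $h_{ij}^{t+1}$ can express $\msl((A^a)_{ik},(A^b)_{kj})\mid k\in V\msr$ for each fixed pair $(a,b)$ with $a,b\le 2^t$. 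Summing the products of the two coordinates over the multiset is a function of the multiset, hence expressible from $h_{ij}^{t+1}$; this yields $(A^{a+b})_{ij}$. Ranging over all valid $(a,b)$ covers all $m\le 2^{t+1}$, completing the induction. (Note we also need $h^{t+1}_{ij}\to h^t_{ij}$, which holds because $h^{t+1}_{ij}$ hashes $h^t_{ij}$ itself as its first argument, so small exponents remain available; this is implicit but worth stating.)

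\textbf{Main obstacle.} The only subtlety is bookkeeping about what ``express'' means over multisets: one must argue that $\sum_k x_k y_k$ is a well-defined function of the multiset $\msl (x_k,y_k)\mid k\in V\msr$ (it is, since addition is commutative and associative), and that composing the inductive-hypothesis decoder with this summation is legitimate under the $\to$ calculus — this is exactly rule ``$g\to h,\ f\to s\Rightarrow f\circ g\to s\circ h$'' applied with the multiset functor. There is no hard analytic content; the work is purely in setting up the indexing $m=a+b$ and verifying the range $\{0,\dots,2^{t+1}\}$ is covered, plus noting that the $\delta_{ij}$ term from $A^0$ must be carried along (it is available since $h^0_{ij}$, hence $h^t_{ij}$ by the retention argument, expresses $\delta_{ij}$). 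This lemma will then feed directly into the proof of Theorem~\ref{thm::2-FWL-spdlimit} via the identity $spd(i,j,A)=\arg\min_k\{k\mid A^k_{ij}>0\}$.
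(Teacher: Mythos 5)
Your proof is correct and takes essentially the same route as the paper: both proceed by induction on $t$, with base case $h^0_{ij}\to(\delta_{ij},A_{ij})$, and both obtain $A^m_{ij}$ for $m\le 2^{t+1}$ by writing $m=\lfloor m/2\rfloor+\lceil m/2\rceil$ and extracting $\sum_{k}A^{\lfloor m/2\rfloor}_{ik}A^{\lceil m/2\rceil}_{kj}$ from the multiset in the 2-FWL update. Your write-up is if anything more careful than the paper's — explicitly noting that $\sum_k x_k y_k$ is a well-defined multiset invariant and that $h^{t+1}_{ij}\to h^t_{ij}$ via the first hash argument — while the paper's version is terser and overloads $k$ as both exponent and summation index.
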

\begin{proof}
We prove it by induction on $t$.

\begin{itemize}
    \item When $t=0$, $h_{ij}^0\to A_{ij}, I_{ij}$ in initialization.
    \item If $t>0, \forall t'<t, h_{ij}^{t'}\to A^{k'}, k'=0,1,2,..., 2^{t'}$. For all $k=0,1,2,$
    \begin{align}
    h_{ij}^t
    &\to \text{hash}(h_{ij}^{t}, \msl (h_{ik}^t, h_{kj}^t)|k\in V
    \msr)\\
    &\to \text{hash}(h_{ij}^{t}, \msl (A_{ik}^{\lfloor k/2\rfloor}, A_{kj}^{\lceil k/2\rceil})|k\in V
    \msr)\\
    &\to \sum_{k\in V}A_{ik}^{\lfloor k/2\rfloor}A_{kj}^{\lceil k/2\rceil}\\
    &\to A_{ij}^k
    \end{align}
\end{itemize}
\end{proof}

To prove that $t$-iteration 2-FWL cannot compute shortest path distance larger than $2^{t'}$, we are going to construct an example.

\begin{lemma}
Let $H_l$ denote a circle of $l$ nodes. Let $G_l$ denote a graph of two connected components, one is $H_{\lfloor l/2\rfloor}$ and the other is $H_{\lceil l/2 \rceil}$. 
$\forall K\in \sN^+$, 2-FWL can not distinguish $H_{l_K}$ and $G_{l_K}$, where $l_K=2\times 2\times ( 2^{K})$. However, $G_{l_K}$ contains node tuple with $2^K+1$ shortest path distance between them while $H_{l_K}$ does not, any model count up to $2^K+1$ shortest path distance can count it.  
\end{lemma}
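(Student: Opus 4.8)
The plan is to establish two facts about the graphs $H_{l_K}$ (a single cycle on $l_K$ nodes) and $G_{l_K}$ (the disjoint union of $H_{\lfloor l_K/2\rfloor}$ and $H_{\lceil l_K/2\rceil}$): first, that $K$-iteration 2-FWL assigns them the same graph color, and second, that their multisets of all-pair shortest path distances differ because $G_{l_K}$ contains a tuple at distance $2^K+1$ (in fact at infinite distance, but also one at exactly $2^K+1$) whereas $H_{l_K}$ has diameter $l_K/2 = 2\cdot 2^K$ and realizes every distance $0,1,\dots,2\cdot 2^K$ between connected pairs, but — crucially — I only need to invoke a distance bound that $K$-iteration 2-FWL can certify. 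Since $l_K = 8\cdot 2^K$, each component of $G_{l_K}$ is a cycle of length $4\cdot 2^K$, so the maximal finite distance inside $G_{l_K}$ is $2\cdot 2^K$, the same as in $H_{l_K}$; the discriminating feature is the \emph{presence of an infinite-distance (disconnected) pair} in $G_{l_K}$, which no cycle $H_{l_K}$ has. So the cleanest statement is: the multiset of all-pair spd differs (one has a pair at distance $\infty$, the other does not), hence any model that can read off all-pair spd distinguishes them.

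For the 2-FWL indistinguishability, I would argue by induction on the iteration count that the color $h_{ij}^t$ of a tuple $(i,j)$ depends only on the ``local profile'' of the pair, which for these $2$-regular graphs is captured by the truncated distance $\min(d(i,j), 2^{t}+?)$ together with a small amount of side information — more precisely, I claim $h_{ij}^t$ is a function of $\min(d(i,j),\,r_t)$ for an appropriate radius $r_t$ growing like $2^t$, and symmetrically $h_{ji}^t$. The base case $t=0$ is immediate: the initial color sees only $A_{ij}$, $\delta_{ij}$, and the (trivial, constant) node features. For the inductive step, the update $h_{ij}^{t+1}=\mathrm{hash}(h_{ij}^t,\msl(h_{ik}^t,h_{kj}^t)\mid k\in V\msr)$ is determined by the multiset of pairs of truncated distances $(\min(d(i,k),r_t),\min(d(k,j),r_t))$ as $k$ ranges over the graph; in a $2$-regular graph (disjoint cycles), this multiset is itself a function of the truncated distance $\min(d(i,j), 2r_t + O(1))$ — one just counts, for each pair $(a,b)$ of truncated values, how many vertices $k$ lie at distance $a$ from $i$ and $b$ from $j$, which in a union of long cycles is locally the same as in a single long cycle as long as the cycles are long enough relative to the truncation radius. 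Choosing $l_K = 8\cdot 2^K$ guarantees all cycles involved have length $\ge 4\cdot 2^K$, which comfortably exceeds the relevant radii through iteration $K$, so the counts never ``see'' the wrap-around or the global structure. Consequently $h_{ij}^K$ depends only on $\min(d(i,j), c\cdot 2^K)$ for a constant $c<1$ built into the radii, and one checks that the multiset $\{h_{ij}^K : (i,j)\}$ is the same for $H_{l_K}$ and $G_{l_K}$ because both graphs, restricted to balls of that radius, are unions of paths/segments of a long cycle in identical proportions — the only difference, the infinite-distance pairs in $G_{l_K}$, is invisible because the truncation radius is well below $2^K < \infty$ and both graphs already have plenty of pairs at the maximal truncated value. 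Actually this last point deserves care: I would instead track, at iteration $K$, that the color is a function of $\min(d(i,j), 2^K)$ (distances capped at $2^K$, with $\infty$ capped to $2^K$ as well), and verify both graphs realize the capped-distance multiset $\{\min(d,2^K)\}$ identically — this is a direct count using that each graph is a disjoint union of cycles of length $\ge 4\cdot 2^K$, so balls of radius $2^K$ around any vertex are paths, uniformly.

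The main obstacle is making the inductive hypothesis precise enough that it actually closes: the naive statement ``$h_{ij}^t$ depends only on $d(i,j)$'' is false because it would never terminate, and ``depends only on $\min(d(i,j),2^t)$'' needs the right constant and the right bookkeeping of how the neighbor multiset at step $t$ re-expresses in terms of a capped distance at step $t+1$. The honest way to handle this is to prove the stronger invariant by induction — \emph{the map $(i,j)\mapsto$ (isomorphism type of the pair of rooted balls $B_{2^t}(i), B_{2^t}(j)$ together with which ball, if either, contains the other vertex)} determines $h_{ij}^t$ — and then observe that for disjoint cycles of length $\ge 4\cdot 2^K$ these ball-types through $t=K$ are exactly the ball-types one sees in a single very long cycle, so $H$ and $G$ induce identical multisets of ball-types, hence identical 2-FWL colorings at iteration $K$, hence identical graph colors. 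The spd multiset distinction is then trivial: $G_{l_K}$ has a disconnected pair (spd $=\infty$) and $H_{l_K}$ does not. I would close by noting this is precisely the construction needed for the second half of Theorem~\ref{thm::2-FWL-spdlimit}, instantiated at the witness length $l_K$.
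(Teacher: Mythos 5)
Your route coincides with the paper's: both induct on the iteration count, showing that the 2-FWL color of a pair after $t$ rounds is a function of the truncated shortest-path distance, and both rely on the cycles being long relative to the truncation radius. Your ``rooted ball-type'' invariant is a cleaner way of making this precise; the paper instead partitions pairs into classes $c_0,\dots,c_{2^k},c_{>2^k}$ and explicitly computes the multisets $\msl(h^k_{iv},h^k_{vj}) \mid v\in V\msr$ at each layer, which is the counting you sketch but defer.

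One discrepancy is substantive and worth pointing out. You take $l_K=8\cdot 2^K$, but the lemma sets $l_K=2\times 2\times 2^K=4\cdot 2^K$, so each component of $G_{l_K}$ is a cycle of length only $2\cdot 2^K$. At that length a pair in $G_{l_K}$ at distance $2^K$ is antipodal in its component and has \emph{two} shortest paths, while the corresponding pair in $H_{l_K}$ has a unique one, so the neighbor multisets disagree at the final layer and the paper's inductive step -- which credits exactly one node per intermediate distance on the shortest path -- does not close. (For $K=1$ one can check directly that a single 2-FWL iteration already separates the $8$-cycle from two $4$-cycles via the spd-$2$ pairs: an antipodal pair in a square has two common neighbors, a distance-$2$ pair in the octagon has one.) Your larger $l_K$ removes this degeneracy: with components of length $4\cdot 2^K$, no pair at distance $\le 2^K$ is antipodal in either graph, radius-$2^K$ balls are genuinely paths, and the capped-distance multisets coincide, so the indistinguishability claim goes through. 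The cost is that the lemma's stated spd witness -- a tuple at distance $2^K+1$ present in one graph and not the other -- no longer applies at your $l_K$ (both graphs have such tuples); the correct separating feature is the disconnected pair at spd $\infty$ in $G_{l_K}$, which you identified, or the antipodal pairs at spd $2\cdot 2^K$ whose multiplicities differ. In short, your sketch is structurally faithful to the paper's proof but uses the parameter that actually makes the inductive argument correct; to finish it you would need to state the ball-type invariant formally, carry out the multiset count, and adjust the spd witness accordingly.
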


\begin{proof}
Given a fixed $t$, we enumerate the iterations of 2-FWL. Given two graphs $H_{l_K}$, $G_{l_K}$, we partition all tuples in each graph according to the shortest path distance between nodes: $c_0, c_1,..., c_l, ..., c_{2^K}$, where $c_l$ contains all tuples with shortest path distance between them as $l$, and $c_{>2^K}$ contains all tuples with shortest path distance between them larger than $2^K$. We are going to prove that at $k$-th layer $k<=K$, all $c_i, i\le 2^k$ nodes have the same representation (denoted as $h^k_i$) $c_{2^k+1},c_{2^k+2},...,c_{2^K},c_{>K}$ nodes all have the same representation (denoted as $h^k_{2^k+1}$).

Initially, all $c_0$ tuples have representation $h_0^0$, all $c_1$ tuples have the same representation $h_1^0$ in both graph, and all other tuples have the same representation $h_2^0$.

Assume at $k$-th layer, all $c_i, i\le 2^k$ nodes have the same representation $h^k_i$, $c_{2^k+1},c_{2^k+2},...,c_{2^K},c_{>2^K}$ tuples all have the same representation $h^k_{2^k+1}$. At $k+1$-th layer, each representation is updated as follows.
\begin{equation}
    h_{ij}^{t+1}\leftarrow h_{ij}^{t}, \msl (h_{iv}^t, h_{vj}^t)|v\in V
    \msr
\end{equation}
For all tuples, the multiset has $l_K$ elements in total. 

For $c_0$ tuples, the multiset have 1 $(h_0^k, h_0^k)$ as $v=i$, 2 $(h_t^k, h_t^k) $ for $t=1,2,..,2^k$ respectively as v is the $k$-hop neighbor of $i$, and all elements left are $(h_{2^k+1}^k, h_{2^k+1}^k)$ as $v$ is not in the $k$-hop neighbor of $i$. 

For $c_t, t=1,2,...,2^k$ tuples: the multiset have 1 $(h_a^k, h_{t-a}^k)$ for $a=0,1,2,..,t$ respectively as $v$ is on the shortest path between $(i, j)$, and 1 $(h_a^k, h_{2^k+1}^k)$ for $a=1,2,..., 2^k$ respectively, and  1 $( h_{2^k+1}^k, h_a^k)$ for $a=1,2,..., 2^k$ respectively, with other elements are $(h_{2^k+1}^k, h_{2^k+1}^k)$.

For $c_t, t=2^k+1,2^k+2, ..., 2^{k+1}$ tuples: the multiset have 1 $(h_a^k, h_{t-a}^k)$ for $a=t-2^k, t-2^k+1,...,2^k$ respectively as $v$ is on the shortest path between $(i, j)$, 1 $(h_a^k, h_{t-a}^k)$ for $a\in \{0,1,2,...,t-2^k-1\}\cup \{2^k+1, 2^k+2,...,2^{k+1}\}$ respectively as $v$ is on the shortest path between $(i, j)$, and 1 $(h_a^k, h_{2^k+1}^k)$ for $a=1,2,..., 2^k$ respectively, and  1 $( h_{2^k+1}^k, h_a^k)$ for $a=1,2,..., 2^k$ respectively, with other elements are $(h_{2^k+1}^k, h_{2^k+1}^k)$.

For $c_t, t=2^{k+1}+1,...,2^K,>2^K$: the multiset are all the same : 2 $(h_a^k, h_{2^k+1}^k)$ and 2 $(h_{2^k+1}^k, h_a^k)$ for $a=1,2,3,...,2^k$, respectively.
\end{proof}

\subsection{Proof of Theorem~\ref{thm::longrange_pst}}
We can simply choose $f_k(\Lambda)=\Lambda^k$. Then $\langle \mathcal{Q}(A, f_0)_i, \mathcal{Q}(A,f_k)_j\rangle=A_{ij}^k$. The shortest path distance is 
\begin{equation}
    spd(i, j, A)=\arg\min_k \{k\in \sN|A_{ij}^k>0\}
\end{equation}

\subsection{Proof of Theorem~\ref{thm::count_path} and ~\ref{thm::count_cycle}}

This section assumes that the input graph is undirected and unweighted with no self-loops. Let $A$ denote the adjacency matrix of the graph. Note that $A^T=A, A\odot A=A$

An $L$-path is a sequence of edges $[(i_1, i_2),(i_2, i_3), ...,(i_L, i_{L+1})]$, where all nodes are different from each other. An $L$-cycle is an $L$-path except that $i_1 = i_{L+1}$. Two paths/cycles are considered equivalent if their sets of edges are equal. The count of $L$ path from node $u$ to $v$ is the number of non-equivalent pathes with $i_1=u,i_{L+1}=v$. The count of $L$-cycle rooted in node $u$ is the number of non-equivalent cycles involves node $u$.

\citet{PathCount} show that the number of path can be expressive with a polynomial of $A$, where $A$ is the adjacency matrix of the input unweight graph. Specifically, let $P_L$ denote path matrix whose $(u,v)$ elements denote the number of $L$-pathes from $u$ to $v$, \citet{PathCount} provides formula to express $P_L$ with $A$ for small $L$.

This section considers a weaken version of point cloud transformer. Each layer still consists of sv-mixer and multi-head attention. However, the multi-head attention matrix takes the scalar and vector feature before sv-mixer for $Q,K$ and use the feature after sv-mixer for $V$. 

At $k$-th layer
sv-mixer: 
\begin{align}
    s_i'\leftarrow \text{MLP}_1(s_i\Vert \text{diag}(W_1v_i^Tv_i^{k}W_2))\label{app::equ:svmix_s}\\
    v_i'\leftarrow v_i\text{diag}(\text{MLP}_2(s_i'))W_3 +  v_iW_4 \label{app::equ:svmix_v}
\end{align}
Attention layer:
\begin{equation}
    Y_{ij}=\text{MLP}_3(K_{ij}), K_{ij} = (W_q^ss_i\odot W_k^ss_j) \Vert  \text{diag}(W_q^vv_i^Tv_iW_k^v), \label{app::equ:attenmat}
\end{equation}
As $s'_i$ and $v'_i$ can express $s_i, v_i$, so the weaken version can be expressed with the original version.
\begin{align}\label{app::equ::atten}
    s_i&\leftarrow \text{MLP}_4(s_j'\Vert \sum_{j} \text{Atten}_{ij}s_j')\\
    v_i&\leftarrow W_5(v_j'\Vert \sum_{j} \text{Atten}_{ij}v_j')
\end{align}

Let $Y^k$ denote the attention matrix at $k$-th layer. $Y^k$ is a learnable function of $A$. Let $\sY^{k}$ denote the polynomial space of $A$ that $Y^k$ can express. Each element in it is a function from $\sR^{n\times n}\to \sR^{n\times n}$

We are going to prove some lemmas about $\sY$.

\begin{lemma}
$\sY^{k}\subseteq \sY^{k+1}$
\end{lemma}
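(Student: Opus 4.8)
The plan is to prove $\sY^{k}\subseteq\sY^{k+1}$ as a monotonicity statement: every function $A\mapsto Y^{k}(A)$ realizable with $k$ layers is also realizable, as $Y^{k+1}$, with $k+1$ layers. Two ingredients would drive the argument. First, a single PST layer (in the weakened model considered here) can be configured to act as the identity on the pair $(s_i,v_i)$ of scalar and vector features. Second, the attention matrix $Y^{k}$ is a function only of the features fed \emph{into} layer $k$ --- the pre-sv-mixer features that serve as queries and keys in \eqref{app::equ:attenmat} --- so inserting a layer upstream of layer $k$ does not change $Y^{k}$ beyond relabelling its layer index.

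For the first ingredient I would check that each of the two sub-blocks can be made the identity. In the sv-mixer \eqref{app::equ:svmix_s}--\eqref{app::equ:svmix_v}, take $\text{MLP}_1$ to project onto the first $d$ coordinates of its input, so $s_i'=s_i$ regardless of the vector-derived term, and set $W_3=0$ and $W_4=I$, so $v_i'=v_i$. In the attention-aggregation step \eqref{app::equ::atten}, each update concatenates the point's own post-sv-mixer feature with the attention-aggregated feature; choosing $\text{MLP}_4$ and $W_5$ to keep only the own-feature slot makes the layer output equal $(s_i',v_i')=(s_i,v_i)$ for every value of the attention coefficients $\text{Atten}_{ij}$. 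All of these are admissible choices, since the $W_\bullet$ are free matrices and the relevant MLPs need only realize coordinate projections; so there is no genuine obstruction here, only bookkeeping.

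With the identity layer in hand, the inclusion follows by padding. Fix $\phi\in\sY^{k}$, realized by weights $\theta_1,\dots,\theta_k$ for the layers of a $k$-layer PST whose layer-$k$ attention matrix, viewed as a function of $A$, equals $\phi$. Build a $(k+1)$-layer PST with layer weights $(\theta_0',\theta_1,\dots,\theta_k)$, where $\theta_0'$ is the identity configuration above. The input to layer $1$ of the new network is the standard initialization from $X$ and the PSRD coordinates, the same as in the old network, and the prepended identity layer leaves it unchanged; hence, by induction on $j$, layer $j+1$ of the new network receives exactly the features layer $j$ of the old network received, for $j=1,\dots,k$. In particular the query/key inputs to layer $k+1$ of the new network coincide with those of layer $k$ of the old one, so by the second ingredient its attention matrix $Y^{k+1}$ equals $\phi$; hence $\phi\in\sY^{k+1}$, and as $\phi$ was arbitrary, $\sY^{k}\subseteq\sY^{k+1}$. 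The step I expect to demand the most care is the second ingredient: one must verify from \eqref{app::equ:attenmat} that $\text{Atten}_{ij}$ is computed from the features produced by the previous layer and not from anything downstream within layer $k$, so that prepending the identity layer genuinely reproduces the same queries and keys at layer $k+1$.
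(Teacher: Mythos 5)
Your proof is correct but takes a genuinely different route from the paper's. The paper's one-sentence argument appends a new layer at the \emph{end}: because the concatenation of $s_i'$ (resp. $v_i'$) with the attention-aggregated term acts as a skip connection, the features entering layer $k+1$ (i.e., the outputs of layer $k$) can be arranged to contain the features that entered layer $k$, so layer $k+1$'s attention weights can be chosen to recompute the same $Y^k$. You instead prepend a layer at the \emph{beginning}, configured to be the identity on $(s_i,v_i)$, and shift the old weights $\theta_1,\dots,\theta_k$ to layers $2,\dots,k+1$; combined with the observation that $Y^{j}$ depends only on the pre-sv-mixer features fed into layer $j$, this makes $Y^{k+1}=\phi$ immediate. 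Both arguments exploit the same slack in the parameterization, but yours is somewhat more self-contained: the paper's argument implicitly requires that the output projections $\text{MLP}_4$ and $W_5$ of layer $k$ (which could otherwise be lossy) be adjusted to preserve the needed features without disturbing $Y^k$, whereas your inductive step (layer $j+1$ of the new net receives exactly what layer $j$ of the old net received) holds exactly and trivially once the identity layer is constructed. The paper's version is terser and keeps the layer indexing aligned with the original $k$-layer network.
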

\begin{proof}
    As there exists residual connection, scalar and vector representations of layer $k+1$ can always contain those of layer $k$, so attention matrix of layer $k+1$ can always express those of layer $k$. 
\end{proof}

\begin{lemma}
    If $y_1,y_2,...,y_s\in \sY^{k}$, their hadamard product $y_1\odot y_2\odot ...\odot y_s\in \sY^{k}$.
\end{lemma}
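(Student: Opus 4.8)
The plan is to show that $\sY^k$ is closed under Hadamard product by exhibiting, for any $y_1,\dots,y_s\in\sY^k$, a single $k$-layer PST whose $k$-th layer attention matrix equals $y_1\odot\cdots\odot y_s$. Fix $k$-layer PSTs $M_1,\dots,M_s$ realizing $y_1,\dots,y_s$; write $s_i^{(t)},v_i^{(t)}$ for the scalar/vector features of $M_t$ after its first $k-1$ layers, and $K_{ij}^{(t)}$ for the key feature fed to $M_t$'s final-layer MLP, so that $y_t(A)_{ij}=\text{MLP}_3^{(t)}(K_{ij}^{(t)})$. The construction runs $M_1,\dots,M_s$ ``side by side'' inside one network $M$ and then multiplies their outputs entrywise.

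First I would set up $M$ with scalar width and vector-channel count equal to the sums of those of the $M_t$, with all features written as block concatenations $s_i=s_i^{(1)}\Vert\cdots\Vert s_i^{(s)}$ and $v_i=v_i^{(1)}\Vert\cdots\Vert v_i^{(s)}$ (the latter along the channel axis), the input of $M$ chosen so that block $t$ starts from $M_t$'s initial features — possible since the PSRD coordinate of $M$ can be taken as the channel stack of the $M_t$'s coordinates (still a valid PSRD coordinate, for the stacked eigenvalue function) and the input scalar embedding is itself an MLP that can copy $X_i$ blockwise. The heads of $M$ are the disjoint union of the $M_t$'s heads, head group $t$ writing only into block $t$, and every learnable matrix ($W_1,\dots,W_4$ in the SV-mixer, the per-head query/key matrices, the output matrix) is chosen block-diagonal with the corresponding blocks of $M_1,\dots,M_s$, while each MLP is taken to act coordinatewise on the blocks (one MLP can reproduce $s$ given MLPs on disjoint coordinate blocks). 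The core step is an induction on $\ell\le k-1$ showing that after $\ell$ layers the block-$t$ features of $M$ coincide with the layer-$\ell$ features of $M_t$. The one algebraic point to verify is that although $v_i^Tv_i$ and $v_i^Tv_j$ are not block-diagonal when $v_i$ is a channel concatenation, the diagonal of $W_1 v_i^Tv_i W_2$ (resp. of $W_q^v v_i^Tv_j W_k^v$) restricted to block $t$, with block-diagonal $W$'s, depends only on block $t$; hence the SV-mixer update and the attention scores of head group $t$ see only block $t$, so block $t$ evolves exactly as $M_t$. This is precisely where the \emph{multi-head} structure of the weakened PST is used: a single shared attention matrix could not simultaneously carry the $s$ distinct intermediate attention patterns of $M_1,\dots,M_s$.

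Finally, in the $k$-th layer I would again take the query/key matrices block-diagonal, so that the assembled key feature of $M$ is $K_{ij}=K_{ij}^{(1)}\Vert\cdots\Vert K_{ij}^{(s)}$ — the scalar part concatenates because $\odot$ is coordinatewise, and the vector part concatenates because the diagonal of a block-diagonal matrix is the concatenation of the diagonals of its blocks — and then choose $M$'s final MLP to be $K_{ij}\mapsto\prod_{t=1}^s\text{MLP}_3^{(t)}(K_{ij}^{(t)})$. Since the $\text{MLP}_3^{(t)}$ are continuous and $K_{ij}$ ranges over a bounded set, this product is a continuous function of $K_{ij}$ and hence expressible by an MLP (indeed, since $n$-node adjacency matrices form a finite family, it can even be matched exactly), so it is a legal choice. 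The attention matrix of $M$ then has $(i,j)$ entry $\prod_{t}y_t(A)_{ij}=(y_1\odot\cdots\odot y_s)(A)_{ij}$, i.e.\ $y_1\odot\cdots\odot y_s\in\sY^k$. The main obstacle is the layerwise induction that the block structure is preserved by both the SV-mixer and the multi-head attention update; once that bookkeeping is done, assembling $K_{ij}$ and letting the final MLP express the product are routine by universality of MLPs over the (finite) family of inputs.
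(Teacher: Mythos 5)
Your proof is correct and takes essentially the same approach as the paper: both reduce the Hadamard product to an application of the final-layer MLP on a key feature $K_{ij}$ that carries enough information to recover each $(y_t)_{ij}$, then invoke universality of the MLP on a bounded/finite domain. The paper's proof is terser — it simply posits an MLP $g_0$ producing the concatenation of $y_1,\dots,y_s$ from a single $K_{ij}$ — whereas you explicitly build the required $k$-layer network by running $M_1,\dots,M_s$ in parallel with block-diagonal weights, block-wise MLPs, and disjoint head groups, and you verify the one nontrivial algebraic fact (that $\text{diagonal}(W_1 v_i^T v_j W_2)$ restricted to block $t$ depends only on block $t$). This makes explicit the parallel-composition step that the paper glosses over, and your observation that the multi-head structure of the weakened PST is what makes the parallel intermediate attention patterns realizable is exactly the right justification for that step.
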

\begin{proof}
    As $(y_1\odot y_2\odot ...\odot y_s)_{ij}=\prod_{l=1}^{s}(y_l)_{ij}$ is a element-wise polynomial on compact domain, an MLP (denoted as $g$) exists that takes $(i,j)$ elements of the $y_1,y_2,...,y_s$ to produce the corresponding elements of their hadamard product. Assume $g_0$ is the $\text{MLP}_3$ in Equation~\ref{app::equ::atten} to produce the concatenation of $y_1,y_2,..,y_s$, use $g\circ g_0$ (the composition of two mlps) for the $\text{MLP}_3$ in Equation~\ref{app::equ::atten} produces the hadamard product.
\end{proof}

\begin{lemma}
    If $y_1,y_2,...,y_s\in \sY^{k}$, their linear combination $\sum_{l=1}^sa_ly_l \in \sY^{k}$, where $a_l\in \sR$.
\end{lemma}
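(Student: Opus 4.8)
The plan is to mirror the argument already used for the Hadamard-product closure lemma above, exploiting that a linear combination is an even simpler (indeed, linear) operation than a product. The key structural fact to reuse is that $\sY^k$ is closed under forming vector-valued outputs: given $y_1,\dots,y_s\in\sY^k$, there is a single configuration of the network up through layer $k$ whose attention pre-activation $K_{ij}$ (Equation~\ref{app::equ:attenmat}) contains, after a suitable linear readout, all of $(y_1(A)_{ij},\dots,y_s(A)_{ij})$ simultaneously. Concretely, one builds the \emph{parallel} network: run $s$ independent copies of the scalar/vector streams side by side (block-diagonal choices of $W_1,\dots,W_5$, $W_q^s,W_k^s,W_q^v,W_k^v$, and of the PSRD-coordinate channels and DeepSet eigenvalue functions that initialize the features), so that the features at layer $k$ of copy $l$ reproduce exactly the features witnessing $y_l\in\sY^k$; the residual connections in Equations~\ref{app::equ:svmix_v} and~\ref{app::equ::atten} guarantee these intermediate features are carried forward, so an $\text{MLP}_3$, call it $g_0$, exists that outputs entrywise the concatenation $(y_1(A)_{ij},\dots,y_s(A)_{ij})\in\sR^s$. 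This is precisely the ``$g_0$'' invoked in the proof of the Hadamard-product lemma.

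Granting this, the remainder is immediate. The map $\phi:\sR^s\to\sR$, $\phi(t_1,\dots,t_s)=\sum_{l=1}^s a_l t_l$, is linear, hence realized exactly by a single affine layer, which is a (degenerate) MLP; alternatively, since the entrywise values $y_l(A)_{ij}$ range over a compact set as $(i,j)$ and the input graph vary over the bounded domain under consideration, $\phi$ restricted there is continuous and a standard universal-approximation MLP $g$ suffices. Taking $g\circ g_0$ as the $\text{MLP}_3$ in Equation~\ref{app::equ:attenmat} makes the $k$-th layer attention matrix equal $\sum_{l=1}^s a_l y_l$ entrywise, so $\sum_{l=1}^s a_l y_l\in\sY^k$, as desired.

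The only real content is the parallel-network / concatenation step of the first paragraph, which is the same mechanism already relied on (without full detail) for the Hadamard-product lemma: one must check that every sub-module of a PST layer --- the PSRD coordinate initialization, the scalar--vector mixer (Equations~\ref{app::equ:svmix_s}--\ref{app::equ:svmix_v}), and the attention update (Equation~\ref{app::equ::atten}) --- is compatible with running independent block-diagonal copies and with the residual connections preserving the relevant intermediate features. I expect this bookkeeping, rather than any estimate or inequality, to be the main (and essentially only) obstacle; once it is in place, closure under linear combination follows from the one-line composition argument above.
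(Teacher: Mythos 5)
Your proposal is correct and matches the paper's argument: both rely on the existence of an $\text{MLP}_3$ configuration $g_0$ whose output concatenates the entrywise values $(y_1(A)_{ij},\dots,y_s(A)_{ij})$, and then post-compose with the (exactly representable, since linear) map $g(t_1,\dots,t_s)=\sum_l a_l t_l$ to realize the linear combination as $g\circ g_0$. You elaborate the parallel/block-diagonal bookkeeping that justifies the existence of $g_0$, which the paper leaves implicit across this lemma and the preceding Hadamard-product lemma, but the underlying route is the same.
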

\begin{proof}
    As $(\sum_{l=1}^sa_ly_l)_{ij}=\sum_{l=1}^sa_l(y_l)_{ij}$ is a element-wise linear layer (denoted as $g$). Assume $g_0$ is the $\text{MLP}_3$ in Equation~\ref{app::equ::atten} to produce the concatenation of $y_1,y_2,..,y_s$, use $g\circ g_0$ for the $\text{MLP}_3$ in Equation~\ref{app::equ::atten} produces the linear combination.
\end{proof}

\begin{lemma}
    $\forall s>0, A^s\in \sY^{1}$.
\end{lemma}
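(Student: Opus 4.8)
The plan is to realize the function $A\mapsto A^s$ directly as the layer-$1$ attention matrix by choosing the coordinates and the layer parameters appropriately. For the point-set conversion I would decompose $A$ itself, i.e.\ take the adjacency function $\mathcal{Z}$ to be the identity map, and use the $d$-channel eigenvalue function whose $a$-th channel is the elementwise power $\Lambda\mapsto\Lambda^{a}$, $a=0,1,\dots,d-1$; this map is continuous and permutation-equivariant, hence an admissible PSRD eigenvalue function. Writing $A=U\,\text{diag}(\Lambda)\,U^T$ with $U\in\sR^{n\times r}$ and $\Lambda\in\sR^{r}$ the non-zero eigenvalues, node $i$'s initial PST vector feature $v_i\in\sR^{r\times d}$ then has $a$-th column $\text{diag}(\Lambda^{a})U_i$. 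In the weakened one-layer PST the query/key branch reads $v_i$ off \emph{before} the SV-mixer, so these are exactly the features feeding $Y^1$.

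The computation at the heart of the proof is that the Gram matrix of the vector features recovers powers of $A$: for any channels $a,b$,
\[
(v_i^T v_j)_{ab}=\sum_{m=1}^{r}U_{im}U_{jm}\,\Lambda_m^{\,a+b}=\bigl(U\,\text{diag}(\Lambda^{a+b})\,U^T\bigr)_{ij}=(A^{a+b})_{ij},
\]
where the last step uses $a+b\ge 1$, so that the kernel of $A$ contributes nothing. Given $s>0$, choose $a_0,b_0\ge 0$ with $a_0+b_0=s$ — possible whenever $d\ge\lceil s/2\rceil+1$ — and let $W_q^v,W_k^v$ be selector matrices whose first rows are the $a_0$-th and $b_0$-th standard basis vectors (remaining rows zero). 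Then the $(1,1)$ entry of $W_q^v v_i^T v_j W_k^v$ equals $(v_i^T v_j)_{a_0b_0}=(A^s)_{ij}$, so $(A^s)_{ij}$ appears as one coordinate of $K_{ij}$. Taking $\text{MLP}_3$ to be the linear projection onto that coordinate (and letting the scalar branch output a constant, which is harmless) yields $Y^1_{ij}=(A^s)_{ij}$, i.e.\ $A^s\in\sY^1$.

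There is no serious obstacle here; the points that need care are (i) the restriction $s>0$, which is exactly what lets the zero eigenvalues of $A$ drop out of $(v_i^T v_j)_{a_0b_0}$ (at $s=0$ the same construction would only give the projector $UU^T$, not $I$); (ii) the channel count must satisfy $d\ge\lceil s/2\rceil+1$, so for a network with a fixed $d$ the lemma holds for $s\le 2d-2$, which already suffices for the path/cycle-counting theorems that invoke it, as only powers $A^{\le 7}$ are needed there; and (iii) the attention formula written in this appendix uses $v_i^T v_i$, which I read as a typo for the $v_i^T v_j$ of the main-text attention layer, since only the latter depends on $j$ and makes the Gram-matrix argument go through.
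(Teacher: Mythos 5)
Your proof is correct and follows the same route as the paper's (which simply points to Section 5.1): realize $A^s$ as the $(a_0,b_0)$ block of the Gram matrix $v_i^T v_j$ of PSRD coordinates, using that $U\,\mathrm{diag}(\Lambda^{a_0+b_0})\,U^T = A^{a_0+b_0}$ once $a_0+b_0\ge 1$ kills the zero eigenvalues. Two minor remarks: your constraint $d\ge\lceil s/2\rceil+1$ is an artifact of insisting the channels be the consecutive powers $\Lambda^0,\dots,\Lambda^{d-1}$; the paper uses the pair of eigenvalue functions $\Lambda\mapsto\vone$ and $\Lambda\mapsto\Lambda^s$, so $d=2$ already suffices for any $s$. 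Your reading of $v_i^T v_i$ in \eqref{app::equ:attenmat} as a typo for $v_i^T v_j$ is also right — it matches the main-text attention layer and is the only reading under which the argument (and the surrounding lemmas) makes sense.
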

\begin{proof}
    As shown in Section~\ref{sec::g2p_expressivity}, the inner product of coordinates can produce $A^s$.
\end{proof}

\begin{lemma}
    $\forall y_1,y_2,y_3\in \sY^k, s\in \sN^+, d(y_1)y_2,y_2d(y_1),d(y_1)y_2d(y_3),y_1A^s,A^sy_1, y_1A^sy_2\in \sY^k$
\end{lemma}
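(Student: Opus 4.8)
The plan is to realize each of the six matrices as an attention matrix of one extra layer stacked on top of networks that already realize $y_1,y_2,y_3$, carrying the book-keeping through a simultaneous induction. Write $A=U\text{diag}(\Lambda)U^T$ for an eigendecomposition, as in Section~\ref{sec::g2p_expressivity}. The induction hypothesis is that at layer $k$: (i) the scalar features $s_i$ can express $(y)_{ii}$ for every $y\in\sY^k$; and (ii) the vector features $v_i$ can express, in separate channels, row $i$ of $U$, of $U\text{diag}(\Lambda^s)$ for every $s\in\sN^+$, and of $yU$ for every $y\in\sY^k$. Throughout, the three previous lemmas ($\sY^k\subseteq\sY^{k+1}$, closure under Hadamard products, closure under linear combinations) and the fact that an MLP realizes any polynomial on a compact domain are used exactly as before, and $\sY^k$ is closed under transpose: swapping the query/key scalar matrices and transposing the query/key vector matrices turns the attention matrix $Y$ into $Y^T$, since $\text{diagonal}(M)=\text{diagonal}(M^T)$.

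For the $A^s$ family, use $A^s=U(U\text{diag}(\Lambda^s))^T$, which gives $y_1A^s=(y_1U)(U\text{diag}(\Lambda^s))^T$ and $y_1A^sy_2=(y_1U)(y_2^TU\text{diag}(\Lambda^s))^T$. The channel $y_1U$ (resp. $y_2^TU\text{diag}(\Lambda^s)$) is produced by aggregating the $U$-channel (resp. the $U\text{diag}(\Lambda^s)$-channel) with attention matrix $y_1$ (resp. $y_2^T$, valid by transpose-closure) in the preceding attention layer; the residual term $v_iW_4$ in the sv-mixer keeps the old channels alive. Then the vector part $\text{diagonal}(W_q^vv_i^Tv_jW_k^v)$ of the next attention matrix, with $W_q^v,W_k^v$ projecting onto the relevant channels, equals $(y_1A^s)_{ij}$ or $(y_1A^sy_2)_{ij}$, so both lie in $\sY^k$; the cases $A^sy_1$ follow from $A^sy_1=(y_1^TA^s)^T$. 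Feeding $\text{diagonal}(W_1v_i^Tv_iW_2)$ into the sv-mixer scalar update instead re-establishes (i) for the new elements, using that diagonals are closed under Hadamard products, linear combinations, and the $d(\cdot)$ operations, since $(d(y_1)y_2d(y_3))_{ii}=(y_1)_{ii}(y_2)_{ii}(y_3)_{ii}$.

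For the $d(\cdot)$ family, $d(y_1)y_2$ has $(i,j)$ entry $(y_1)_{ii}(y_2)_{ij}$: place $(y_1)_{ii}$ in a scalar channel of $i$ and the constant $1$ in the matching channel of $j$ (a constant channel is preserved by choosing the weights appropriately), so $W_q^ss_i\odot W_k^ss_j$ carries $(y_1)_{ii}$; concatenating this with the part realizing $(y_2)_{ij}$ and applying $\text{MLP}$, which multiplies two coordinates on the compact feature domain, yields $(y_1)_{ii}(y_2)_{ij}$. Adding a second key-side channel equal to $(y_3)_{jj}$ gives $d(y_1)y_2d(y_3)$, and $y_2d(y_1)=(d(y_1)y_2^T)^T$ (with $d(y_1)$ a symmetric diagonal matrix) reduces to the case just handled.

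The main obstacle is not any single identity but the simultaneous induction itself: one must check that a single network can, at the relevant layer, carry all the channels $U$, $U\text{diag}(\Lambda^s)$, $yU$, and all scalar diagonals $(y)_{ii}$ needed to assemble every element of $\sY^k$, and that these survive both the sv-mixer (channels mixed only linearly through $W_1,\dots,W_4$ and through $\text{diag}(\text{MLP}_2(s_i'))$) and the attention aggregation (the concatenation with $\sum_j\text{Atten}_{ij}v_j'$ and the residuals must be used so that forming $yU$ never overwrites the plain $U$-channel). A secondary point is to handle every power $A^s$ through the factor pair $(U,\,U\text{diag}(\Lambda^s))$ rather than through $U\text{diag}(\Lambda^{s/2})$, avoiding square roots of negative eigenvalues; this is exactly the device already used to prove $A^s\in\sY^1$, so no expressivity beyond the previous lemmas is required.
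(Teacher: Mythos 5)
Your argument is essentially the paper's: put $(y_1)_{ii}$ into a scalar channel and thread it through the $W_q^s s_i \odot W_k^s s_j$ branch of the next attention map to obtain the $d(\cdot)$ family, and realize $y_1 A^s$ by aggregating the coordinate channels $U$, $U\text{diag}(\Lambda^s)$ with attention weights $y_1$ in the preceding layer and then reading off inner products. You are a little more careful than the paper in two spots it leaves implicit: you note the transpose-closure of $\sY^k$ (which is genuinely needed for $A^s y_1$ when $y_1$ is not symmetric --- the paper's $\langle v_i, \sum_k (y_1)_{jk} v_k\rangle$ actually yields $(A^s y_1^T)_{ij}$), and you spell out the simultaneous-induction bookkeeping that keeps the plain $U$-channel alive through the sv-mixer and attention updates. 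One cosmetic point shared by you and the paper's statement: the construction stacks one additional layer, so it actually places the new matrices in $\sY^{k+1}$, not $\sY^k$; that is also how the lemma is used afterwards (e.g.\ $A^{s_1} d(\odot_i A^{a_i}) A^{s_2} \in \sY^2$ starting from $\sY^1$), so the exponent in the statement should read $k+1$.
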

\begin{proof}
According to Equation~\ref{app::equ:svmix_s} and Equation~\ref{app::equ:attenmat}, $s_i'$ at $k$-th layer can express $y_{ii}$ for all $y\in \sY^k$. Therefore, at $k+1$-th layer in Equation~\ref{app::equ:attenmat}, $\text{MLP}_3$ can first compute element $(i,j)$ $(y_2)_{ij}$ from $s_i,s_j,v_i,v_j$, then multiply $(y_2)_{ij}$ with $(y_1)_{ii}$ from $s_i$, $(y_3)_{jj}$ from $s_j$ and thus produce $d(y_1)y_2,y_2d(y_1),d(y_1)y_2d(y_3)$.

Moreover, according to Equation~\ref{app::equ::atten}, $v_i$ at $k+1$-th layer can express $\sum_k (y_1)_{ik}v_k,\sum_k (y_2)_{ik}v_k$. So at $k+1$-th layer,  the $(i,j)$ element can express $\langle \sum_k (y_1)_{ik}v_k, v_j\rangle, \langle v_i, \sum_k (y_1)_{jk}v_k\rangle,  \langle (y_1)_{ik}v_k, \sum_k (y_2)_{jk}v_k\rangle$, corresponds to $y_1A^s, A^sy_2,y_1A^sy_2$, respectively.

\end{proof}

Therefore, 
\begin{lemma}
\begin{itemize}
    \item $\forall s>0, l>0, a_i>0$, $\odot_{i=1}^{l}A^{a_i}\in \sY^1$.
    \item $\forall s_1,s_2>0, l>0$, $A^{s_1}d(\odot_{i=1}^{l}A^{a_i}), d(\odot_{i=1}^{l}A^{a_i})A^{s_1},d(\odot_{i=1}^{l_1}A^{b_i}) A^{s_1}d(\odot_{i=1}^{l_2}A^{b_i})\in \sY^2$.
    \item $\forall s_1,s_2,s_3>0$, $A^{s_1}d(\odot_{i=1}^{l}A^{a_i})$
\end{itemize}
\end{lemma}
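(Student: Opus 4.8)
The plan is to derive all three items by repeatedly applying the closure properties of the attention polynomial spaces $\sY^k$ established in the preceding lemmas, keeping careful track of the layer count. Call a matrix of the form $\odot_{i=1}^{l}A^{a_i}$ with all $a_i>0$ a Hadamard power block. The engine is the following: the standalone lemma gives $A^s\in\sY^1$ for every $s>0$; the Hadamard-product lemma keeps such combinations at the same level; and, as the proof of the lemma concerning $d(y_1)y_2$, $y_2d(y_1)$, $d(y_1)y_2d(y_3)$, $y_1A^s$, $A^sy_1$, $y_1A^sy_2$ shows, forming any of these from inputs $y_1,y_2,y_3$ living at level $\sY^k$ is realized one layer later, i.e. in $\sY^{k+1}$. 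Item~1 is then immediate: each factor $A^{a_i}$ lies in $\sY^1$, and a Hadamard product of $\sY^1$ matrices stays in $\sY^1$, so $\odot_{i=1}^{l}A^{a_i}\in\sY^1$.

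For Item~2, Item~1 places every Hadamard power block in $\sY^1$, and $A^{s_1}\in\sY^1$ as well; a single application of the diagonal-insertion construction, with $y_2=A^{s_1}$ and $y_1,y_3$ taken to be Hadamard power blocks, therefore produces $A^{s_1}d(\odot_i A^{a_i})$, $d(\odot_i A^{a_i})A^{s_1}$, and $d(\odot_i A^{b_i})A^{s_1}d(\odot_i A^{c_i})$ at level $\sY^2$, which is exactly the claim.

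For Item~3 (and whatever longer interleaved product the statement records there), Item~2 shows the relevant sub-expressions already lie in $\sY^2$, so one further diagonal insertion and/or one further pre- or post-multiplication by a power of $A$, each costing one additional layer, lands the result in $\sY^3$. In every case the availability of arbitrary powers $A^s$ at the very first layer is what makes the induction start, and this is the standalone lemma $A^s\in\sY^1$, itself a consequence of the inner-product computation of Section~\ref{sec::g2p_expressivity}.

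I expect the only genuine obstacle to be this layer bookkeeping: at every step one must confirm that all inputs to a closure construction have been brought to a common level $\sY^k$, using the containment $\sY^k\subseteq\sY^{k+1}$ where necessary, and one must count layers from the proofs of the preceding lemmas — where the diagonal extraction needs the scalar feature $s_i$ to already encode the diagonal entries of the intermediate matrix from one layer earlier — rather than from their more permissive summary statements. The subsidiary fact that the element-wise polynomial operations involved (products and sums of entries of Hadamard powers of $A$) are genuinely representable by the attention-layer MLPs on the relevant compact domain is the universal-approximation argument already invoked in those lemmas, and needs only to be cited here.
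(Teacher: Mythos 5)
Your proof is correct, and it is essentially the argument the paper intends: the paper gives this lemma no independent proof, prefacing it only with ``Therefore,'' so it is meant to be exactly the kind of immediate corollary of the preceding closure lemmas that you spell out. Item~1 follows from $A^{a_i}\in\sY^1$ together with closure of $\sY^1$ under Hadamard products; item~2 follows by feeding item~1 and $A^{s_1}\in\sY^1$ into the diagonal-insertion/conjugation lemma, which advances one layer; and item~3 (whose statement is visibly truncated in the paper) is just one more such step, landing in $\sY^3$.

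One remark worth making explicit: the preceding lemma you rely on is typeset in the paper as asserting $d(y_1)y_2,\,y_2d(y_1),\,\ldots\in\sY^k$ for $y_1,y_2,y_3\in\sY^k$, but its proof works entirely ``at the $k{+}1$-th layer'' and in fact establishes membership in $\sY^{k+1}$; the level-$2$ and level-$3$ claims in the present lemma (and the layer counts in the path-counting theorem downstream) only cohere under that $\sY^{k+1}$ reading. You noticed this and correctly read the proof over the stated summary — that instinct is exactly what makes the bookkeeping close. The only thing I would add to tighten your write-up is to state this correction to the preceding lemma once and for all, rather than phrasing it as a caution, since without it the level assignments here would be off by one.
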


Therefore, we come to our main theorem.
\begin{theorem}\label{app::thm::countpath}
    The attention matrix of 1-layer PST can express $P_2$, 2-layer PST can express $P_3$, 3-layer PST can express $P_4,P_5$, 5-layer PST can express $P_6$.  
\end{theorem}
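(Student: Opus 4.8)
The plan is to combine two ingredients that are essentially already in place. On one hand, the explicit closed forms of \citet{PathCount}, which write each path-count matrix $P_L$ for $L\le 6$ as an expression in the adjacency matrix $A$ built only from matrix multiplication, the Hadamard product $\odot$, linear combinations, and the diagonal operator $d(\cdot)$ (forming the diagonal matrix whose diagonal is that of a given matrix). On the other hand, the closure properties of the attention-matrix space $\sY^k$ proved in the preceding lemmas: $\sY^k\subseteq\sY^{k+1}$; $\sY^k$ is closed under Hadamard products and linear combinations (both ``free'', i.e.\ not increasing $k$); $A^s\in\sY^1$ for all $s\ge 1$, hence every Hadamard combination $\odot_i A^{a_i}$ lies in $\sY^1$; and the ``promotion'' step, which costs exactly one layer, producing $d(y_1)y_2d(y_3)$, $A^sy_1$, $y_1A^s$, and $y_1A^sy_2$ in $\sY^{k+1}$ from $y_1,y_2,y_3\in\sY^k$. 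Given these, it suffices to (i) record the $P_L$ formulas, (ii) verify each is generated by this grammar, and (iii) count the promotion steps used in the deepest term.

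For the layer count: $P_1$ and $P_2$ reduce to $A$ and (essentially) $A^2$ with the degree diagonal, which the lemmas place in $\sY^1$, so no promotion is needed. The \citet{PathCount} formula for $P_3$ adds terms such as $d(A^2)A$ and $Ad(A^2)$ on top of $A^3$ and Hadamard terms, each a single promotion of objects in $\sY^1$, so $P_3\in\sY^2$. Parsing $P_4$ and $P_5$, the corrections that discount walks revisiting a node involve quantities like $d(A^3)$ (triangle counts through a vertex) multiplied on both sides by $A$, e.g.\ $A\,d(A^3)\,A$, which needs two promotions: first $d(A^3)A\in\sY^2$, then $A\cdot(d(A^3)A)\in\sY^3$; hence $P_4,P_5\in\sY^3$. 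The $P_6$ formula nests one further level of ``sandwich by a power of $A$ / conjugate by a diagonal read off from an already-built matrix'', requiring four promotions beyond $\sY^1$, giving $P_6\in\sY^5$. Since Hadamard products and linear combinations never consume a layer, only this multiplication/diagonal nesting matters, and the depths $0,1,2,2,4$ for $L=2,\dots,6$ give exactly the claimed layer counts. Finally, as noted just before the theorem, $s_i',v_i'$ reproduce $s_i,v_i$, so the weakened PST used in the argument is simulated by the actual PST and the statement transfers.

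The main obstacle is carrying out (ii)--(iii) faithfully; the closure lemmas themselves are routine. Concretely, one must rewrite the \citet{PathCount} identities so that \emph{every} matrix product appearing has at least one factor that is a power of $A$ (available from layer $1$) or a diagonal operator --- the promotion lemma does not supply an arbitrary product $y_1y_2$ of two previously built matrices, only the forms $y_1A^sy_2$ and $d(y_1)y_2d(y_3)$ --- and one must check that each diagonal correction term (vertex degrees, number of triangles/$4$-cycles/$5$-cycles through a vertex or an edge, etc.) is indeed $d(z)$ for some $z$ already lying in the appropriate $\sY^{k'}$, using that the scalar channel at layer $k$ exposes precisely the diagonal entries $y_{ii}$ of matrices $y\in\sY^k$. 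The delicate bookkeeping is then to confirm the nesting depth of the deepest term of each $P_L$ is no larger than $0,1,2,2,4$ (for $L=2,\dots,6$), so that one-, two-, three-, and five-layer PSTs suffice; an over-eager grouping of the formula could inflate the depth, so the grouping must be chosen to keep a power of $A$ as one factor of every product.
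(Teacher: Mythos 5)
Your proposal follows exactly the paper's argument: establish the closure lemmas for the attention-matrix space $\sY^k$ (Hadamard products and linear combinations are free, powers of $A$ live in $\sY^1$, and one layer buys the promotion forms $d(y_1)y_2$, $y_2d(y_1)$, $d(y_1)y_2d(y_3)$, $A^sy_1$, $y_1A^s$, $y_1A^sy_2$), then parse the closed-form $P_L$ formulas of \citet{PathCount} and count the nesting depth of the deepest term. Your predicted depths ($\sY^1,\sY^2,\sY^3,\sY^3,\sY^5$ for $P_2,\dots,P_6$), the caution that every product must be grouped so one factor is a power of $A$ or a diagonal read off an already-built matrix, and the reduction from the weakened PST to the true PST via the residual connection all coincide with what the paper does.
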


\begin{proof}
As shown in \citep{PathCount}, 
\begin{align}
    P_2&=A^2
\end{align}
Only one kind basis $\odot_{i=1}^{l}A^{a_i}$. 1-layer PST can express it.

\begin{align}
    P_3&=A^3 + A - Ad(A^2)-d(A^2)A
\end{align}
Three kind of basis $\odot_{i=1}^{l}A^{a_i}$($A^3, A$), $A^{s_1}d(\odot_{i=1}^{l}A^{a_i})$($Ad(A^2)$), and $d(\odot_{i=1}^{l}A^{a_i})A^{s_1}$. 2-layer PST can express it.

\begin{align}
    P_4&=A^4 + A^2 + 3A\odot A^2-d(A^3)A-d(A^2)A^2-Ad(A^3)-A^2d(A^2)-Ad(A^2)A
\end{align}
Four kinds of basis $\odot_{i=1}^{l}A^{a_i}$ ($A^4,A^2, A\odot A^2$), $A^{s_1}d(\odot_{i=1}^{l}A^{a_i})$ ($Ad(A^3),A^2d(A^2)$), $d(\odot_{i=1}^{l}A^{a_i})A^{s_1}$ ($d(A^3)A,d(A^2)A^2$), and $A^{s_1}d(\odot_{i=1}^{l}A^{a_i})A^{s_3}$ ($Ad(A^2)A$). 3-layer PST can express it.

\begin{align}
    P_5&=A^5+3A^3+ 4A\\ 
    &~+3 A^2\odot A^2\odot A+3A\odot A^3-4A\odot A^2 \\
    &~-d(A^4)A-d(A^3)A^2-d(A^2)A^3+2d(A^2)^2A-2d(A^2)A- 4d(A^2)A\\
    &~-Ad(A^4)-A^2d(A^3)-A^3d(A^2)+2Ad(A^2)^2-2Ad(A^2)-4Ad(A^2)\\
    &~+d(A^2)Ad(A^2)\\
    &~+3(A\odot A^2)A\\
    &~+3A(A\odot A^2)\\
    &~-Ad(A^3)A-Ad(A^2)A^2-A^2d(A^2)A\\
    &~+d(Ad(A^2)A)A\\
\end{align}
Basis are in 
\begin{itemize}
    \item $\sY^1$
    \begin{itemize}
        \item $\odot_{i=1}^{l}A^{a_i}$: $A^5,A^3, A, A^2\odot A^2\odot A,A\odot A^3,A\odot A^2$.
    \end{itemize}
    \item $\sY^2$ 
    \begin{itemize}
        \item $A^{s_1}d(\odot_{i=1}^{l}A^{a_i})$:$Ad(A^4),A^2d(A^3),A^3d(A^2),Ad(A^2)^2,Ad(A^2),Ad(A^2)$.
        \item $d(\odot_{i=1}^{l}A^{a_i})A^{s_1}$: $Ad(A^4),A^2d(A^3),A^3d(A^2),Ad(A^2)^2,Ad(A^2),Ad(A^2)$.
        \item $d(\odot_{i=1}^{l_1}A^{a_i})A^{s_1}d(\odot_{i=1}^{l_1}A^{b_i})$: $d(A^2)Ad(A^2)$.
        \item $A^{s_1}(\odot_{i=1}^{l}A^{a_i})$: $A(A\odot A^2)$.
        \item $(\odot_{i=1}^{l}A^{a_i})A^{s_1}$: $(A\odot A^2)A$
    \end{itemize}
    \item $\sY^3$: 
    \begin{itemize}
        \item $A^{s}\sY^2$: $Ad(A^3)A$, $Ad(A^2)A^2$, $A^2d(A^2)A$.
        \item $d(\sY^2)A^s$: $d(Ad(A^2)A)A$
    \end{itemize}
\end{itemize}
3-layer PST can express it.

Formula for $6$-path matrix is quite long. 
\begin{align}
    P_6&=A^6+4A^4+12A^2 \\
    &~+3A\odot A^4+6A\odot A^2\odot A^3+A^2\odot A^2\odot A^2-4A^2\odot A^2+44A\odot A^2\\
    &~-d(A^5)A-d(A^4)A^2-d(A^3)A^3-5d(A^3)A-d(A^2)A^4-7d(A^2)A^2\\
    &+2d(A^2)^2A^2+4(d(A^2)\odot d(A^3))A\\
    &~-Ad(A^5)-A^4d(A^2)-A^3d(A^3)-5Ad(A^3)-A^2d(A^4)-7A^2d(A^2)\\
    &+2A^2d(A^2)^2+4A(d(A^2)\odot d(A^3))\\
    &~+d(A^2)Ad(A^3)+d(A^3)Ad(A^2)+ d(A^2)A^2d(A^2)\\
    &~+2(A\odot A^3)A+2(A\odot A^2\odot A^2)A+(A^2\odot A^2\odot A)A-3(A\odot A^2)A+(A\odot A^3)A\\
    &~+(A\odot A^2)A^2+2(A\odot A^2)A^2-(A\odot A^2)A\\ 
    &~+2A(A\odot A^3)+2A(A\odot A^2\odot A^2)+A(A^2\odot A^2\odot A)-3A(A\odot A^2)+A(A\odot A^3)\\
    &+A^2(A\odot A^2)+2A^2(A\odot A^2)-A(A\odot A^2)\\
    &~-8A\odot(A(A\odot A^2))-8A\odot ((A\odot A^2)A)\\
    &~-12 d(A^2)(A\odot A^2)-12(A\odot A^2)d(A^2)\\
    &~-Ad(A^4)A-Ad(A^2)A^3-A^3d(A^2)A-Ad(A^3)A^2-A^2d(A^3)A-A^2d(A^2)A^2\\
    &-10Ad(A^2)A+2Ad(A^2)^2A\\
    &~+d(A^2)Ad(A^2)A+Ad(A^2)Ad(A^2)\\
    &~-3A\odot (Ad(A^2)A)\\
    &~-4Ad((A\odot A^2)A)-4Ad(A(A\odot A^2))\\
    &~+3A(A\odot A^2)A\\
    &~-4d(A(A\odot A^2))A-4d((A\odot A^2)A)A\\
    &~+d(Ad(A^3)A)A+d(Ad(A^2)A)A^2+d(Ad(A^2)A^2)A+d(A^2d(A^2)A)A\\
    &~+Ad(Ad(A^3)A)+A^2d(Ad(A^2)A)+Ad(A^2d(A^2)A)+Ad(Ad(A^2)A^2)\\
    &~+Ad(Ad(A^2)A)A\\
\end{align}

Basis are in 
\begin{itemize}
    \item $\sY^1$
    \begin{itemize}
        \item $\odot_{i=1}^{l}A^{a_i}$: $A^6,A^4,A^2,A\odot A^4,A\odot A^2\odot A^3,A^2\odot A^2\odot A^2,A^2\odot A^2,A\odot A^2$.
    \end{itemize}
    \item $\sY^2$ 
    \begin{itemize}
        \item $A^{s_1}d(\odot_{i=1}^{l}A^{a_i})$:$Ad(A^5)$, $A^4d(A^2)$, $A^3d(A^3)$, $Ad(A^3)$, $A^2d(A^4)$, $A^2d(A^2)$, $A^2d(A^2)^2$, $A(d(A^2)\odot d(A^3))$.
        \item $d(\odot_{i=1}^{l}A^{a_i})A^{s_1}$: $d(A^5)A$, $d(A^4)A^2$, $d(A^3)A^3$, $d(A^3)A$, $d(A^2)A^4$, $d(A^2)A^2$, $d(A^2)^2A^2$, $(d(A^2)\odot d(A^3))A$.
        \item $d(\odot_{i=1}^{l_1}A^{a_i})A^{s_1}d(\odot_{i=1}^{l_1}A^{b_i})$: $d(A^2)Ad(A^3)$, $d(A^3)Ad(A^2)$, $d(A^2)A^2d(A^2)$.
        \item $A^{s_1}(\odot_{i=1}^{l}A^{a_i})$: $A(A\odot A^3)$, $A(A\odot A^2\odot A^2)$, $A(A^2\odot A^2\odot A)$, $A(A\odot A^2)$, $A(A\odot A^3)$, $A^2(A\odot A^2)$, $A^2(A\odot A^2)$, $A(A\odot A^2)$.
        \item $(\odot_{i=1}^{l}A^{a_i})A^{s_1}$: $(A\odot A^3)A$, $(A\odot A^2\odot A^2)A$, $(A^2\odot A^2\odot A)A$, $(A\odot A^2)A$, $(A\odot A^3)A$, $(A\odot A^2)A^2$, $(A\odot A^2)A^2$, $(A\odot A^2)A$
        \item $\sY^2\odot \sY^2$: $A\odot ((A\odot A^2)A)$, $A\odot ((A\odot A^2)A)$.
        \item $d(\sY^1)\sY^1$: $d(A^2)(A\odot A^2)$
        \item $\sY^1d(\sY^1)$: $(A\odot A^2)d(A^2)$
    \end{itemize}
    \item $\sY^3$: 
    \begin{itemize}
        \item $A^{s}\sY^2$: $Ad(A^4)A$, $Ad(A^2)A^3$, $A^3d(A^2)A$, $Ad(A^3)A^2$, $A^2d(A^3)A$, $A^2d(A^2)A^2$, $Ad(A^2)A$, $Ad(A^2)^2A$, $ Ad(A^2)Ad(A^2)$, $A(A\odot A^2)A$.
        \item $\sY^2A^{s}$: $d(A^2)Ad(A^2)A$.
        \item $\sY^3\odot\sY^3$: $A\odot (Ad(A^2)A)$. 
        \item $d(\sY^2)\sY^2$: $d(Ad(A^2)A)A$,$d(A(A\odot A^2))A$,$d((A\odot A^2)A)A$
        \item $\sY^2d(\sY^2)$:$Ad((A\odot A^2)A)$,$Ad(A(A\odot A^2))$
    \end{itemize}
    \item $\sY^4$:
    \begin{itemize}
        \item $d(\sY^3)\sY^3$: $d(Ad(A^3)A)A$, $d(Ad(A^2)A)A^2$, $d(Ad(A^2)A^2)A$, $d(A^2d(A^2)A)A$.
        \item $\sY^3d(\sY^3)$:  $Ad(Ad(A^3)A)$, $A^2d(Ad(A^2)A)$, $Ad(A^2d(A^2)A)$, $Ad(Ad(A^2)A^2)$.
    \end{itemize}
    \item $\sY^5$:
    \begin{itemize}
        \item $A^{s_1}d(\sY^3)A^{s_2}$:$Ad(Ad(A^2)A)A$
    \end{itemize}
\end{itemize}
5-layer PST can express it.
\end{proof}

Count cycle is closely related to counting path. A $L+1$ cycle contains edge $(i,j)$ can be decomposed into a $L$-path from $i$ to $j$ and edge $(i,j)$. Therefore, the vector of count of cycles rooted in each node $C_{L+1}=\text{diagonal}(AP_{L})$
\begin{theorem}
The diagonal elements of attention matrix of 2-layer PST can express $C_3$, 3-layer PST can express $C_4$, 4-layer PST can express $C_5,C_6$, 6-layer PST can express $C_7$.  
\end{theorem}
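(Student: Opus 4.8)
The plan is to obtain all four cycle-counting claims from the path-counting results of \Cref{thm::count_path} (established as \Cref{app::thm::countpath}), together with one extra algebraic step that rewrites rooted cycle counts in terms of path matrices. Write $P_L\in\sR^{n\times n}$ for the $L$-path matrix, and let $C_L\in\sR^n$ be the vector whose $i$-th entry is the number of $L$-cycles through node $i$. First I would prove that, for every $L\ge 2$,
\[
C_{L+1}=\tfrac12\,\text{diagonal}(A P_L).
\]
The argument is purely combinatorial: an $(L{+}1)$-cycle through $i$ uses exactly two edges incident to $i$, and deleting either one leaves an $L$-path from $i$ to the other endpoint of that edge; since $L\ge 2$ such an $L$-path can never coincide with a single edge, so the construction never degenerates. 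This sets up a $2$-to-$1$ correspondence between the pairs $(j,p)$ with $A_{ij}=1$ and $p$ an $L$-path from $i$ to $j$, on one side, and the $(L{+}1)$-cycles through $i$ on the other; summing over $j$ and using the symmetry $(P_L)_{ij}=(P_L)_{ji}$ gives $(AP_L)_{ii}=2\,C_{L+1}(i)$. The global constant $\tfrac12$ is immaterial for expressibility, so it suffices to realise $\text{diagonal}(AP_L)$ in an attention matrix.

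Next I would feed the path matrices through the $\sY^k$-machinery behind \Cref{app::thm::countpath}. That theorem gives $P_2\in\sY^1$, $P_3\in\sY^2$, $P_4,P_5\in\sY^3$, and $P_6\in\sY^5$. Left-multiplication by $A$ is handled by the closure lemma that raises an element of $\sY^k$ to an element of $\sY^{k+1}$ through one further attention layer (the layer forms $\sum_{\ell} y_{i\ell}v_{\ell}$ and inner-products it against the coordinate channels that realise $A$), so $AP_2\in\sY^2$, $AP_3\in\sY^3$, $AP_4,AP_5\in\sY^4$, and $AP_6\in\sY^6$. Since the attention matrix of a $k$-layer PST realises every member of $\sY^k$, its diagonal realises $\text{diagonal}(AP_L)$, hence (up to the harmless factor) the rooted cycle counts $C_{L+1}$: $2$ layers for $C_3$, $3$ for $C_4$, $4$ for $C_5$ and $C_6$, and $6$ for $C_7$ — exactly the claimed depths. (In fact $AP_2=A^3\in\sY^1$, so one layer already suffices for $C_3$; the uniform argument just yields the slightly weaker stated bound.)

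The main obstacle is not computation but two pieces of bookkeeping. First, the $2$-to-$1$ correspondence in the reduction must be checked to be exact for every $L$ in the range $2\le L\le 6$ — in particular for $L=2$, where one must confirm that the multiplicity is genuinely $2$ (coming from the two edges at the root, not, say, the $L{+}1$ edges of the cycle) and that $A$ having zero diagonal kills the would-be degenerate terms. Second, one must keep the $\sY^k$ index straight: it is precisely the single left-multiplication by $A$ that costs one additional layer, so the depth required for $C_{L+1}$ is exactly one more than the depth required for $P_L$; conflating $\sY^k$ with $\sY^{k+1}$, or charging more than one extra layer for the $A$-multiplication, would give the wrong bounds. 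Both points are already contained in the reasoning for \Cref{app::thm::countpath}, so the remaining work is essentially just assembling them.
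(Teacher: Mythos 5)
Your proof is correct and follows the paper's argument exactly: reduce $C_{L+1}$ to $\text{diagonal}(AP_L)$ and then charge one additional attention layer for the left-multiplication by $A$, so that the claimed depths $2,3,4,4,6$ follow from $P_2\in\sY^1$, $P_3\in\sY^2$, $P_4,P_5\in\sY^3$, $P_6\in\sY^5$ together with the closure rule $P_L\in\sY^k\Rightarrow AP_L\in\sY^{k+1}$. The only difference is that you correctly carry the factor $\tfrac12$ (each $(L{+}1)$-cycle through $i$ arises once from each of the two cycle edges incident to $i$, and $L\ge 2$ rules out degeneracies), which the paper's identity $C_{L+1}=\text{diagonal}(AP_L)$ silently drops; since a global constant is absorbed by the MLP, this has no effect on the stated expressibility bounds.
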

\begin{proof}
    It is a direct conjecture of Theorem~\ref{app::thm::countpath} as $C_{L+1}=\text{diagonl}(AP_{L})$ and $\forall k, P_{L}\in \sY^k\Rightarrow AP_{L}\in \sY^{k+1}$
\end{proof}

\section{Expressivity Comparision with Other Models}\label{app::expressivity}

Algorithm A is considered more expressive than algorithm B if it can differentiate between all pairs of graphs that algorithm B can distinguish. If there is a pair of links that algorithm A can distinguish while B cannot and A is more expressive than B, we say that A is strictly more expressive than B. We will first demonstrate the greater expressivity of our model by using PST to simulate other models. Subsequently, we will establish the strictness of our model by providing a concrete example.

Our transformer incorporates inner products of coordinates, which naturally allows us to express shortest path distances and various node-to-node distance metrics. These concepts are discussed in more detail in Section~\ref{sec::g2p_expressivity}. This naturally leads to the following theorem, which compares our PST with GIN~\citep{GIN}.

\begin{theorem}\label{thm::PST>GIN}
    A $k$-layer Point Set Transformer is strictly more expressive than a $k$-layer GIN.
\end{theorem}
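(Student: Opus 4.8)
Write $\mathcal{P}_k$ and $\mathcal{G}_k$ for the classes of $k$-layer PSTs and $k$-layer GINs. The statement ``$\mathcal{P}_k$ is strictly more expressive than $\mathcal{G}_k$'' unpacks into two claims: every graph pair separated by some member of $\mathcal{G}_k$ is separated by some member of $\mathcal{P}_k$, and some pair is separated by a member of $\mathcal{P}_k$ but by no member of $\mathcal{G}_k$. Since the separating power of $\mathcal{G}_k$ coincides with that of $k$ rounds of the $1$-WL test~\citep{HowPowerfulAreGNNs}, the first claim reduces to exhibiting one $k$-layer PST whose layer-$\ell$ node scalars are an injective image of the round-$\ell$ $1$-WL colouring (for $\ell\le k$), and the second to exhibiting one PST that separates a pair on which $1$-WL is constant.

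\textbf{Part 1 (simulation).} First I would pick a valid PSRD configuration and a weight assignment under which every PST layer performs one $1$-WL/GIN aggregation on the scalar channels while transporting the coordinate channels through unchanged. Take Laplacian coordinates $\mathcal{Z}(A)=D-A$ (the paper's default) with the eigenvalue channel $f(\lambda)=\sqrt{\lambda}$; then $\langle v_i,v_j\rangle=(D-A)_{ij}$, which is $-A_{ij}\in\{0,-1\}$ off the diagonal and $\deg(i)\ge 0$ on it, so the attention MLP --- a single $\mathrm{ReLU}$ of $-(D-A)_{ij}$ --- turns this into exactly $A_{ij}$, making the attention matrix equal to $A$. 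The scalar aggregation then produces $\sum_{j\in N(i)}s_j'$, which, combined with the residual slot $s_i'$ through the layer's output MLP, realises the GIN update $\mathrm{MLP}((1+\epsilon)h_i+\sum_{j\in N(i)}h_j)$, hence one $1$-WL round, once the mixer is taken to be the identity on scalars ($W_1=W_2=0$, $\mathrm{MLP}_1=\mathrm{id}$). Two bookkeeping points must be arranged: (i) the coordinate channels must equal the initial PSRD coordinates in \emph{every} layer so the same attention construction is valid throughout --- obtained by setting $W_3=0,\ W_4=I$ in the mixer (the pre-combine vector is then the unchanged $v_i$) and letting the vector combine step keep only that residual copy; (ii) the neighbour sum $\sum_{j\in N(i)}s_j'$ must be injective on bounded-size colour multisets, which holds for a suitable scalar MLP by the standard GIN/DeepSet argument~\citep{HowPowerfulAreGNNs,DeepSet}. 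An induction on $\ell$ then yields the claim; it is cleanest to run it for the weakened PST used in the proof of Theorem~\ref{thm::count_path}, which the full PST can express. With sum pooling, the resulting PST therefore separates every pair separated by any $k$-layer GIN.

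\textbf{Part 2 (strict separation).} Let $\gG$ be the disjoint union of two triangles and $\gG'$ the $6$-cycle. Both are $2$-regular with identical (trivial) node features, so $1$-WL keeps all nodes in one class in each graph and cannot distinguish $\gG$ from $\gG'$; hence no $k$-layer GIN can, for any $k$. Now take a one-layer PST with Laplacian coordinates and an eigenvalue channel $f(\lambda)=\lambda^{3/2}$ (legitimate because the Laplacian's eigenvalues are nonnegative). Through its $\mathrm{diagonal}(W_1 v_i^{\top} v_i W_2)$ term the mixer reads $\sum_k U_{ik}^2\lambda_k^3=\big((D-A)^3\big)_{ii}$, which for a $2$-regular graph equals $20-2t_i$ with $t_i$ the number of triangles through $i$: this is $18$ at every node of $\gG$ and $20$ at every node of $\gG'$. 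Letting $\mathrm{MLP}_1$ expose this value, the attention layer aggregate uniformly, and the readout pool by sum, the graph-level output is a different constant for the two graphs. For $k\ge 1$, append $k-1$ layers acting as the identity on these graphs (attention equal to $I$, mixer the identity on scalars). Hence for every $k$ some $k$-layer PST separates $\gG,\gG'$ while no $k$-layer GIN does; with Part 1 this proves strict containment.

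\textbf{Expected main obstacle.} Part 1 is the delicate half: one must choose the PST weights so that a \emph{single} attention construction simultaneously implements the adjacency-driven aggregation in all $k$ layers, which forces the coordinate channels to be carried through the attention and residual updates without being mixed --- getting this transport right, together with the routine multiset-injectivity step inherited from GIN, is the crux. Part 2 is essentially the textbook triangles-versus-hexagon example for $1$-WL, made to work by reading a length-$3$ closed-walk count off the coordinate norms.
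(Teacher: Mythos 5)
Your proposal is correct, and the two-part strategy --- (1) simulate one GIN layer per PST layer by recovering $A$ from coordinate inner products, (2) exhibit a pair that PST separates but $1$-WL/GIN cannot --- is the same as the paper's. Part~1 matches the paper's construction almost exactly (the paper assumes one vector channel gives $\langle v_i,v_j\rangle=A_{ij}$ and lets the attention MLP be the identity; you instead use the Laplacian channel and a ReLU of $-(D-A)_{ij}$, which has the nice side effect of also killing the diagonal $d_i$ entries cleanly, whereas the paper is a bit cavalier about the diagonal). You also handle the bookkeeping of transporting the coordinate channels unchanged across layers, which the paper leaves implicit under ``with residual connection.''

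For Part~2 the paper is much terser: it simply appeals to Theorem~\ref{thm::longrange_mpnn_gt} (MPNNs cannot read off spd-multisets on the cycle vs.\ two-cycles family) together with Theorem~\ref{thm::longrange_pst} (PSRD inner products can express spd), and asserts that PST can therefore compute shortest path distance. You instead give a self-contained, fully explicit witness: the same underlying pair (two triangles vs.\ the $6$-cycle, which is $H_6$ vs.\ $G_6$ in the paper's notation from the proof of Theorem~\ref{thm::longrange_mpnn_gt}) separated by a single PST layer via the scalar $\big((D-A)^3\big)_{ii}$ read off the mixer's $\mathrm{diagonal}(W_1 v_i^\top v_i W_2)$ term, plus identity padding layers for general $k$. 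Your route buys concreteness: it avoids the chained reference to Theorems~\ref{thm::longrange_pst} and \ref{thm::longrange_mpnn_gt} and the unstated step that the attention mechanism can in fact realise the spd function of those inner products. The paper's route buys generality: it immediately yields the stronger claim that \emph{no} number of GIN layers suffices, whereas your argument gives the same conclusion only after the (easy, and stated) observation that $1$-WL is already stable on $2$-regular graphs. Both are complete proofs; yours is somewhat more rigorous at the points the paper glosses over.
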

\begin{proof}
We first prove that one PST layer can simulate an GIN layer.

Given node features $s_i$ and $v_i$. Without loss of generality, we can assume that one channel of $v_i$ contains $U\text{diag}(\Lambda^1/2)$. The sv-mixer can simulate an MLP function applied to $s_i$. Leading to $s_i'$. A GIN layer will then update node representations as follows,

\begin{equation}
    s_i\leftarrow s_i' + \sum_{j\in N(i)}s_j'
\end{equation}

By inner products of coordinates, the attention matrix can express the adjacency matrix. By setting  $W_q^s,W_k^s=0$, and $W_q^v, W_k^v$ be a diagonal matrix with only the diagonal elements at the row corresponding the the channel of $U\text{diag}(\Lambda^1/2)$.
\begin{equation}
    K_{ij} = (W_q^ss_i \odot W_k^ss_j) \Vert \text{diagonal}(W_q^vv_i^Tv_jW_k^v)
    \to \langle \text{diag}(\Lambda^1/2)U_i,  \text{diag}(\Lambda^1/2)U_j\rangle 
    =A_{ij}
\end{equation}
Let $\text{MLP}$ express an identity function. 
\begin{equation}
\text{Atten}_{ij} = \text{MLP}(K_{ij})\to  A_{ij}
\end{equation}
The attention layer will produce
\begin{equation}
    s_i\leftarrow \sum_j A_{ij}s_j' =\sum_{j\in N(i)}s_j'
\end{equation}
with residual connection, the layer can express GIN
\begin{equation}
    s_i\leftarrow s_i' + s_i = s_i' + \sum_{j\in N(i)}s_j'
\end{equation}
Moreover, as shown in Theorem~\ref{thm::longrange_mpnn_gt}, MPNN cannot compute shortest path distance, while PST can. So PST is strictly more expressive.
\end{proof}

Moreover, our transformer is strictly more expressive than some representative graph transformers, including Graphormer~\citep{GT-graphormer} and GPS with RWSE as structural encoding~\citep{GT-GPS}.

\begin{theorem}
A $k$-layer Point Set Transformer is strictly more expressive than a $k$-layer Graphormer and a $k$-layer GPS.
\end{theorem}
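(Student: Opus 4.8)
The plan is to mirror the proof of Theorem~\ref{thm::PST>GIN}: first show the \emph{containment} that a depth-$k$ PST can simulate a depth-$k$ Graphormer and a depth-$k$ GPS, then establish \emph{strictness} with graph pairs that PST separates but these two transformers cannot. The single tool that drives the containment is that the coordinate inner product appearing inside PST's attention, $\text{diagonal}(W_q^v v_i^\top v_j W_k^v)$, can be set equal to $(A^\ell)_{ij}$ for $\ell=0,1,\dots,K$ by taking the coordinate channels to be $\mathcal{Q}(A,f_\ell)$ with $f_\ell(\Lambda)=\Lambda^\ell$ (this is exactly the computation behind Theorem~\ref{thm::longrange_pst} and Appendix~\ref{app::SE}); consequently the attention $\text{MLP}$ can recover $\text{spd}(i,j)=\argmin_\ell\{\ell:(A^\ell)_{ij}>0\}$ and the random-walk landing values $\text{diagonal}((D^{-1/2}AD^{-1/2})^\ell)$, while the scalar--vector mixer can read off $\langle v_i,v_i\rangle=(D+A)_{ii}=\deg(i)$ for any degree/centrality encoding.

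For Graphormer, I would have the mixer write $\deg(i)$ into $s_i$ (the centrality term), let the attention $\text{MLP}$ output $\langle W_q^s s_i, W_k^s s_j\rangle + b_{\text{spd}(i,j)}$ using the $(A^\ell)_{ij}$ channels for the spatial bias (edge-on-shortest-path terms are handled the same way or are vacuous for node-feature-only graphs), and use the residual connections to carry Graphormer's per-node feed-forward map; thus one PST layer realizes one Graphormer layer. For GPS, a block runs an MPNN and a global-attention layer on the same input and merges them, so I would use two attention heads: head one with attention matrix $A$ (from coordinate inner products, exactly as in the proof of Theorem~\ref{thm::PST>GIN}), giving $\sum_j\text{Atten}^{(1)}_{ij}s_j'=\sum_{j\in N(i)}s_j'$ and hence the sum-aggregation MPNN step; head two realizing the content-based global attention; and the post-attention $\text{MLP}$ merging the two aggregates with the residual as GPS's combine/feed-forward step. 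The RWSE features are injected at initialization via the identity above. Hence a $k$-layer PST is at least as expressive as a $k$-layer Graphormer and a $k$-layer GPS.

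For strictness I would exhibit separating pairs. Against Graphormer, take $K_{3,3}$ and the triangular prism $C_3\times K_2$: both are $3$-regular and from every vertex have the same shortest-path-distance profile ($1$ vertex at distance $0$, $3$ at distance $1$, $2$ at distance $2$), so an induction on layers shows a Graphormer keeps all node representations equal across the two graphs and cannot separate them; a one-layer PST already separates them because its attention matrix expresses $(A^2)_{ij}$ and its node output expresses the triangle count through each vertex (Theorem~\ref{thm::count_cycle}), and the prism has triangles while $K_{3,3}$ is triangle-free. Against GPS one also has to fool RWSE, so I would pick a pair with identical closed-walk profiles at every vertex, e.g. two non-isomorphic strongly regular graphs with the same parameters such as the Shrikhande graph and the $4\times4$ rook's graph, both $\mathrm{SRG}(16,6,2,2)$: they agree on degree, on the all-pairs shortest-path-distance multiset, on their spectra (hence on RWSE), and on their $1$-WL colouring, so neither the MPNN branch nor the global-attention branch of GPS can tell them apart at any depth, whereas a sufficiently deep PST separates them through the substructure counts of Theorems~\ref{thm::count_path} and \ref{thm::count_cycle} (invoking also Theorem~\ref{thm::PST>2-FWL}), since the rook's graph contains $K_4$'s while the Shrikhande graph does not.

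The hard part is the strictness step for GPS and, in particular, making it \emph{uniform in the depth $k$}. Because GPS already ingests RWSE, any pair that fools it must agree on the diagonal of every power of the (normalized) adjacency matrix; this forces the distinguishing substructure to be one that a depth-$k$ PST can count but that these closed-walk counts do not determine, and for small $k$ (where PST can only count short cycles, which are themselves tied to closed-walk counts) a separating pair is not obvious and may need a separate construction. One therefore has to (i) pin down, for each $k$, a pair within reach of a depth-$k$ PST, and (ii) prove rigorously that a depth-$k$ GPS genuinely collapses on that pair --- not merely that its standard structural encodings happen to coincide --- which is the most delicate point of the argument.
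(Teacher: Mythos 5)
Your containment half is essentially the paper's: the coordinate inner products $\text{diagonal}(W_q^v v_i^\top v_j W_k^v)$ recover $(A^{\ell})_{ij}$ for the channels $f_\ell(\Lambda)=\Lambda^\ell$, so the first sv-mixer injects degree and RWSE into the scalar stream and the attention MLP recovers adjacency and shortest-path-distance bias; one PST layer simulates one Graphormer or GPS layer. Your Graphormer strictness witness ($K_{3,3}$ versus the triangular prism) is a different example from the paper's but is sound: both are $3$-regular with identical spd-profiles, so constant-feature Graphormer collapses, whereas a $1$-layer PST reads off $(A^3)_{ii}$ (triangle count), which differs. The paper instead uses a single witness for both models.

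The genuine gap is the GPS strictness step, and you partially flag it yourself. Shrikhande versus the $4\times4$ rook's graph does not work: these two $\mathrm{SRG}(16,6,2,2)$ graphs are a standard pair that $2$-FWL fails to distinguish, and PST's short-range power is established only ``at least on par with $2$-FWL'' for cycles up to length $7$. Indeed every quantity PST computes from coordinate inner products is of the form $(A^k)_{ij}$ (or a polynomial thereof), and for any strongly regular graph $A^2 = kI+\lambda A+\mu(J-I-A)$, so all powers $A^k$ are linear in $\{I,A,J\}$ with parameter-determined coefficients; they coincide across the pair. Likewise the multiset of path/cycle counts of lengths $\le 7$ coincides (this is forced by $2$-FWL indistinguishability together with $2$-FWL's ability to count such cycles). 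Your distinguishing invariant is the $K_4$ count, but neither Theorem~\ref{thm::count_path}, Theorem~\ref{thm::count_cycle}, nor Theorem~\ref{thm::PST>2-FWL} gives PST that power (Theorem~\ref{thm::PST>2-FWL} is a long-range shortest-path statement on disjoint cycles, not a general ``PST beats 2-FWL'' claim), and in fact the paper's own experimental ablation shows PST failing at 4-clique counting. So you have produced a pair that GPS cannot separate but have not shown PST can separate it --- most likely it cannot. The paper avoids this pitfall by reusing the graph pair from Figure~2(c) of the biconnectivity work: it shows that RWSE coincides on the pair (so GPS degenerates to a sub-module of Graphormer, which provably fails on it), while PST separates them because PSRD coordinates can express resistance distance (Table~\ref{tab:parameter}, taking $f(\Lambda)=\Lambda^{-1/2}$ on the nonzero Laplacian spectrum), and resistance distance plus $1$-WL suffices for that pair by Theorem~4.2 of the biconnectivity paper. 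If you want to keep your structure, you should replace Shrikhande/rook's by a pair on which you can point to a concrete PSRD-expressible invariant (resistance distance, or some $(A^k)_{ij}$ profile) that genuinely differs.
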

\begin{proof}
We first prove that $k$-layer Point Set Transformer is more expressive than a $k$-layer Graphormer and a $k$-layer GPS.

In initialization, besides the original node feature, Graphormer further add node degree features and GPS further utilize RWSE. Our PST can add these features with the first sv-mixer layer.

\begin{equation}
    s_i' \leftarrow \text{MLP}_1(s_i \Vert \text{diagonal}(W_1v_i^Tv_iW_2^T))
\end{equation}
Here, $\text{diagonal}(W_1v_iv_i^TW_2^T)$ add coordinate inner products, which can express RWSE (diagonal elements of random walk matrix) and degree (see Appendix~\ref{app::SE}), to node feature.

Then we are going to prove that one PST layer can express one GPS and one Graphormer layer. PST's attention matrix is as follows, 
\begin{equation}
    \text{Atten}_{ij} = \text{MLP}(K_{ij}), \quad K_{ij} = (W_q^ss_i \odot W_k^ss_j) \Vert \text{diagonal}(W_q^vv_i^Tv_jW_k^v)
    \to \langle \text{diag}(\Lambda^1/2)U_i,  \text{diag}(\Lambda^1/2)U_j\rangle 
\end{equation}
The Hadamard product $(W_q^ss_i \odot W_k^ss_j)$ with MLP can express the inner product of node representations used in Graphormer and GPS. The inner product of coordinates can express adjacency matrix used in GPS and Graphormer and shortest path distance used in Graphormer. Therefore, PST' attention matrix can express the attention matrix in GPS and Graphormer. 

To prove strictness, Figure 2(c) in \citep{biconnectivity} provides an example. As PST can capture resistance distance and simulate 1-WL, so it can differentiate the two graphs according to Theorem 4.2 in \citep{biconnectivity}. However, Graphormer cannot distinguish the two graphs, as proved in \citep{biconnectivity}. 

For GPS, Two graphs in Figure 2(c) have the same RWSE: RWSE is
\begin{equation}
    \text{diagonal}(\hat U\text{diag}(\hat \Lambda^k)\hat U^T), k=1,2,3,...,
\end{equation}
where the eigendecomposition of normalized adjacency matrix $D^{-1/2}AD^{-1/2}$ is $\hat U$. By computation, we find that two graphs share the same $\hat \Lambda$. Moroever, $\text{diagonal}(\hat U\text{diag}(\hat \Lambda^k)\hat U^T)$ are equal in two graphs for $k=0,1,2,..., 9$, where $9$ is the number of nodes in graphs. $\Lambda^k$ and $\text{diagonal}(\hat U\text{diag}(\hat \Lambda^k)\hat U^T)$ with larger $k$ are only  linear combinations of $\Lambda^k$ and thus $\text{diagonal}(\hat U\text{diag}(\hat \Lambda^k)\hat U^T)$ for $k=0, 1, ..., 9$. So the RWSE in the two graphs are equal and equivalent to simply assigning feature $h_1$ to the center node and feature $h_2$ to other nodes in two graphs. Then GPS simply run a model be a submodule of Graphormer on the graph and thus cannot differentiate the two graphs either.
\end{proof}

Even against a highly expressive method such as 2-FWL, our models can surpass it in expressivity with a limited number of layers:

\begin{theorem}\label{thm::PST>2-FWL}
For all $K>0$, a graph exists that a $K$-iteration 2-FWL method fails to distinguish, while a $1$-layer Point Set Transformer can.
\end{theorem}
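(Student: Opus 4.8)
The plan is to reuse the graph pair from the proof of Theorem~\ref{thm::2-FWL-spdlimit}: fix $l_K = 2\cdot 2\cdot 2^K$, let $H_{l_K}$ be a single cycle on $l_K$ nodes, and let $G_{l_K}$ be the disjoint union of two cycles on $l_K/2 = 2^{K+1}$ nodes each. The lemma in that proof already shows that $K$ iterations of $2$-FWL assign $H_{l_K}$ and $G_{l_K}$ the same color and hence do not separate them — intuitively because both graphs are $2$-regular, so the only structural difference is the existence of a pair of nodes at infinite distance, which is not witnessed within $K$ rounds. It therefore remains to construct a one-layer PST that \emph{does} separate the pair, and the separating invariant is connectivity: in $H_{l_K}$ every pair of nodes is joined by a walk, whereas in $G_{l_K}$ nodes in different components are not.

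I would take $\mathcal{Z}(A) = A$ and a two-channel polynomial eigenvalue function $f$ with $f_1(\lambda) = \lambda^{N/2}$ and $f_2(\lambda) = \lambda^{N/2+1}$, where $N = l_K$ (note $N/2 = 2^{K+1}\in\sN$, and $N$ exceeds the diameter of $H_{l_K}$ and of each component of $G_{l_K}$). As in the display preceding Theorem~\ref{thm::longrange_pst}, the entries of the coordinate inner product $v_i^T v_j$ are then exactly $(A^{N})_{ij}$, $(A^{N+1})_{ij}$, and $(A^{N+2})_{ij}$, so a suitable choice of $W_q^v, W_k^v$ makes $\text{diagonal}(W_q^v v_i^T v_j W_k^v)$ contain both $(A^{N})_{ij}$ and $(A^{N+1})_{ij}$. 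Feeding these to the attention $\text{MLP}$ — which only needs to realize a function on the finite set of integer values these entries take over the two graphs — I would set
\begin{equation}
\text{Atten}_{ij} = \mathbf{1}\big[(A^{N}+A^{N+1})_{ij} > 0\big].
\end{equation}
For any connected cycle, two of its nodes are at a finite distance $d$ at most the diameter, hence at most $N$; since $N$ and $N+1$ have opposite parities, one of them has the parity of $d$ and is $\ge d$, so $(A^{N}+A^{N+1})_{ij} > 0$ precisely when $i$ and $j$ lie in the same component. Hence $\text{Atten}$ is the all-ones matrix on $H_{l_K}$ but the block-diagonal component indicator on $G_{l_K}$.

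To finish, I would initialize the scalars with the (constant) node feature and make the first scalar--vector mixer output the constant $1$ (zero the vector-to-scalar path, map the constant input to $1$); the attention update then gives $s_i \leftarrow \sum_j \text{Atten}_{ij}\cdot 1$, which equals $l_K$ for every node of $H_{l_K}$ but $l_K/2$ for every node of $G_{l_K}$, so after sum pooling the two graph representations are $l_K^2$ and $l_K^2/2$ — distinct. Combining this with the $2$-FWL lower bound proves the theorem. The main obstacle is not a calculation but the conceptual point that a \emph{linear} functional of the powers of $A$ cannot separate the pair (both graphs being $2$-regular, $\mathbf{1}^T A^m \mathbf{1} = l_K 2^m$ for both), so one genuinely needs the nonlinearity supplied by the attention $\text{MLP}$, together with the fact — guaranteed by PSRD's use of signed eigenvalues — that $U\text{diag}(\Lambda^m)U^T = A^m$ for all $m\ge 1$ even though $A$ need not be positive semidefinite, so the required powers are indeed accessible from the coordinates within a single layer.
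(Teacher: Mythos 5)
Your proposal is correct and follows the same route the paper intends: it reuses the cycle pair $H_{l_K}$ vs.\ $G_{l_K}$ from the proof of Theorem~\ref{thm::2-FWL-spdlimit} for the $2$-FWL lower bound, and shows a one-layer PST separates them by turning inner products of PSRD coordinates into a connectivity indicator $\mathbf{1}[(A^{N}+A^{N+1})_{ij}>0]$, which is exactly the ``finite vs.\ infinite shortest-path distance'' distinction that Theorem~\ref{thm::longrange_pst} and the display $U\,\text{diag}(\Lambda^K)U^T = A^K$ are meant to supply. The paper itself only states ``direct corollary of Theorem~\ref{thm::2-FWL-spdlimit}''; your write-up supplies the explicit eigenvalue functions $f_1(\lambda)=\lambda^{N/2}$, $f_2(\lambda)=\lambda^{N/2+1}$, the $W_q^v,W_k^v$ choice, and the pooling computation $l_K^2$ vs.\ $l_K^2/2$, and the parity argument for why $A^N+A^{N+1}$ is strictly positive within a connected component when $N$ is at least the diameter --- all of which are correct and make the ``one layer'' claim concrete rather than tacit.
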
 
\begin{proof}
It is a direct corollary of Theorem~\ref{thm::2-FWL-spdlimit}. 
\end{proof}

\section{Some Graph Transformers Fail to Compute Shortest Path Distance}\label{app::GT_fail_spd}

First, we demonstrate that computing inner products of node representations alone cannot accurately determine the shortest path distance when the node representations are permutation-equivariant. Consider Figure~\ref{fig:GAEfailure} as an illustration. In cases where node representations exhibit permutation-equivariance, nodes $v_2$ and $v_3$ will share identical representations. Consequently, the pairs $(v_1, v_2)$ and $(v_1, v_3)$ will yield the same inner products of node representations, despite having different shortest path distances. Consequently, it becomes evident that the attention matrices of some Graph Transformers are incapable of accurately computing the shortest path distance.

\begin{theorem}
    GPS with RWSE~\citep{GT-GPS} and Graphormer without shortest path distance encoding cannot compute shortest path distance with the elements of adjacency matrix.
\end{theorem}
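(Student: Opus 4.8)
The plan is to exhibit a single small graph on which both models provably fail, reusing the symmetry argument sketched around Figure~\ref{fig:GAEfailure}. Fix a highly symmetric graph $G$ --- e.g.\ the graph of Figure~\ref{fig:GAEfailure} --- together with three nodes $v_1, v_2, v_3$ chosen so that (i) there is an automorphism $\pi \in \mathrm{Aut}(G)$ with $\pi(v_2) = v_3$, so $v_2$ and $v_3$ lie in one automorphism orbit; (ii) $\text{spd}(v_1, v_2, A) \neq \text{spd}(v_1, v_3, A)$; and (iii) $A_{v_1 v_2} = A_{v_1 v_3}$ (both are non-edges). A cycle $C_n$ already has all three properties: every node lies in a single orbit, $A_{v_1 v_2} = A_{v_1 v_3} = 0$ for any two nodes at distance $\geq 2$ from $v_1$, yet the two distances along the cycle differ.

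First I would record that the extra node features these architectures attach are isomorphism-invariant: the RWSE vector of a node --- the return probabilities $(\hat A^k)_{uu}$ of the normalized random-walk matrix $\hat A = D^{-1/2} A D^{-1/2}$ --- and the degree-centrality used by Graphormer are functions of the node's rooted isomorphism type, hence agree on $v_2$ and $v_3$. Therefore the initial representations satisfy $h^{(0)}_{v_2} = h^{(0)}_{v_3}$. Next, by induction on the layer index $\ell$, I would invoke permutation-equivariance of each layer: for GPS with RWSE the MPNN sublayer propagates along $A$ and the global-attention sublayer, lacking any spd-dependent bias, scores a pair only through a function of $h_i, h_j$ and possibly the edge indicator $A_{ij}$; for Graphormer without spd encoding the attention bias likewise collapses to the edge-encoding term, a function of $A_{ij}$ and (here trivial) edge features. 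Every such layer is equivariant under $\pi$, and since $\pi \cdot G = G$ this forces $h^{(\ell)}_{v_2} = h^{(\ell)}_{v_3}$ for all $\ell$.

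Finally I would conclude: the attention-matrix entry produced for $(v_1, v_2)$ is a fixed function of $(h^{(\ell)}_{v_1}, h^{(\ell)}_{v_2}, A_{v_1 v_2})$, and the entry for $(v_1, v_3)$ is the same function of $(h^{(\ell)}_{v_1}, h^{(\ell)}_{v_3}, A_{v_1 v_3})$; since $h^{(\ell)}_{v_2} = h^{(\ell)}_{v_3}$ and $A_{v_1 v_2} = A_{v_1 v_3}$, the two entries coincide. Hence no post-processing of the attention matrix together with the entries of $A$ can output $\text{spd}(v_1, v_2, A)$ on the first pair and $\text{spd}(v_1, v_3, A)$ on the second, because those values differ while every available quantity agrees --- which is exactly the asserted impossibility. (One may also phrase the same contradiction at the level of the final node representations rather than the attention matrix.)

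I expect the main obstacle to be the bookkeeping about which structural signals survive once the spd encoding is removed: I must make sure that Graphormer's edge encoding, which in the original model is aggregated \emph{along shortest paths}, genuinely reduces to a function of $A_{ij}$ and local edge features under the ``without shortest path distance encoding'' hypothesis, and that for GPS the MPNN sublayer and the plain edge indicator are the only channels through which $A$ enters the attention computation. Once those are pinned down, the equivariance induction and the final comparison are routine; a secondary check is merely confirming that the graph of Figure~\ref{fig:GAEfailure} realizes (i)--(iii) (equivalently, taking $G = C_n$).
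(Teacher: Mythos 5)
Your proposal is correct and takes essentially the same route as the paper's proof, which likewise invokes the symmetric graph of Figure~\ref{fig:GAEfailure}: an automorphism forces $s_{v_2}=s_{v_3}$, so the attention entries --- functions only of permutation-equivariant node representations and $A_{ij}$ --- coincide for $(v_1,v_2)$ and $(v_1,v_3)$, yet their shortest-path distances differ. Your version merely makes the orbit/equivariance induction explicit (and offers $C_n$ as an alternative witness) and flags the question of what survives in Graphormer's edge encoding once spd encoding is dropped, a point the paper settles implicitly by writing $\text{Atten}_{ij}=\langle s_i,s_j\rangle\,\Vert\,A_{ij}$.
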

\begin{proof}
Their adjacency matrix elements are functions of the inner products of node representations and the adjacency matrix.

\begin{equation}
    \text{Atten}_{ij} = \langle s_i, s_j\rangle |\!|A_{ij}.
\end{equation}
This element is equal for the node pair $(v_1, v_2)$ and $(v_1, v_3)$ in Figure~\ref{fig:GAEfailure}. However, two pairs have different shortest path distances.
\end{proof}

Furthermore, while Graph Transformers gather information from the entire graph, they may not have the capacity to emulate multiple MPNNs with just a single transformer layer. To address this, we introduce the concept of a \textit{vanilla Graph Transformer}, which applies a standard Transformer to nodes using the adjacency matrix for relative positional encoding. This leads us to the following theorem.
\begin{theorem}
    For all $k\in \sN$, there exists a pair of graph that $k+1$-layer MPNN can differentiate while $k$-layer MPNN and $k$-layer vanilla Graph Transformer cannot.
\end{theorem}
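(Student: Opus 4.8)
The plan is to produce a single graph pair on which the vanilla Graph Transformer, despite its global attention, is provably no stronger than a $k$-layer MPNN, while $k+1$ MPNN layers already separate the pair. For $k\ge 1$, I take $G_1$ to be the cycle $C_{4k+4}$ with one extra pendant (degree-$1$) vertex attached to a cycle vertex, and $G_2$ to be the disjoint union of the cycle $C_{2k+2}$ with one pendant vertex attached and a second, plain copy of $C_{2k+2}$. (For $k=0$ the statement is witnessed by $K_2$ versus two isolated vertices.) These graphs are unlabeled, have the same number of vertices, and have the same degree sequence.

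First I would verify the two MPNN claims. A $k$-layer MPNN sees, at each node $v$, only the depth-$k$ computation (``unrolling'') tree rooted at $v$, so it suffices to show that the multiset of depth-$k$ unrolling trees of $G_1$ equals that of $G_2$. The only local structure in either graph is the pendant vertex and its degree-$3$ host, and within graph distance $k$ the two cycles $C_{4k+4}$ and $C_{2k+2}$ are indistinguishable from the host's viewpoint — the ``wrap-around'' of the short cycle only becomes visible at distance $k+1$ — so every radius-$k$ ball in $G_1$ is isomorphic, as a rooted graph, to one in $G_2$ and conversely, with matching multiplicities; hence no $k$-layer MPNN separates $G_1$ from $G_2$. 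For the reverse direction, the antipode of the host in $G_2$'s pendant-bearing cycle has \emph{both} of its neighbours at distance $k$ from the host, a property no degree-$2$ vertex of $G_1$ has, and this discrepancy first surfaces at round $k+1$ of color refinement; thus some $(k+1)$-layer MPNN (a GIN with injective aggregation) distinguishes them. These are elementary but slightly fiddly computations with distances on the two cycles.

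The substantive step is showing the $k$-layer vanilla GT also fails on this pair. The key observation is that one vanilla-GT layer computes at node $i$ a function of $s_i^{(\ell)}$, of the neighbour multiset $\msl s_j^{(\ell)}\mid j\in N(i)\msr$, and of the \emph{global} multiset $M^{(\ell)} := \msl s_j^{(\ell)}\mid j\in V\msr$ — because the attention aggregation depends on $\msl (s_j^{(\ell)},A_{ij})\mid j\in V\msr$, which decomposes into the neighbour part (an ordinary MPNN message) plus the non-neighbour part, and the non-neighbour part is recovered from $M^{(\ell)}$ by removing the neighbour part and $s_i^{(\ell)}$. I would then prove, by induction on $\ell=0,1,\dots,k$, that on this pair (a) $M^{(\ell)}$ is the \emph{same} multiset for $G_1$ and $G_2$, and (b) each node's layer-$\ell$ GT representation is a fixed function of its depth-$\ell$ unrolling tree (and of $M^{(0)},\dots,M^{(\ell-1)}$, which are graph-independent by (a)). The base case is the equality of input-feature multisets. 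In the inductive step, (b) at level $\ell-1$ makes the neighbour multiset and the own representation functions of node $i$'s depth-$\ell$ unrolling tree, and $M^{(\ell-1)}$ is shared by both graphs, so the GT update at level $\ell$ is a fixed function of that tree, giving (b); applying this fixed function to the multiset of depth-$\ell$ unrolling trees — which is common to $G_1$ and $G_2$ for $\ell\le k$ by the previous paragraph — yields $M^{(\ell)}_{G_1}=M^{(\ell)}_{G_2}$, giving (a). Finally the graph-level readout after $k$ layers is a function of $M^{(k)}$, identical for both graphs, so the $k$-layer vanilla GT cannot tell $G_1$ from $G_2$.

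\noindent\textbf{Main obstacle.} The delicate point is exactly this induction: a GT layer genuinely aggregates from the whole graph, so one has to isolate precisely \emph{what} extra information it carries over an MPNN layer (namely, the global representation multiset) and then argue, without circularity, that this multiset is literally identical for the two graphs at every layer up to $k$ — which is what collapses the GT to an ordinary MPNN on this particular input. The combinatorial verification that the depth-$k$ (but not the depth-$(k+1)$) unrolling-tree multisets of $G_1$ and $G_2$ agree is routine but must track the wrap-around distance of $C_{2k+2}$ with care.
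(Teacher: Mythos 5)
Your construction and argument are essentially the paper's: both compare a long cycle with a disjoint union of two half-length cycles, break the symmetry with a marker (the paper places a unique input feature on one vertex, you attach a pendant vertex), and show that the $k$-layer vanilla GT collapses to a $k$-layer MPNN because the global multiset of layer-$\ell$ representations is identical across the two graphs for all $\ell\le k$. The paper asserts the GT step in a single sentence (``The same induction also holds for $K$-layer vanilla graph transformer''), whereas you spell out the induction on $M^{(\ell)}$ that actually justifies it — that extra care is worth keeping, but it is the same idea.
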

\begin{proof}
Let $H_l$ denote a circle of $l$ nodes. Let $G_l$ denote a graph of two components, one is $H_{\lfloor l/2\rfloor}$ and the other is $H_{\lceil l/2 \rceil}$. Let $H_l'$ denote adding a unique feature $1$ to a node in $H_l$ (as all nodes are symmetric for even $l$, the selection of node does not matter), and $G_l'$ denote adding a unique feature $1$ to one node in $G_l$. All other nodes have feature $0$. Now we prove that

\begin{lemma}
For all $K\in \sN$, $(K+1)$-layer MPNN can distinguish $H'_{4(K+1)}$ and $G'_{4(K+1)}$, while $K$-layer MPNN and $K$-layer vanilla Graph Transformer cannot distinguish. 
\end{lemma}

Given $H'_{4(K+1)}$, $G'_{4(K+1)}$, we assign each node a color according to its distance to the node with extra label $1$: $c_0$ (the labeled node itself), $c_1$ (two nodes connected to the labeled node), $c_2$ (two nodes whose shortest path distance to the labeled node is $2$),..., $c_K$ (two nodes whose shortest path distance to the labeled node is $K$), $c_{>K}$ (nodes whose shortest path distance to the labeled node is larger than $K$). Now by simulating the process of MPNN, we prove that at $k$-th layer $k<=K$, $\forall i\le k$, $c_i$ nodes have the same representation (denoted as $h^k_i$), respectively, $c_{k+1},c_{k+2},...,c_{K},c_{>K}$ nodes all have the same representation (denoted as $h^k_{k+1}$).

Initially, all $c_0$ nodes have representation $h_0^0$, all other nodes have representation $h_1^0$ in both graph.

Assume at $k$-th layer, $\forall i\le k$, $c_i$ nodes have the same representation $h^k_i$, respectively, $c_{k+1},c_{k+1},...,c_{K},c_{>K}$ nodes all have the same representation $h^k_{k+1}$. At $k+1$-th layer, $c_0$ node have two neighbors of representation $h_1^k$. all $c_i, 1<i<=k$ node two neighbors of representations $h^k_{i-1}$ and $h^k_{i+1}$, respectively. $c_{k+1}$ nodes have two neighbors of representation $h^k_k$ and $h^{k}_{k+1}$. All other nodes have two neighbors of representation $h^k_{k+1}$. So $c_i, i\le k+1$ nodes have the same representation (denoted as $h^{k+1}_i$), respectively, $c_{k+1+1},...,c_{K},c_{>K}$ nodes all have the same representation (denoted as $h^k_{k+1}$).

The same induction also holds for $K$-layer vanilla graph transformer.

However, in the $K+1$-th message passing layer, only one node in $G_{4(K+1)}$ is of shortest path distance $K+1$ to the labeled node. It also have two neighbors of representation $h^{K}_{K}$. While such a node is not exist in $H_{4(K+1)}$. So ($K+1$)-layer MPNN can distinguish them.
\end{proof}

The issue with a vanilla Graph Transformer is that, although it collects information from all nodes in the graph, it can only determine the presence of features in 1-hop neighbors. It lacks the ability to recognize features in higher-order neighbors, such as those in 2-hop or 3-hop neighbors. A straightforward solution to this problem is to manually include the shortest path distance as a feature. However, our analysis highlights that aggregating information from the entire graph is insufficient for capturing long-range interactions. 

\begin{figure}[t]
  \begin{center}
    \includegraphics[width=0.25\textwidth]{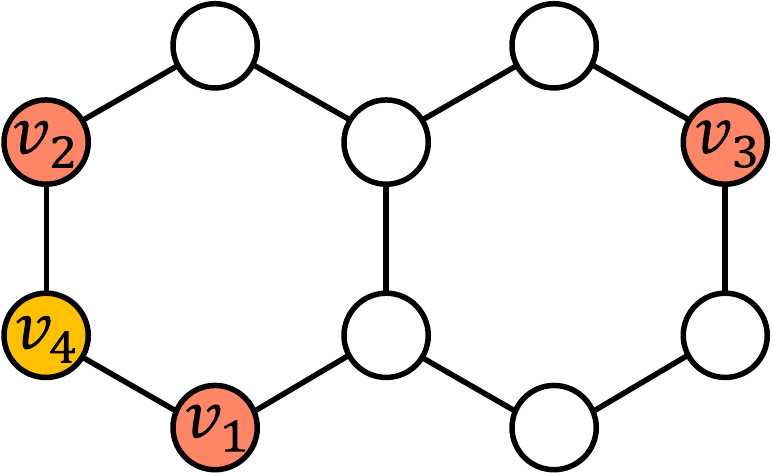} 
  \end{center}
  \vskip -10pt
  \caption{The failure of using inner products of permutation-equivariant node representations to predict shortest path distance. $v_2$ and $v_3$ have equal node representations due to symmetry. Therefore, $(v_1, v_2)$ and $(v_1, v_3)$ will have the same inner products of node representations but different shortest path distance.}\label{fig:GAEfailure}
  \vskip -10pt
\end{figure}

\begin{table}[t]
\vskip -10pt
    \centering
    \caption{Connection between existing structural embeddings and our parameterized coordinates. The eigendecomposition are $\hat A\leftarrow \hat U\hat \Lambda \hat U$, $D-A\leftarrow \tilde U\tilde \Lambda \tilde U^T$, $A\leftarrow U\Lambda U^T$. $d_i$ denote the degree of node $i$.}
    \label{tab:parameter}
\vskip 5pt
    \begin{tabularx}{1.0\textwidth} {
    >{\hsize=.7\hsize\linewidth=\hsize}X
    >{\hsize=1.3\hsize\linewidth=\hsize}X
    >{\hsize=1.0\hsize\linewidth=\hsize}X
   }

             \toprule Method& Description & Connection \\
         \midrule
         Random walk matrix~\citep{DE,benchmarkingGNN,GT-GPS}&  $k$-step random walk matrix is $(D^{-1}A)^k$, whose element $(i,j)$ is the probability that a $k$-step random walk starting from node $i$ ends at node $j$. & \begin{small}$(D^{-1}A)^k_{ij}$
         
        $= (D^{-1/2}(\hat A)^k D^{1/2})_{ij}$
        
        $= \sqrt{\frac{d_j}{d_i}}\langle \hat U_{i}, \text{diag}(\hat \Lambda^k) \hat U_j\rangle$\end{small}
        
        \\

        \midrule
        Heat kernel matrix~\citep{GT-graphit}& Heat kernel is a solution of the heat equation. Its element $(i,j)$ represent how much heat diffuse from node $i$ to node $j$ &\begin{small}     
        $\!(\!I\!+\!\tilde U(\text{diag}(\exp(-t\tilde \Lambda))\!-\!I)\tilde U^T)_{ij}$
        
        $=\!\!\delta_{ij}\!+\!\langle \tilde U_{i}, (\text{diag}(\exp(-t\tilde \Lambda))\!-\!I) \tilde U_{j}\rangle$\end{small} \\ 
        \midrule
    Resistance distance~\citep{NGNN,biconnectivity} & Its element $(i,j)$ is the resistance between node $i,j$ considering the graph as an electrical network. It is also the pseudo-inverse of laplacian matrix $L$, & \begin{small}
    $(\tilde U\text{diag}(\tilde \Lambda^{-1})\tilde U^T)_{ij}$
    
    $=\langle \tilde U_i, \text{diag}(\tilde \Lambda^{-1})\tilde U_j\rangle$\end{small}\\
    \midrule
    Equivariant and stable laplacian PE~\citep{EquiStableEnc}& The encoding of node pair $i,j$ is $\Vert 1_{K}\odot
     (U_{i}-U_{j}) \Vert$, where $1_K$ means a vector $\in \sR^r$ with its elements coresponding to $K$ largest eigenvalue of $L$ & \begin{small}$
     \Vert 1_{K}\odot (U_{i}-U_{j}) \Vert^2$

     $=\langle U_{i}, \text{diag}(1_K)U_{i} \rangle$
     
     $+\langle U_{j}, \text{diag}(1_K)U_{j} \rangle$

     $-2\langle U_{i}, \text{diag}(1_K)U_{j} \rangle$
     \end{small}
     \\
    \midrule
    Degree and number of triangular~\citep{trianglecounting} & $d_i$ is the number of edges, and $t_i$ is the number of triangular rooted in node $i$.
    &\begin{small}$d_i=\langle U_i,\text{diag}(\Lambda^2) U_j\rangle$.

    $t_i=\langle U_i,\text{diag}(\Lambda^3) U_j\rangle$\end{small}\\
    \bottomrule 
    \end{tabularx}
\end{table}
\section{Connection with Structural Embeddings}\label{app::SE}
We show the equivalence between the structural embeddings and the inner products of our PSRD coordinates in Table~\ref{tab:parameter}. The inner products of PSRD coordinates can unify a wide range of positional encodings, including random walk~\citep{DE,benchmarkingGNN,GT-GPS}, heat kernel~\citep{GT-graphit}, and resistance distance~\citep{NGNN,biconnectivity}.

\section{Datasets}\label{app::data}

\begin{table}[t]
    \centering
    \caption{Statistics of the datasets. \#Nodes and \#Edges denote the number of nodes and edges per graph. In split column, 'fixed' means the dataset uses the split provided in the original release. 
 Otherwise, it is of the formal training set ratio/valid ratio/test ratio.}
    \label{tab::data}
    \begin{small}
    \begin{tabular}{ccccccc}
    \toprule
        ~ & \#Graphs &  \#Nodes &  \#Edges & Task & Metric & Split \\ 
    \midrule
        Synthetic & 5,000 & 18.8  & 31.3  & Node Regression & MAE & 0.3/0.2/0.5. \\ 
        QM9 & 130,831 & 18.0  & 18.7  &  Regression & MAE & 0.8/0.1/0.1 \\ 
        ZINC & 12,000 & 23.2  & 24.9  &  Regression & MAE & fixed \\ 
        ZINC-full & 249,456 & 23.2  & 24.9  &  Regression & MAE & fixed \\ 
        ogbg-molhiv & 41,127 & 25.5  & 27.5  &  Binary classification & AUC & fixed \\ 
        MUTAG &188 & 17.9 & 39.6 & classification & Accuracy & 10-fold cross validataion\\
        PTC-MR &344 & 14.3 & 14.7 & classification & Accuracy&10-fold cross validation\\
        PROTEINS &1113 & 39.1 & 145.6 & classification & Accuracy&10-fold cross validataion\\
        IMDB-BINARY & 1000 & 19.8 & 193.1 & classification & Accuracy&10-fold cross validataion\\
        PascalVOC-SP & 11,355 & 479.4  & 2710.5  & Node Classification & Macro F1 & fixed \\ 
        Peptides-func & 15,535 & 150.9  & 307.3  & Classification & AP & fixed \\ 
        Peptides-struct 1 & 15,535 & 150.9  & 307.3  & Regression & MAE & fixed \\ 
    \bottomrule
    \end{tabular}
    \end{small}
\end{table}

We summarize the statistics of our datasets in Table~\ref{tab::data}. Synthetic is the dataset used in substructure counting tasks provided by ~\citet{I2GNN}, they are random graph with the count of substructure as node label. QM9~\citep{QM9}, ZINC~\citep{ZINC}, and ogbg-molhiv are three datasets of molecules. QM9 use 13 quantum chemistry property as the graph label. It provides both the graph and the coordinates of each atom. ZINC provides graph structure only and aim to predict constrained solubility. Ogbg-molhiv is one of Open Graph Benchmark dataset, which aims to use graph structure to predict whether a molecule can inhibits HIV virus replication. We further use MUTAG, PTC-MR, PROTEINS, and IMDB-BINARY from TU database~\citep{TU}. MUTAG comprises 188 mutagenic aromatic and heteroaromatic nitro compounds. PROTEINS represents secondary structure elements as nodes with edges between neighbors in amino-acid sequence or 3D space. PTC involves 344 chemical compounds, classifying carcinogenicity for rats. IMDB-BINARY features ego-networks for actors/actresses in movie collaborations, classifying movie genre graphs. We also use three datasets in Long Range Graph Benchmark~\citep{LRGB}. They consists of larger graphs. PascalVOC-SP comes from the computer vision domain. Each node in it representation a superpixel and the task is to predict the semantic segmentation label for each node. Peptide-func and peptide struct are peptide molecular graphs. Task in Peptides-func is to predict the peptide function. Peptides-struct is to predict 3D properties of the peptide. PTC is a collection of 344 chemical compounds represented as graphs which report the carcinogenicity for rats. There are 19 node labels for each node.

\section{Experimental Details}\label{app::exp}

Our code is available in supplementary material. Our code is primarily based on PyTorch~\citep{Pytorch} and PyG~\citep{PyG}. All our experiments are conducted on NVIDIA RTX 3090 GPUs on a linux server. We use l1 loss for regression tasks and cross entropy loss for classification tasks. We select the hyperparameters by running optuna~\citep{optuna} to optimize the validation score. We run each experiment with 8 different seeds, reporting the averaged results at the epoch achieving the best validation metric. For optimization, we use AdamW optimizer and cosine annealing scheduler. Hyperparameters for datasets are shown in Table~\ref{tab::hyperparam}. All PST and PSDS models (except these in ablation study) decompose laplacian matrix for coordinates.

\begin{table}[t]
    \centering
    \caption{Hyperparameters of our model for each dataset. \#warm means the number of warmup epochs, \#cos denotes the number of cosine annealing epochs, gn denotes the magnitude of the gaussian noise injected into the point coordinates, hiddim denotes hidden dimension, bs means batch size, lr represents learning rate, and \#epoch is the number of epochs for training. }
    \label{tab::hyperparam}
    \begin{small}
    \begin{tabular}{lcccccccccc}
    \toprule
        dataset &\#warm & \#cos & wd & gn & \#layer & hiddim & bs & lr &\#epoch & \#param\\
    \midrule
        Synthetic& 10 & 15 & 6e-4 & 1e-6 &9&96&16&0.0006&300 & 961k \\
        qm9 & 1 & 40 &1e-1 & 1e-5& 8& 128 &256&0.001&150 & 1587k\\
        ZINC& 17 & 17 & 1e-1 & 1e-4& 6 & 80 &128&0.001&800 & 472k\\
        ZINC-full& 40 & 40 & 1e-1 & 1e-6& 8&80 &512&0.003 & 400& 582k\\
        ogbg-molhiv& 5 & 5&1e-1&1e-6& 6&96&24&0.001&300 & 751k\\
        MUTAG & 20 & 1 & 1e-7 & 1e-4 & 2 & 48 & 64 &2e-3 &70 & 82k\\
        PTC-MR & 25 & 1 & 1e-1 & 1e-3 & 4 & 16 & 64 &3e-3 &70 & 15k\\
        PROTEINS & 25 & 1 & 1e-7 & 3e-3 & 2 & 48 & 8 &1.5e-3 &80 & 82k\\
        IMDB-BINARY & 35 & 1 & 1e-7 & 1e-5 & 3 & 48 & 64 &3e-3 &80 & 100k\\
        PascalVOC-SP & 5&5 & 1e-1&1e-5&4&96&6&0.001&40 & 527k\\
        Peptide-func & 40 & 20 & 1e-1 & 1e-6&6&128&2&0.0003&80& 1337k\\
        Peptide-struct &40 &20 &1e-1&1e-6&6&128&2&0.0003&40& 1337k \\
    \bottomrule
    \end{tabular}
    \end{small}
\end{table}

ZINC, ZINC-full, PascalVOC-SP, Peptide-func, and Peptide-struct have 500k parameter budgets. Other datasets have no parameter limit. Graphormer~\citep{GT-graphormer} takes 47000k parameters on ogbg-molhiv. 1-2-3-GNN takes 929k parameters on qm9. Our PST follows these budgets on ZINC, ZINC-full and PascalVOC-SP. However, on the peptide-func and peptide-struct datasets, we find that hidden dimension is quite crucial for good performance. So we use a comparable number of hidden dimension and transformer layers. This leads to about 1M parameters because our PST employs two sets of parameters (one for scalar and one for vector), which resulted in twice the parameters with the same hidden dimension and number of transformer layers. We conduct experiments with baselines with larger hidden dimensions for these datasets. The results are shown in Table~\ref{tab:pep1M}. When the PST and baselines are all to 1M parameters, out PST outperforms baselines with the same parameter budget 1M, and our method is effective on the two datasets.

\begin{table}[t]
\caption{Results on peptide-func and peptide-struct dataset with 1M parameter budget.}\label{tab:pep1M}
    \centering
    \begin{tabular}{lcc}
    \toprule
         & peptide-func & peptide-struct \\
    \midrule
     Graph-MLPMixer	    & $0.6970\pm 0.0080$ & $0.2475\pm 0.0015$\\
     GraphGPS	    & $0.6535\pm 0.0041$& $0.2500\pm 0.0005$\\
     PST & $0.6984\pm 0.0051$ & $0.2470\pm 0.0015$\\
    \bottomrule
    \end{tabular}
\end{table}
Other datasets we explored do not have explicit parameter constraints, and it's worth noting that our PST has fewer parameters compared to representative baselines in these cases.
Hyperparameter Configuration:

Our experiments involved tuning seven hyperparameters: depth (4, 8), hidden dimension (64, 256), learning rate (0.0001, 0.003), number of warmup epochs (0, 40), number of cosine annealing epochs (1, 20), magnitude of Gaussian noise added to input (1e-6, 1e-4), and weight decay (1e-6, 1e-1).
We observed that [1] also used seven hyperparameters in their setup.
Batch size was determined based on GPU memory constraints and was not a parameter that we optimized.

\section{Architecture}\label{app::arch}
\begin{figure}[t]
    \centering
    \includegraphics[width=\textwidth]{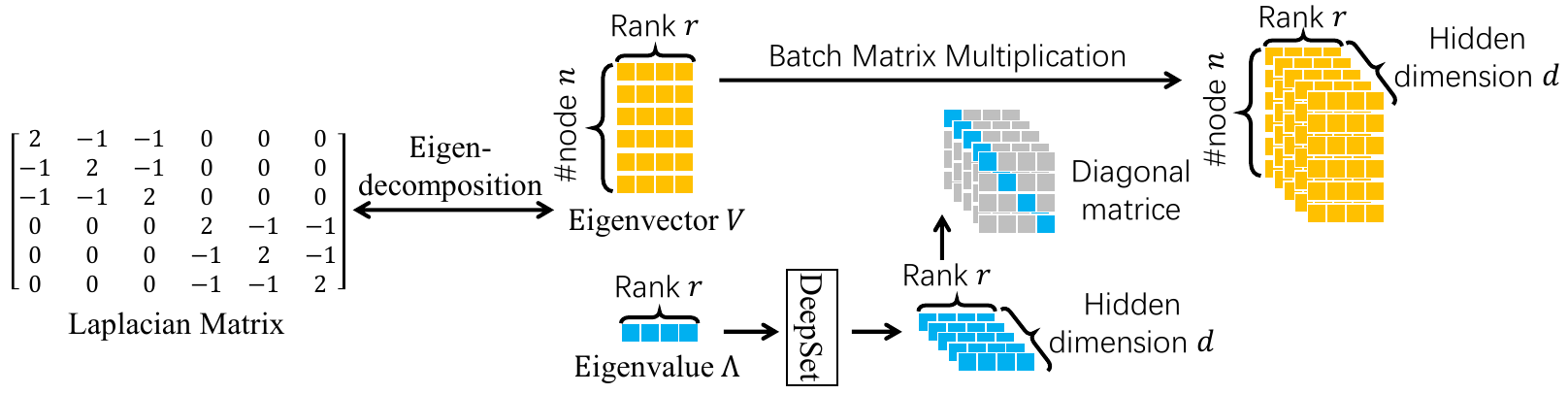}
    \vskip -10pt
    \caption{The pipeline of parameterized SRD. We first decompose Laplacian matrix or other matrice for the non-zero eigenvalue and the corresponding eigenvectors. Then the eigenvalue is transformed with DeepSet~\citep{DeepSet}. Multiply the transformed eigenvalue and the eigenvector leads to coordinates.}
    \label{fig:g2sarch}
    \vskip -5pt
\end{figure}

\begin{figure}[t]
    \centering
    \includegraphics[width=\textwidth]{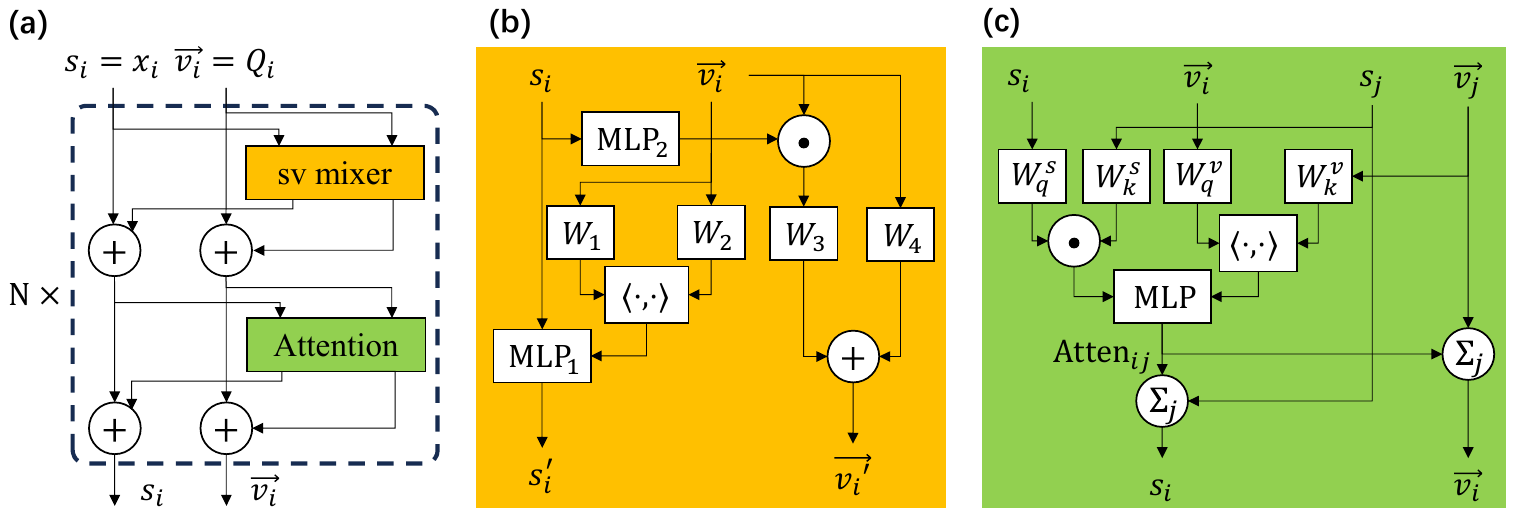}
    \vskip -10pt
    \caption{Architecture of Point Set Transformer (PST) (a) PST contains several layers. Each layer is composed of an scalar-vector (sv)-mixer and an attention layer. (b) The architecture of sv-mixer. (c) The architecture of attention layer. $s_i$ and $s_i'$ denote the scalar representations of node $i$, and $\vec v_i$ and $\vec v_i'$ denote the vector representations. $x_i$ is the initial   features of node $i$. $Q_i$ and point coordinates of node $i$ produced by parameterized SRD in Section~\ref{sec::psrd}.}
    \label{fig:arch}
    \vskip -5pt
\end{figure}

The architecture of parameterized SRD is shown in Figure~\ref{fig:g2sarch}. As illustrated in Section~\ref{sec::psrd}, it first do eigendecomposition for non-zero eigenvalues and the corresponding eigenvectors, then use DeepSet~\citep{DeepSet} to process the eigenvalues, leading to coordinates with multiple channels. The architecture of PST is shown in Figure~\ref{fig:arch}. As illustrated in Section~\ref{sec:pst}, it is composed of scalar-vector mixers and attention layers. 

\section{Ablation}\label{app::abl}
To assess the design choices made in our Point Set Transformer (PST), we conducted ablation experiments. First, we replace the PSRD coordinates (see Section~\ref{sec::psrd}) with SRD coordinates, resulting in a reduced version referred to as the PST-gc model. Additionally, we introduced a variant called PST-onelayer, which is distinct from PST in that it only computes the attention matrix once and does not combine information in scalar and vector features. { 
Furthermore, PST decompose Laplacian matrix by default to produce coordinates. PST-adj uses adjacency matrix instead. Similar to PST, PSDS takes node coordinates as input. However, it use DeepSet~\citep{DeepSet} rather than transformer as the set encoder. For better comparison, we also use our strongest baseline on QM9 dataset, DF~\citep{DRFWL}. }

The results of the ablation study conducted on the QM9 dataset are summarized in Table~\ref{tab::qm9}. Notably, PST-gc exhibits only a slight increase in test loss compared to PST, and even outperforms PST on $4$ out of $12$ target metrics, highlighting the effectiveness of the Graph as Point Set approach with vanilla symmetric rank decomposition. In contrast, PST-onelayer performs significantly worse, underscoring the advantages of PST over previous methods that augment adjacency matrices with spectral features. { PST-adj and PST-normadj achieves similar performance to PST, illustrating that the choice of matrix to decompose does not matter. DeepSet performs worse than PST, but it still outperforms our strongest baseline DF, showing the potential of combining set encoders other than transformer with our convertion from graph to set.} On the long-range graph benchmark, PST maintains a significant performance edge over PST-onelayer. However, it's worth noting that the gap between PST and PST-gc widens, further confirming the effectiveness of gc in modeling long-range interactions.

\begin{table}
\caption{Ablation study on qm9 dataset.}\label{tab:qm9abl}
    \centering
    \setlength{\tabcolsep}{1pt}
    \resizebox{\linewidth}{!}{
    \begin{small}
    \begin{tabular}{ccccccccccccc}
    \toprule
        ~&$\mu$ & $\alpha$ & $\varepsilon_{\text{homo}}$ & $\varepsilon_{\text{lumo}}$ & $\Delta\varepsilon$ & $R^2$ & ZPVE & $U_0$ & $U$ & $H$ & $G$ & $C_v$ \\ 
        \midrule
        Unit & $10^{-1}$D  & $10^{-1}$$a_0^3$  & $10^{-2}$meV  & $10^{-2}$meV  & $10^{-2}$meV  & $a_0^2$  & $10^{-2}$meV  & meV  & meV  & meV  & meV  & $10^{-2}$cal/mol/K \\
        \midrule
        PST & $3.19_{\pm 0.04}$  & $1.89_{\pm 0.04}$  & $5.98_{\pm 0.09}$  & $5.84_{\pm 0.08}$  & $8.46_{\pm 0.07}$  & $13.08_{\pm 0.16}$  & $0.39_{\pm 0.01}$  & $3.46_{\pm 0.17}$  & $3.55_{\pm 0.10}$  & $3.49_{\pm 0.20}$  & $3.55_{\pm 0.17}$  & $7.77_{\pm 0.15}$  \\ 
        PST-onelayer & $3.72_{\pm 0.02}$  & $2.25_{\pm 0.05}$  & $6.62_{\pm 0.11}$  & $6.67_{\pm 0.07}$  & $9.37_{\pm 0.15}$  & $15.95_{\pm 0.29}$  & $0.55_{\pm 0.01}$  & $3.46_{\pm 0.06}$  & $3.50_{\pm 0.14}$  & $3.50_{\pm 0.03}$  & $3.45_{\pm 0.07}$  & $9.62_{\pm 0.24}$  \\ 
         PST-gc & $3.34_{\pm 0.02}$  & $1.93_{\pm 0.03}$  & $6.08_{\pm 0.11}$  & $6.10_{\pm 0.10}$  & $8.65_{\pm 0.10}$  & $13.71_{\pm 0.12}$  & $0.40_{\pm 0.01}$  & $3.38_{\pm 0.13}$  & $3.43_{\pm 0.12}$  & $3.33_{\pm 0.08}$  & $3.29_{\pm 0.11}$  & $8.04_{\pm 0.15}$ \\
         PST-adj & $3.16_{\pm 0.02}$ & $1.86_{\pm 0.01}$ & $6.31_{\pm 0.06}$ & $6.10_{\pm 0.05}$ & $8.84_{\pm 0.01}$ & $13.60_{\pm 0.09}$ & $0.39_{\pm 0.01}$ & $3.59_{\pm 0.12}$ & $3.73_{\pm 0.08}$ & $3.65_{\pm 0.06}$ & $3.60_{\pm 0.016}$ & $7.62_{\pm 0.21}$ \\
         PST-normadj & $3.22_{\pm 0.04}$ & $1.85_{\pm 0.02}$ & $5.97_{\pm 0.23}$ & $6.15_{\pm 0.07}$ & $8.79_{\pm 0.04}$ & $13.42_{\pm 0.15}$ & $0.41_{\pm 0.01}$ & $3.36_{\pm 0.25}$ & $3.41_{\pm 0.24}$ & $3.46_{\pm 0.18}$ & $3.38_{\pm 0.23}$ & $8.10_{\pm 0.12}$\\
         PSDS & $3.53_{\pm 0.05}$ & $2.05_{\pm 0.02}$ & $6.56_{\pm 0.03}$ & $6.31_{\pm 0.05}$ & $9.13_{\pm 0.04}$ & $14.35_{\pm 0.02}$ & $0.41_{\pm 0.02}$ & $3.53_{\pm 0.11}$ & $3.49_{\pm 0.05}$ & $3.47_{\pm 0.04}$ & $3.56_{\pm 0.14}$ & $8.35_{\pm 0.09}$ \\
         DF & $3.46$ & $2.22$ & $6.15$ & $6.12$ & $8.82$ & $15.04$ & $0.46$ & $4.24$ & $4.16$ & $3.95$ & $4.24$ & $9.01$\\
         \bottomrule
    \end{tabular}
    \end{small}
    }
\end{table}

\begin{table}[t]
\centering
\caption{Ablation study on Long Range Graph Benchmark dataset.}\label{tab:LRGBabl}
    \setlength{\tabcolsep}{1pt}
\begin{small}
    \begin{tabular}{lccc}
    \toprule
        Model & PascalVOC-SP & Peptides-Func & Peptides-Struct \\ 
    \midrule
        PST & $0.4010_{\pm0.0072}$ &  $0.6984_{\pm0.0051}$ & $0.2470_{\pm 0.0015}$\\

        PST-onelayer & $0.3229_{\pm0.0051}$ &  $0.6517_{\pm0.0076}$ & $0.2634_{\pm 0.0019}$\\
        PST-gc &$0.4007_{\pm0.0039}$  &  $0.6439_{\pm0.0342}$ & $0.2564_{\pm 0.0120}$\\
        \bottomrule
    \end{tabular}
\end{small}
\end{table}

\section{Scalability}\label{app::scalability}
We present training time per epoch and GPU memory consumption data in Table~\ref{tab:scalability} and Table~\ref{tab:scalability_pascalvoc}. Due to architecture, PST has higher time complexity than existing Graph Transformers and does not scale well on large graphs like pascalvoc-sp dataset. However, on the ZINC dataset, PST ranks as the second fastest model, and its memory consumption is comparable to existing models with strong expressivity, such as SUN and SSWL, and notably lower than PPGN.  

\begin{table}[t]
\caption{Training time per epoch and GPU memory consumption on zinc dataset with batch size 128.}\label{tab:scalability}
    \centering
    \begin{tabular}{cccccccc}
    \toprule
        ~ & PST&SUN & SSWL & PPGN & Graphormer & GPS & SAN-GPS \\ 
    \midrule
        Time/s & 15.20& 20.93  & 45.30  & 20.21    & 123.79  & 11.70  & 79.08  \\ 
        Memory/GB& 4.08 & 3.72  & 3.89  & 20.37    & 0.07  & 0.25  & 2.00 \\ 
    \bottomrule
    \end{tabular}
\end{table}

\begin{table}[t]
\caption{Training time per epoch and GPU memory consumption on pascalvoc-sp dataset with batch size 6.}\label{tab:scalability_pascalvoc}
    \centering
    \begin{tabular}{cccccccc}
    \toprule
        ~ & PST&SUN & SSWL & PPGN & Graphormer & GPS & SAN-GPS \\ 
    \midrule
        Time/s & 15.20& 20.93  & 45.30  & 20.21    & 123.79  & 11.70  & 79.08  \\ 
        Memory/GB& 4.08 & 3.72  & 3.89  & 20.37    & 0.07  & 0.25  & 2.00 \\ 
    \bottomrule
    \end{tabular}
\end{table}

\section{Results on TU datasets}\label{app::exp_TU}
Following the setting of ~\citet{KPGNN}, we test our PST on four TU datasets~\citep{TU}. The results are shown in Table~\ref{tab:tu}. Baselines include WL subtree kernel~\citep{WLsubtreeKernel}, GIN~\citep{GIN}, DGCNN~\citep{DGCNN}, GraphSNN~\citep{GraphSNN}, GNN-AK+~\citep{GNNAK}, and three variants of KP-GNN~\citep{KPGNN} (KP-GCN, KP-GraphSAGE, and KP-GIN). We use 10-fold cross-validation, where 9 folds are for training and 1 fold is for testing. The average test accuracy is reported. Our PST consistently outperforms our baselines.
\begin{table*}[th]
\vspace{-5pt}
\begin{center}

\caption{TU dataset evaluation result.}
\label{tab:tu}
  \begin{tabular}{lcccc}
   \toprule
    Method & MUTAG & PTC-MR & PROTEINS & IMDB-B\\
    \midrule  
    \textbf{WL} & 90.4{\small$\pm$5.7}& 59.9{\small$\pm$4.3} &75.0{\small$\pm$3.1} & 73.8{\small$\pm$3.9}  \\
    \midrule
    \textbf{GIN} & 89.4{\small$\pm$5.6}  & 64.6{\small$\pm$7.0} & 75.9{\small$\pm$2.8} & 75.1{\small$\pm$5.1}\\
    \textbf{DGCNN} & 85.8{\small$\pm$1.7}  & 58.6 {\small$\pm$2.5} & 75.5{\small$\pm$0.9} & 70.0{\small$\pm$0.9} \\
    \midrule
    \textbf{GraphSNN} & 91.24{\small$\pm$2.5}  & 66.96{\small$\pm$3.5} & 76.51{\small$\pm$2.5} & 76.93{\small$\pm$3.3}\\
    \textbf{GIN-AK+} & 91.30{\small$\pm$7.0}  & 68.20{\small$\pm$5.6} & 77.10{\small$\pm$5.7} & 75.60{\small$\pm$3.7}\\
    \midrule
    \textbf{KP-GCN} & 91.7{\small$\pm$6.0}  & 67.1{\small$\pm$6.3} & 75.8{\small$\pm$3.5} & 75.9{\small$\pm$3.8}\\
    \textbf{KP-GraphSAGE} & 91.7{\small$\pm$6.5}  & 66.5{\small$\pm$4.0} & 76.5{\small$\pm$4.6} & 76.4{\small$\pm$2.7} \\
    \textbf{KP-GIN} & 92.2{\small$\pm$6.5} & 66.8{\small$\pm$6.8} & 75.8{\small$\pm$4.6} &  76.6{\small$\pm$4.2}\\
    \midrule
  \textbf{PST} & \textbf{94.4{\small$\pm$3.5}}  & \textbf{68.8{\small$\pm$4.6}} & \textbf{80.7{\small$\pm$3.5}} &  \textbf{78.9{\small$\pm$3.6}} \\
  \bottomrule
\end{tabular}

\end{center}
\vspace{-10pt}
\end{table*}

\section{Point Set DeepSet}\label{app::PSDS}
Besides Transformer, we also propose a DeepSet~\citep{DeepSet}-Based set encoder, Point Set DeepSet (PSDS), for point set to illustrate the versatility of our graph-to-set method. Similar to PST, PSDS also operates with points carrying two types of representations: scalars, which remain invariant to coordinate orthogonal transformations, and vectors, which adapt equivariantly to coordinate changes. For a point $i$, its scalar representation is denoted by $s_i\in \sR^{d}$, and its vector representation is denoted by $v_i\in \sR^{r\times d}$, where $d$ is the hidden dimension, and $r$ is the rank of coordinates. $s_i$ is initialized with the input node feature $X_i$, and $v_i$ is initialized with the parameterized coordinates containing graph structural information, as detailed in Section~\ref{sec::psrd}. Similar to DeepSet, PSDS transforms point representations individually, aggregates them to produce global feature, and combine global features and individual point representations to update point representations.

\textbf{Scalar-Vector Mixer.}~This component individually transforms point representations. To enable the information exchange between vector and scalar features, we design a mixer architecture as follows.
\begin{align}
s_i' &\leftarrow \text{MLP}_1(s_i \Vert \text{diagonal}(W_1v_i^Tv_iW_2^T)), \\
v_i' &\leftarrow v_i \text{diag}(\text{MLP}_2(s_i))W_3 + v_iW_4
\end{align}
Here, $W_1, W_2, W_3,$ and $W_4 \in \mathbb{R}^{d\times d}$ are learnable matrices for mixing different channels of vector features. Additionally, $\text{MLP}_1:\mathbb{R}^{2d\to d}$ and $\text{MLP}_2:\mathbb{R}^{d\to d}$ represent two multi-layer perceptrons transforming scalar representations. The operation $\text{diagonal}(W_1v_i^Tv_iW_2)$ takes the diagonal elements of a matrix, which translates vectors to scalars, while $\text{diag}(\text{MLP}_2(s_i))v_i$ transforms scalar features into vectors. As $v_i^TR^TR v_i = v_i^T v_i, \forall R \in O(r)$, the scalar update is invariant to orthogonal transformations of the coordinates. Similarly, the vector update is equivariant to $O(r)$.

\textbf{Aggregator.}~This component aggregates individual point representations for global features $s, v, vsq$. 
\begin{small}
\begin{align}
s &\leftarrow \sum_{i\in V}\text{MLP}_3(s_i)\\
v &\leftarrow \sum_{i\in V}v_i W_5\\
v_{sq}&\leftarrow \sum_{i\in V}v_iW_6W_7v_i^T
\end{align}
\end{small}

Here, $W_5, W_6$ and $W_7\in \sR^{d\times d}$ denote the linear transformations vectors. $\text{MLP}_3: \sR^d\to \sR^d$ is an MLP converting scalars. Global feature $s\in \sR^d$ is scalar, $v\sR^{r\times d}$ is vector, and $v_{sq}\in \sR^{r\times r}$ is the sum of square for each vector. 

\textbf{Point Representation Update.}~Each point representation is updated by combining global features.
\begin{small}
\begin{align}
s_i &\leftarrow \text{MLP}_4(s_i+s)\\
v_i &\leftarrow v_{sq}v_i + vW_8
\end{align}
\end{small}
$s_i$ is combined with global scalar $s$ and transformed with an MLP $\text{MLP}_4: \sR^d\to \sR^d$. $v_i$ is combined with $v_{sq}$ and $v$ with linear layer $W_8\in \sR^{d\times d}$.

\textbf{Pooling.}~After several layers, we pool all points' scalar representations as the set representation $s$.
\begin{equation}
    s\leftarrow\text{Pool}(\{s_i|i\in V\}),
\end{equation}
where $\text{Pool}$ is pooling function like sum, mean, and max.

\end{document}